\pdfoutput=1  
\UseRawInputEncoding

\documentclass{article} 
\usepackage{iclr2025_conference,times}


\usepackage{amsmath,amsfonts,bm}









\def\eqref#1{equation~\ref{#1}}









\def\1{\bm{1}}










\DeclareMathAlphabet{\mathsfit}{\encodingdefault}{\sfdefault}{m}{sl}
\SetMathAlphabet{\mathsfit}{bold}{\encodingdefault}{\sfdefault}{bx}{n}












\newcommand{\normltwo}{L^2}


\DeclareMathOperator*{\argmin}{arg\,min}

\usepackage[utf8]{inputenc} 
\usepackage[T1]{fontenc}    
\usepackage{url}            
\usepackage{booktabs}       
\usepackage{amsfonts}       
\usepackage{nicefrac}       
\usepackage{microtype}      
\usepackage{xcolor}         
\usepackage{colortbl}
\usepackage[ruled,linesnumbered]{algorithm2e}
\usepackage{subfigure}
\usepackage{enumitem}
\usepackage{pifont}   
\usepackage{mdframed}
\usepackage[normalem]{ulem}
\usepackage[most]{tcolorbox}   
\usepackage{tikz}
\usetikzlibrary{backgrounds}
\usetikzlibrary{arrows,shapes}
\usetikzlibrary{tikzmark}
\usetikzlibrary{calc}
\usepackage{fontawesome}
\usepackage{amsmath}
\usepackage{amssymb}
\usepackage{mathtools}
\usepackage{amsthm}
\usepackage{diagbox}
\usepackage{multirow} 
\usepackage{bm}
\usepackage{float}
\usepackage[pdfstartview=FitH,backref=page]{hyperref}
\usepackage[capitalize,nameinlink]{cleveref}[0.19]


\newtheorem{theorem}{Theorem}
\newtheorem{lemma}[theorem]{Lemma}

\newtheorem{corollary}[theorem]{Corollary}
\newtheorem{definition}{Definition}

\newtheorem{assumption}{Assumption}

\crefname{definition}{Definition}{Definitions}
\crefname{assumption}{Assumption}{Assumptions}
\crefname{theorem}{Theorem}{Theorems}
\crefname{remark}{Remark}{Remarks}
\crefname{lemma}{Lemma}{Lemmas}
\crefname{corollary}{Corollary}{Corollaries}
\crefname{proposition}{Proposition}{Propositions}
\crefname{section}{Section}{Sections}
\crefname{subsection}{Subsection}{Subsections}
\crefname{example}{Example}{Examples}
\crefname{table}{Table}{Tables}
\crefname{problem}{Problem}{Problems}
\crefname{algorithm}{Algorithm}{Algorithms}
\crefname{figure}{Figure}{Figures}
\crefname{property}{Property}{Properties}
\crefformat{equation}{#2Equation~(#1)#3}
\crefformat{inequality}{#2Inequality~{}(#1){}#3}
\crefrangeformat{inequality}{#3Inequality~(#1)#4--#5(#2)#6}
\Crefformat{section}{#2Section~#1#3}
\Crefformat{page}{#2Page~#1#3}

\definecolor{darkgrey}{rgb}{0.53,0.53,0.53}
\definecolor{mygrey}{rgb}{0.85,0.85,0.85}
\definecolor{mydarkblue}{rgb}{0,0.08,0.45}
\definecolor{darkdarkblue}{rgb}{0.0,0.0,0.3}
\definecolor{darkblue}{rgb}{0.0,0.0,0.7}
\definecolor{darkred}{rgb}{0.4,0,0.3}
\definecolor{lightblue}{HTML}{F9FEFE}
\definecolor{fancyblue}{HTML}{4771E3}
\hypersetup{ %
pdftitle={},
pdfauthor={},
pdfsubject={},
pdfkeywords={},
pdfborder=0 0 0,
pdfpagemode=UseNone,
colorlinks=true,
linkcolor=darkred,
citecolor=darkblue,
filecolor=darkblue,
urlcolor=darkblue}

\newif\iffoo



\colorlet{LightBlue}{blue!39!white} 
\colorlet{DarkBlue}{blue!70!black} 
\colorlet{VeryLightBlue}{blue!30!white} 
\colorlet{LightRed}{red!35!white}





\title{Hessian-Free Online Certified Unlearning}


\author{
Xinbao Qiao\textsuperscript{1} \quad 
Meng Zhang\textsuperscript{1,*}  \quad 
Ming Tang\textsuperscript{2} \quad
Ermin Wei\textsuperscript{3} \\
\textsuperscript{1}Zhejiang University \quad 
\textsuperscript{2}Southern University of Science and Technology \quad
\textsuperscript{3}Northwestern University 
}


%

\setcounter{tocdepth}{0} 
\makeatletter
\pretocmd{\tableofcontents}{\hypersetup{linkcolor=black}}{}{}
\apptocmd{\tableofcontents}{\hypersetup{linkcolor=darkred}}{}{}
\makeatother

\iclrfinalcopy 
\begin{document}

\maketitle

\begin{abstract}
Machine unlearning strives to uphold the data owners' right to be forgotten by enabling models to selectively forget specific data. 
Recent advances suggest pre-computing and storing statistics extracted from second-order information and implementing unlearning through Newton-style updates.
However, the Hessian matrix operations are extremely costly and previous works conduct unlearning for empirical risk minimizer with the convexity assumption, precluding their applicability to high-dimensional over-parameterized models and the nonconvergence condition.
In this paper, we propose an efficient Hessian-free unlearning approach. 
The key idea is to maintain a statistical vector for each training data, computed through affine stochastic recursion of the difference between the retrained and learned models. 
We prove that our proposed method outperforms the state-of-the-art methods in terms of the unlearning and generalization guarantees, the deletion capacity, and the time/storage complexity, under the same regularity conditions.
Through the strategy of recollecting statistics for removing data, we develop an online unlearning algorithm that achieves near-instantaneous data removal, as it requires only vector addition.
Experiments demonstrate that our proposed scheme surpasses existing results by orders of magnitude in terms of time/storage costs with millisecond-level unlearning execution, while also enhancing test accuracy.
\end{abstract}
\renewcommand{\thefootnote}{}
\footnotemark
\footnotetext{${}^*$Corresponding Author}
\renewcommand{\thefootnote}{\arabic{footnote}}
\setcounter{footnote}{0} 

\vspace{-0.6cm}

\section{Introduction}
Recent data protection regulations, e.g. General Data Protection Regulation, aim to safeguard individual privacy.
These regulations emphasize the provisions of the \textit{Right to be Forgotten}, which grants individuals the ability to request the removal of personal data.
 This regulation mandates that the
 data controller shall erase personal data without undue delay when data providers request to delete it.
 Within the context of the right to be forgotten, data owners can request the removal of their personal
 data’s contribution from trained models. This can be achieved through the concept of \textit{machine unlearning}, which involves erasing data that pertains to the data owner’s personal information.

A straightforward unlearning algorithm is to retrain a learned model from scratch, which is however impractical as it incurs unaffordable costs. 
It is especially challenging in time-sensitive applications such as fraud detection or online learning, as delayed updates upon receiving a deletion request would impede the system's ability to respond effectively to subsequent inputs, compromising its performance and reliability. 
Moreover, attaining a model entirely identical to the one achieved through retraining for an unlearning method represents a notably demanding criterion. Therefore, many approximate methods have been proposed aiming to minimize the impact of forgetting data. Inspired by differential privacy, \cite{DBLP:conf/icml/GuoGHM20} introduced a relaxed definition of certified unlearning, which aims to make the output distribution of unlearned algorithm indistinguishable from that of the retraining.

Second-order unlearning methods have been recently studied due to their rigorous unlearning certification and the generalization guarantee. \cite{DBLP:conf/nips/SekhariAKS21} 
initially proposed the Newton update method and introduced the definition of deletion capacity to understand the relationship between generalization performance and the amount of deleted samples. These method and concepts were later extended, e.g. \cite{DBLP:conf/nips/SuriyakumarW22, liu2023certified,DBLP:journals/corr/abs-2106-15093, chien2022efficient}. The key idea of these works is to extract Hessian information of the loss function and achieve computational and memory efficiency by pre-computing and storing these second-order statistics prior to any deletion request. 
In particular,  \cite{DBLP:conf/nips/SekhariAKS21} and  \cite{DBLP:conf/nips/SuriyakumarW22} revealed that for models that approximate the minimization of convex and sufficiently smooth empirical minimization objectives, unlearning can be efficiently executed through a Newton style step, facilitated by pre-storing Hessian matrix. These methods and theoretical analyses are subsequently extended to graph learning (\cite{chien2022efficient}) and minimax learning (\cite{liu2023certified}) scenarios.

While these works have demonstrated the effectiveness of the second-order method to approximate retrained models with limited updates, they do have a few fundamental challenges: existing second-order algorithms (i) {lack the capability to handle multiple deletion requests in online because they} require explicit precomputing Hessian (or its inverse), and (ii) the learned model needs to be empirical risk minimizer, (iii) with assumption of convexity to ensure the invertibility of Hessian. 
Consequently, these limitations preclude their applicability to high-dimensional models (e.g., over-parameterized neural networks) for computing the Hessian. For non-convex problems (e.g., deep networks), we cannot guarantee the invertibility of the Hessian, and it is non-trivial for a learned model to converge to an empirical risk minimizer. Even for convex problems, the non-stationarity of data (\cite{DBLP:journals/ftopt/Hazan16}) is another common issue in the incremental model update scenarios (e.g., continual learning) where training is sequential but not concurrent, making it challenging for the model to adapt and converge to the optimal solution. 
Consequently, we are motivated to answer the following key question:
\begin{tcolorbox}[before skip=2mm, after skip=2mm, boxsep=0.0cm, middle=0.0cm, top=0.1cm, bottom=0.1cm]
\textit{\textbf{(Q)}}  \textit{How can efficient second-order unlearning be achieved without incurring the high cost of Hessian operations, and how can effective certified unlearning be implemented under non-convex assumptions for the objectives and non-convergence conditions for the models?}
\end{tcolorbox}

In this paper, we take the first step towards tackling \textit{\textbf{(Q)}} from an innovative Hessian-free perspective. In contrast to existing schemes that extract second-order information exclusively from the optimal learned model, we leverage details from learning models during training procedure. We analyze the difference between the learning model and the retraining model by recollecting the model update trajectory discrepancy. Subsequently, we demonstrate that the discrepancies can be characterized through an affine stochastic recursion. 
We summarize our main contributions below and in \cref{Table 1: contribution}.
\vspace{-0.15cm}

\begin{itemize}[leftmargin=0.3343cm, itemindent=0cm]
\item[$\bullet$] \textbf{Hessian-Free Model Update Method.} We propose a method to extract the Hessian information by analyzing each sample's impact on the training trajectory without explicitly computing the Hessian matrix. {Our Hessian-free method enables efficient handling of deletion requests in an online manner.}
Moreover, the proposed method does not require that the learned model is empirical risk minimizer, which enjoys better scalability compared to existing second-order methods.
(\cref{Section 3}).

\item[$\bullet$] \textbf{Improved Guarantees and Complexity.}
Under identical regularity conditions, we analytically show that our proposed method outperforms the state-of-the-art theoretical studies in terms of unlearning guarantee (See \cref{corollary: unlearning guarantee with same assumption}), generalization guarantee (See \cref{Theorem: generalization error}), deletion capacity (See \cref{Theorem: Upper Bound of Deletion Capacity}), and computation/storage complexity (See \cref{Subsecition: Comparsion}).
Moreover, the unlearning guarantee can be applied to both convex and non-convex settings. (\cref{Section 4}).

 \item[$\bullet$] \textbf{Instantaneous Unlearning Algorithm.}
 Building upon the aforementioned analysis, we develop a Hessian-free certified unlearning algorithm. To our knowledge, our algorithm is \textit{near-instantaneous} and is currently the most efficient as it requires only a vector addition operation. (\cref{Section 4}).
 
\item[$\bullet$] \textbf{Extensive Evaluation.} 
We conduct experimental evaluations using a wider range of metrics compared to previous theoretical second-order studies, and release our open source code.\footnote{Our code is available at \href{https://github.com/XinbaoQiao/Hessian-Free-Certified-Unlearning}{\faGithub~Hessian Free Certified Unlearning}}
The experimental results verify our theoretical analysis and demonstrate that our proposed approach surpasses previous certified unlearning works. In particular, our algorithm incurs millisecond-level unlearning runtime to forget per sample with minimal performance degradation. (\cref{Section 5,Section 6}).
\end{itemize}

\begin{table*}[t]
\centering
\setlength\tabcolsep{5.5pt}
\caption{\textbf{Summary of Results.} Here, $d$, $n$, and $m$ denote the model parameters size, training dataset size, and forgetting dataset size, respectively. The unlearning guarantee is derived under the same regularity conditions as prior second-order unlearning studies, with $\rho$ being a constant less than 1.
}
  \label{Table 1: contribution}
 \resizebox{\linewidth}{!}{ \begin{tabular}{ccccc}
  \toprule
\centering 
  Methods     &  Computation time
  &Storage
& \textbf{Unlearning time} &  Unlearning guarantee \\
    \midrule
   \cite{DBLP:conf/nips/SekhariAKS21}& 
$\mathcal{O}(nd^2)$ &$\mathcal{O}(d^2)$
&$\mathcal{O}(d^3+md^2+md)$ &  $\mathcal{O}(m^2/n^2)$\\
    \cite{DBLP:conf/nips/SuriyakumarW22} & 
$\mathcal{O}(d^3+nd^2)$ &$\mathcal{O}(d^2)$
&$\mathcal{O}(d^2+md)$  &   $\mathcal{O}(m^2/n^2)$ \\ 
    
   Proposed method& $\mathcal{O}(n^2d)$ &$\mathcal{O}(nd)$& ${\mathcal{O}(md)}$ &  $\mathcal{O}(m \rho^n)$ \\
    \bottomrule
  \end{tabular}
  }
  \vspace{-0.3cm}
\end{table*}

\subsection{Related Work}
Machine unlearning can be traced back to \cite{DBLP:conf/sp/CaoY15}, which defines "the completeness of forgetting", demanding the models obtained through retraining and unlearning exhibit consistency. \textbf{Exact Unlearning}, e.g., \cite{DBLP:conf/sp/BourtouleCCJTZL21, DBLP:conf/icml/BrophyL21, DBLP:conf/sigmod/SchelterGD21, DBLP:conf/ijcai/YanLG0L022,DBLP:conf/ccs/Chen000H022, DBLP:conf/www/Chen0ZD22}, typically train the submodels based on different dataset partitions and aggregation strategies (See elaborated exact unlearning background in \cref{Appendix: Related Work}).  
\textbf{Approximate Unlearning}, e.g. \cite{DBLP:conf/icml/WuDD20,DBLP:conf/nips/NguyenLJ20,DBLP:conf/cvpr/GolatkarAS20,DBLP:journals/corr/abs-2002-10077,DBLP:conf/cvpr/MehtaPSR22,DBLP:conf/aaai/WuHS22,DBLP:conf/nips/TannoPNL22,DBLP:conf/icbk/BeckerL22,DBLP:conf/iclr/Jagielski0TILCW23,DBLP:conf/ndss/WarneckePWR23,DBLP:conf/icml/TarunCMK23}, seeks to minimize the impact of forgetting data to an acceptable level to tradeoff for computational efficiency, reduced storage costs, and flexibility (See elaborated approximate unlearning background in \cref{Appendix: Related Work}).  In particular, \cite{DBLP:conf/icml/GuoGHM20} introduces a relaxed definition of $(\epsilon,\delta)$-unlearning, in anticipation of ensuring the output distribution of an unlearning algorithm is indistinguishable from that of the retraining.
Building upon this framework, diverse methodologies, e.g. 
\cite{DBLP:conf/nips/SekhariAKS21,DBLP:conf/nips/SuriyakumarW22,DBLP:conf/nips/GuptaJNRSW21,DBLP:conf/alt/Neel0S21, liu2023certified,chien2022efficient}, have been devised. 

In the certified unlearning definition, the Hessian-based approaches (\cite{DBLP:conf/nips/SekhariAKS21,DBLP:conf/nips/SuriyakumarW22,liu2023certified}) are aligned with our methodology, which involves pre-computing data statistics that extract second-order information from the Hessian and using these precomputed "recollections" to facilitate data forgetting. Specifically, (1) \textbf{Newton Step (\textit{NS}).} \cite{DBLP:conf/nips/SekhariAKS21} proposed, through the precomputation and storage of the Hessian offline, to achieve forgetting upon the arrival of a forgetting request using a Newton step. However, \textit{NS} is limited by its high computational complexity of $\mathcal{O}(d^3)$, making it impractical for large-scale deep models. Furthermore, it can only perform certified unlearning for empirical minimizers under convexity assumption.
(2) \textbf{Infinitesimal Jackknife (\textit{IJ}).} \cite{DBLP:conf/nips/SuriyakumarW22} extended the work of \textit{NS}, attaining strong convexity through a non-smooth regularizer by leveraging the proximal \textit{IJ} with reduced unlearning time $\mathcal{O}(d^2)$. However, a few important challenges still remain, i.e., existing second-order methods still require expensive Hessian computations and perform forgetting for empirical risk minimizer with the convexity assumption. In contrast, our method overcomes the limitations of existing second-order methods by achieving near-instantaneous unlearning and performing forgetting for models in any training step. Hence, our approach is applicable to scenarios where models are continuously updated.


\section{Problem Formulation}

\textbf{{Learning}}.
Let $\ell(\mathbf{w};\! z)$ be loss function for a given parameter $\mathbf{w}$ over parameter space $ \mathcal{W}$ and sample $z$ over instance space $ \mathcal{Z}$. 
Learning can be expressed as a population risk minimization problem:
\begin{equation}
    \label{population risk}
  \min_{\mathbf{w}} \quad F(\mathbf{w}):=\mathbb{E}_{z \sim \mathcal{D}}[\ell(\mathbf{w}; z)].
\end{equation}
Addressing this problem directly is challenging, as the probability distribution $\mathcal{D}$ is usually unknown. 
For a finite dataset $\mathcal{S}= \{z_i\}_{i=1}^n$, one often solves the following empirical risk minimization problem:
\begin{equation} 
  \label{2-1}
  \min_{\mathbf{w}}\quad F_{\mathcal{S}}(\mathbf{w}):=\frac{1}{n}\sum_{i=1}^n  \ell(\mathbf{w}; z_i).
\end{equation}
A standard approach to solve the problem (\ref{2-1}) is the stochastic gradient descent (SGD), iterating towards the direction with stepsize $\eta_{e,b}$ of the negative gradient (stochastically):
\begin{equation}
\label{2-2}
\mathbf{w}_{e, b+1} \leftarrow \mathbf{w}_{e, b}-\frac{\eta_{e,b}}{|\mathcal{B}_{e,b}|} \sum_{i \in \mathcal{B}_{e,b}} \nabla \ell \left(\mathbf{w}_{e, b};z_i\right),
\end{equation}
for the index of epoch update $e \in \{0, ..., E\!+\!1\}$ and batch update $b \in \{0, ..., B\!+\!1\}$, where $E$ is total epochs and $B$ is batches per epoch. Here, $\mathcal{B}_{e,b}$ denotes the set that contains the data sampled in the $e$-th epoch's $b$-th update, and $|\mathcal{B}_{e,b}|$ is the size of $\mathcal{B}_{e,b}$, where $|\mathcal{B}|$ is the maximum batchsize.

\textbf{{Unlearning.}} Consider a scenario where a user requests to remove a forgetting dataset $U\!\!=\!{\{u_j\}}_{j=1}^m \! \subseteq \! \mathcal{S}$. To unlearn $U$, 
retraining from scratch is considered a naive unlearning method, which aims to minimize $F_{\mathcal{S} \backslash U}(\mathbf{w})$.
Specifically, for ease of understanding, each sample appears only in one (mini-)batch per epoch, we define the data point $u_j$ in $U$ to be sampled in \(\mathcal{B}_{e,b(u_j)}\) during the $e$-th epoch's $b(u_j)$-th update in the learning process, where $b(u_j)$ represents the batch update index when $u_j$ is sampled. In accordance with the linear scaling rule discussed in \cite{DBLP:journals/corr/GoyalDGNWKTJH17}, i.e. in order to effectively leverage batch sizes, one has to adjust the step size in tandem with batch size. 
In other words, the step size to batch size ratio during retraining is consistent with the learning process.
Therefore, the update rule of retraining process during $e$-th epoch is given by,
\begin{equation}
\label{2-3}
\mathbf{w}_{e,b+1}^{-U} \leftarrow \mathbf{w}_{e,b}^{-U}-\frac{\eta_{e,b}}{|\mathcal{B}_{e,b}|} \sum_{i \in \mathcal{B}_{e,b} } \nabla \ell (\mathbf{w}_{e,b}^{-U};z_i).
\end{equation}
Specifically, for the training data $u_j$ sampled into $\mathcal{B}_{e,b(u_j)}$ in the learning process, the removal of sample $u_j$ in the retraining process results in the following update:
\begin{equation}
\label{2-4}
\mathbf{w}_{e,b(u_j)+1}^{-U} \leftarrow \mathbf{w}_{e,b(u_j)}^{-U} -\frac{\eta_{e,b(u_j)}}{|\mathcal{B}_{e,b(u_j)}|} \sum_{i \in \mathcal{B}_{e,b(u_j)} \backslash \{u_j\} } \nabla \ell (\mathbf{w}_{e,b(u_j)}^{-U};z_i).
\end{equation}
To avoid the unaffordable costs incurred by the retraining algorithm, we aim to approximate the retrained model through limited updates on the learned model $\mathbf{w}_{e, b+1}$. We thus consider the relaxed definition of model indistinguishability defined by  \cite{DBLP:conf/nips/SekhariAKS21}.

\begin{definition}[$(\epsilon,\delta)$-certified unlearning]
\label{Def 1}
Let $S$ be a training set and $\Omega: \mathcal{Z}^n \rightarrow \mathcal{W}$ be an algorithm that trains on $S$ and outputs a model $\mathbf{w}$, $\mathcal{T}(\mathcal{S})$ represents the additional data statistics that need to be stored (typically not the entire dataset). Given an output of the learning algorithm $\Omega \in \mathcal{W}$ and a set of data deletion requests $U$, we obtain the results of the unlearning algorithm $\bar{\Omega}(\Omega(\mathcal{S}),  \mathcal{T}(\mathcal{S}))\in \mathcal{W}$ and the retraining algorithm $\bar{\Omega}(\Omega(\mathcal{S}\backslash U), \emptyset)$ through a removal mechanism $\bar{\Omega}$. For $0 <\epsilon \leq 1$, $\delta > 0$, and  \(\forall \  \mathcal{W} \subseteq \mathcal{R}^d, S \subseteq \mathcal{Z}\), we say that the removal mechanism $\bar{\Omega}$ satisfies $(\epsilon, \delta) \text {-certified unlearning}$ if 
\begin{equation}
\begin{aligned}
\!\!    \operatorname{P}(\bar{\Omega}(\Omega(\mathcal{S}),  \mathcal{T}(\mathcal{S})) &\leq e^{\epsilon} \operatorname{P}(\bar{\Omega}(\Omega(\mathcal{S}\backslash U), \emptyset)) )+\delta,\ \text{and} \\ \operatorname{P}(\bar{\Omega}(\Omega(\mathcal{S}\backslash U), \emptyset)  &\leq e^{\epsilon}  \operatorname{P}(\bar{\Omega}(\Omega(\mathcal{S}),  \mathcal{T}(\mathcal{S})) +\delta.
\end{aligned}
\end{equation}
\end{definition}

Although classical Gaussian mechanisms (\cite{DBLP:journals/fttcs/DworkR14}) assume $0< \epsilon  \leq 1$, \cite{DBLP:conf/icml/BalleW18} addressed $\epsilon >1 $ by using Gaussian cumulative density function instead of a tail bound approximation. For consistency with prior works, we use classical mechanisms in the main text.

\section{{Main Intuition and Solution Justification}}
\label{Section 3}
Before detailing our analysis, let us informally motivate our main method.
Specifically, we focus on studying the
discrepancy between the learning and the retraining model by analyzing the impact of each sample on the update trajectory. 
Our main observation is that the difference between the retraining and learning models analyzed through SGD recursion can be described by an \textit{affine stochastic recursion.}  Suppose retraining and learning processes share the same initialization, such as using the same random seed or pre-trained model. We now commence the main procedure as follows:

\textbf{Unlearning a single data sample.}
We start with designing an approximator for a single sample (i.e., $U\!=\! \{u\}$) on the training (the learning and retraining) updates, considering the difference between the learning model in \cref{2-2} and the retraining model obtained without the knowledge of $u$ in \cref{2-3}. In particular, we consider the following two cases of model updates in the $e$-th epoch.

\textbf{\textit{Case } 1 ($u$ not in $\mathcal{B}_{e,b}$):}  The difference between retraining and learning updates in the $e$-th epoch is
\begin{equation}
\label{3-1}
\begin{aligned}
    \mathbf{w}_{e,b+1}^{-u} -\mathbf{w}_{e,b+1} = \mathbf{w}_{e,b}^{-u}-\mathbf{w}_{e,b}
-\frac{\eta_{e,b}}{|\mathcal{B}_{e,b}|}\sum_{i \in \mathcal{B}_{e,b}  } \Big(\nabla \ell (\mathbf{w}_{e,b}^{-u};z_i) -\nabla \ell (\mathbf{w}_{e,b};z_i) \Big).
\end{aligned}
\end{equation}
Suppose loss function $\ell \in G^2\left(\mathbb{R}^d\right)$,  i.e. $\ell$ is twice continuously differentiable. Therefore, from the Taylor expansion of  $ \nabla \ell (\mathbf{w}^{-u}_{e,b};z_i)$ around $\mathbf{w}_{e,b}$, we have
$\nabla \ell (\mathbf{w}_{e,b}^{-u};z_i)\approx \nabla \ell (\mathbf{w}_{e,b};z_i)+  \nabla^2 \ell (\mathbf{w}_{e,b};z_i)  (\mathbf{w}_{e,b}^{-u}  -\mathbf{w}_{e,b}  )$.  Let $\mathbf{H}_{e,b} \!\!= \!\! {1}/{|\mathcal{B}_{e,b}|} \sum_{i \in \mathcal{B}_{e,b}  } \!\!\!\! \nabla^2 \ell (\mathbf{w}_{e,b};z_i)$ denote the Hessian matrix of the loss function, we can then approximate the recursion of \cref{3-1} as
\begin{equation}
\label{3-3}
\mathbf{w}_{e,b+1}^{-u} -\mathbf{w}_{e,b+1} 
\approx
(\mathbf{I}- {\eta_{e,b}}\mathbf{H}_{e,b})(\mathbf{w}_{e,b}^{-u}-\mathbf{w}_{e,b}).
\end{equation}
\textbf{\textit{Case} 2 ($u$ in $\mathcal{B}_{e, b(u)}$):} The difference between retraining and learning updates in the $e$-th epoch is:
\begin{equation}
\label{3-3.1}
\begin{aligned}
     \! \mathbf{w}_{e,b(u)+1}^{-u}\! \! -\! \mathbf{w}_{e,b(u)+1}  \! \approx \!  
(\mathbf{I}- {\eta_{e,b}}\mathbf{H}_{e,b})(\mathbf{w}_{e,b(u)}^{-u} \!- \! \mathbf{w}_{e,b(u)}) \!+ \!\! 
 \underbrace{ \frac{\eta_{e,b(u)}}{|\mathcal{B}_{e,b(u)}|} \nabla \ell (\mathbf{w}_{e,b(u)};u)}_{\text{Approximate impact of $u$ in epoch \textit{e}}},
\end{aligned}
\end{equation}
where $\mathbf{I} $ denotes identity matrix of appropriate size. Combining the observations in Cases 1 and 2, along with same initialization, we now define the \textit{recollection} matrix $\mathbf{M}_{e,b(u)}$ and the \textit{approximator} $\mathbf{a}_{E,B}^{-u}$, backtracking the overall difference between the retrained model and learned model:
\begin{equation}
   {  \mathbf{a}_{E,B}^{-u} := \sum_{e=0}^{E}\mathbf{M}_{e,b(u)} \cdot \nabla \ell (\mathbf{w}_{e,b(u)};u )   },\ \text{where}\  { \mathbf{M}_{e,b(u)}:= \frac{\eta_{e,b(u)}}{|\mathcal{B}_{e,b(u)}|} \hat{\mathbf{H}}_{E,B-1 \rightarrow e,b(u)+1 }}.
\end{equation}
{We define $\hat{\mathbf{H}}_{E,B\!-\!1 \rightarrow e,b(u)\!+\!1 }= (\mathbf{I} \!-\! {\eta_{E,B\!-\!1}}\mathbf{H}_{E,B\!-\!1})(\mathbf{I}\!-\! {\eta_{E,B\!-\!2}}\mathbf{H}_{E,B\!-\!2})...(\mathbf{I}\!-\! {\eta_{e,b(u)\!+\!1}}\mathbf{H}_{e,b(u)\!+\!1})$, which represents the product of $\mathbf{I}\!-\! \eta \mathbf{H}$ from $E$-th epoch's $B-1$-th update to $e$-th epoch's $b(u)+1$-th update.}\footnote{Similar analysis has been conducted in other lines of work to investigate the behavior of SGD dynamics, e.g. \cite{DBLP:conf/icml/GurbuzbalabanSZ21} focused on the properties of the matrix $\mathbf{I} - {\eta} \mathbf{H}$, which is termed as multiplicative noise, serving as the main source of heavy-tails in SGD dynamics and governing the update behavior of $\mathbf{w}_{e,b}$.}
This provides an approximation of the impact of sample $u$ on the training trajectory:
\begin{equation}
           \label{ASR}
\mathbf{w}_{E,B}^{-u} - \mathbf{w}_{E,B}  \approx  \mathbf{a}_{E,B}^{-u}
\end{equation}
Colloquially, 
a simple vector $\mathbf{a}_{E,B}^{-u}$ addition to
model $\mathbf{w}_{E,B}$ can approximate retrained model $\bar{\mathbf{w}}_{E,B}^{-u}$ for forgetting the sample $u$.
We define $\Delta_{E,B}^{-u} \!=\! \mathbf{w}_{E,B}^{-u} \!-\!\mathbf{w}_{E,B} \!-\! \mathbf{a}_{E,B}^{-u}$ as the approximation error.





The approximator designed in \cref{ASR} enables us to exploit the Hessian vector product (HVP) technique (\cite{DBLP:journals/neco/Pearlmutter94}) to compute it without explicitly computing Hessian. When computing the product of the Hessian matrix and an arbitrary vector for the neural networks, the HVP technique first obtains gradients during the first backpropagation, multiplying the gradients with the vector, and then performing backpropagation again.
This results in a time complexity of $\mathcal{O}(d)$ (\cite{dagreou2024how}), which has the same order of magnitude as computing a gradient. We defer the detailed calculation process of \cref{ASR} using HVP to \cref{hvp}.
 We note that the existing second-order methods require proactively storing a Hessian (or its inverse) (\cite{DBLP:conf/nips/SekhariAKS21,DBLP:conf/nips/SuriyakumarW22, liu2023certified}). It is thus unable to deploy HVP in their settings.

{Note that our proposed analysis does not require to invert Hessian matrix, distinguishing it from existing Newton-like methods (\cite{DBLP:conf/nips/SekhariAKS21,DBLP:conf/nips/SuriyakumarW22, liu2023certified}). This eliminates the need for assuming convexity to ensure the invertibility of the Hessian matrix. 
Moreover, when it comes to forgetting a set of samples, we have,}

\begin{theorem}[\textbf{Additivity}]
    \label{Theorem: sequential deletion}
    When $m$ sequential deletion requests arrive, the sum of $m$ approximators is equivalent to performing batch deletion simultaneously. Colloquially, for $u_1, \ldots, u_m$ sequence of continuously arriving deletion requests, we demonstrate that $\mathbf{a}_{E,B}^{-\sum_{j=1}^m u_j}=\sum_{j=1}^m \mathbf{a}_{E,B}^{- u_j}$. 
\end{theorem}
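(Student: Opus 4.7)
The plan is to exploit the linearity of the affine stochastic recursion that defines the approximator. Recall that the approximator $\mathbf{a}_{E,B}^{-u}$ is derived by unrolling the recursion obtained from \cref{3-3} (Case 1: pure propagation through $\mathbf{I}-\eta_{e,b}\mathbf{H}_{e,b}$) and \cref{3-3.1} (Case 2: propagation plus injection of $\frac{\eta_{e,b(u)}}{|\mathcal{B}_{e,b(u)}|}\nabla\ell(\mathbf{w}_{e,b(u)};u)$ at epochs where the deleted sample is sampled). Crucially, the propagation operator $\mathbf{I}-\eta_{e,b}\mathbf{H}_{e,b}$ depends only on the learning trajectory $\mathbf{w}_{e,b}$ and not on which subset is being forgotten, because the Taylor expansion is performed around the learning iterates.

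First I would generalize the single-sample derivation to a set $U=\{u_1,\ldots,u_m\}$. At batch update $(e,b)$, let $U_{e,b} := U \cap \mathcal{B}_{e,b}$. Subtracting \cref{2-2} from \cref{2-3} and applying the same first-order Taylor expansion of $\nabla\ell(\mathbf{w}_{e,b}^{-U};z_i)$ around $\mathbf{w}_{e,b}$ used in \cref{Section 3} yields the set-valued recursion
\begin{equation*}
\mathbf{a}_{e,b+1}^{-U} \;=\; (\mathbf{I}-\eta_{e,b}\mathbf{H}_{e,b})\,\mathbf{a}_{e,b}^{-U} \;+\; \frac{\eta_{e,b}}{|\mathcal{B}_{e,b}|}\sum_{u\in U_{e,b}} \nabla\ell(\mathbf{w}_{e,b};u),
\end{equation*}
with $\mathbf{a}_{0,0}^{-U}=\mathbf{0}$ under the shared-initialization assumption. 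Because the injection term is a sum over $u\in U_{e,b}$ of terms that each depend only on the individual $u$, and because the propagation operator is independent of $U$, the recursion is linear in the injections.

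Next I would unroll this linear recursion from $(0,0)$ to $(E,B)$. Under the stated assumption that each sample appears in exactly one batch per epoch, every $u_j\in U$ contributes a single injection $\frac{\eta_{e,b(u_j)}}{|\mathcal{B}_{e,b(u_j)}|}\nabla\ell(\mathbf{w}_{e,b(u_j)};u_j)$ in each epoch $e$, which then propagates through the product $\hat{\mathbf{H}}_{E,B-1\rightarrow e,b(u_j)+1}$. Swapping the order of summation over $e$ and $u_j$, the total can be regrouped as
\begin{equation*}
\mathbf{a}_{E,B}^{-U} \;=\; \sum_{u_j\in U}\sum_{e=0}^{E} \mathbf{M}_{e,b(u_j)}\,\nabla\ell(\mathbf{w}_{e,b(u_j)};u_j) \;=\; \sum_{j=1}^{m} \mathbf{a}_{E,B}^{-u_j},
\end{equation*}
which is precisely the claimed additivity.

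The main obstacle I anticipate is bookkeeping rather than a conceptual difficulty: one must argue carefully that the Hessians $\mathbf{H}_{e,b}$, and hence the propagation operators and the recollection matrices $\mathbf{M}_{e,b(u_j)}$, are evaluated along the common learning trajectory $\{\mathbf{w}_{e,b}\}$ regardless of which subset $U$ is being forgotten. This is what decouples the contributions of different $u_j$ and makes the recursion affine in the injections. The single-sample-per-batch-per-epoch hypothesis is what guarantees that the injection $U_{e,b}$ decomposes cleanly into a sum indexed by $u_j$ with no collision; in the general mini-batch setting (where multiple $u_j$ may share a batch), the same argument still goes through because the injection is already a sum over $U\cap\mathcal{B}_{e,b}$. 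With these points verified, the conclusion is a direct consequence of linearity of the unrolled recursion.
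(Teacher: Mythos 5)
Your proposal is correct and follows essentially the same route as the paper's proof: both rest on the observation that the difference recursion is affine, with a propagation operator $\mathbf{I}-\eta_{e,b}\mathbf{H}_{e,b}$ built from the common learning trajectory and injection terms that decompose sample-by-sample, so that unrolling and exchanging sums gives additivity. The only difference is presentational — the paper writes out the two-sample case $\{u_1,u_2\}$ explicitly and extends recursively, whereas you handle general $m$ (and the shared-batch case) in one pass.
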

See complete analysis and proofs of \cref{Section 3} in  \cref{Proof: Eq. (11)} and \cref{Proof: Theorem 3.1}. This elucidates that by storing the impact of updates for each sample, nearly instantaneous data removal can be achieved through vector additions. Note that it is non-trivial to retrain and store a leave-one-out model for each sample, as such a process cannot accommodate sequential deletion requests. If we were to consider all possible combinations of forgetting data, retraining need to pretrain $\mathcal{O}(2^n)$ models, which yields computation complexity $\mathcal{O}(2^nEBd)$. {The additivity enables our Hessian-free method to handle multiple deletion requests in an online manner. Previous Hessian-based methods, lacking additivity, fail to utilize HVP as the forgetting sample $u$ is unknown before a deletion request arrives.}

\section{Theoretical Analysis and Algorithm Design}
\label{Section 4}
In this section, we theoretically analyze the \ding{182} \textbf{Unlearning Guarantee} (\cref{Theorem 2: approximation error}), \ding{183} \textbf{Generalization Guarantee} (\cref{Theorem: generalization error}), and \ding{184} \textbf{Deletion Capacity Guarantee} (\cref{Theorem: Lower Bound of Deletion Capacity}). Particularly, we demonstrate that under the \cref{Assume}, our method can outperform existing theoretical certified unlearning methods (\cite{DBLP:conf/nips/SekhariAKS21,DBLP:conf/nips/SuriyakumarW22,liu2023certified}) in terms of unlearning guarantee, generalization guarantee, and deletion capacity. In contrast to these works, the analysis of unlearning guarantee is applicable to both convex and non-convex settings, as our proposed method does not involve Hessian inversion.
Based on the foregoing analysis, we further develop an almost instantaneous online unlearning algorithm, which is one of the most efficient, as it requires only a vector addition operation to delete each training data.
Finally, we conduct storage and computation complexity analysis and show that {in most scenarios where over-parameterized models and the ratio of $E$ to $B$ are much smaller than the parameters,} it is more efficient than previous works.
\subsection{Approximation Error Analysis}


To analyze the approximation error $\Delta_{E,B}^{-U}$, we introduce the following lemmata. 
We first show that the geometrically decaying stepsize strategy facilitates us to derive an upper bound for the error term. 
\begin{assumption}
\label{Assume}
    For any $z$ and $\mathbf{w}$, loss $\ell(\mathbf{w}\!;\! z)$ is $\lambda$-strongly convex, $L$-Lipschitz and $M$-smooth {in $\mathbf{w}$}.
\end{assumption}
\begin{lemma}
\label{Lemma 1: Gradient Clipping}
    For any $e,b$ and $z \in \mathcal{B}_{e,b}$, there exists $G$ such that  $G \! =\! \max \Vert \nabla \ell (\mathbf{w}_{e,b};z) \Vert \! <\! \infty$.
    Consider geometrically decaying stepsizes satisfying $\eta_{e,b+1} \!\! = \! q\eta_{e,b}$, where  $0\!<q \!<1$ and $\eta=\eta_0$. During the $e$-th epoch's $b$-th batch update, we have:
    \begin{equation}
      \Vert  \mathbf{w}_{e,b}^{-U}-\mathbf{w}_{e,b} \Vert \leq
     2\eta G  \frac{1-q^{eB+b} }{1-q}.
    \end{equation}
\end{lemma}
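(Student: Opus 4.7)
The plan is to establish the bound by a direct telescoping argument on the iterate discrepancy. Set $\Delta_{e,b}:=\Vert \mathbf{w}_{e,b}^{-U}-\mathbf{w}_{e,b}\Vert$ and linearize the index as $k(e,b):=eB+b$, so that the stepsize schedule becomes $\eta_{k}=q^{k}\eta$. Because the learning and retraining processes share the same initialization (as emphasized at the start of \cref{Section 3}), we have $\Delta_{0,0}=0$, which will serve as the base case of the recursion.

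Next I would derive a one-step growth inequality that is uniform across the two cases distinguished in \cref{Section 3} (whether or not any $u_j\in U$ lies in $\mathcal{B}_{e,b}$). Subtracting \cref{2-2} from \cref{2-3} and applying the triangle inequality gives
\begin{equation*}
\Delta_{e,b+1}\;\le\;\Delta_{e,b}+\Bigl\Vert\tfrac{\eta_{e,b}}{|\mathcal{B}_{e,b}|}\!\!\sum_{i\in\mathcal{B}_{e,b}\setminus U}\!\!\nabla\ell(\mathbf{w}_{e,b}^{-U};z_i)\Bigr\Vert+\Bigl\Vert\tfrac{\eta_{e,b}}{|\mathcal{B}_{e,b}|}\!\!\sum_{i\in\mathcal{B}_{e,b}}\!\!\nabla\ell(\mathbf{w}_{e,b};z_i)\Bigr\Vert.
\end{equation*}
Each averaged stochastic gradient is a convex combination of vectors of norm at most $G$ (by the bounded-gradient hypothesis; note this is consistent with, and implied by, the $L$-Lipschitz part of \cref{Assume}), so each of the two update vectors has norm at most $\eta_{e,b}G$. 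Hence
\begin{equation*}
\Delta_{e,b+1}\;\le\;\Delta_{e,b}+2\eta_{e,b}G,
\end{equation*}
and this bound holds whether or not the batch contains a forgotten sample, because removing samples can only shrink the norm of an average of bounded vectors.

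Finally I would iterate the one-step inequality from $(0,0)$ up to $(e,b)$, handling the epoch-boundary transition $(e,B-1)\to(e{+}1,0)$ by simply concatenating the linearized index $k$, and then sum the geometric series:
\begin{equation*}
\Delta_{e,b}\;\le\;\Delta_{0,0}+2G\sum_{k=0}^{eB+b-1}\eta_{k}\;=\;2G\,\eta\,\frac{1-q^{eB+b}}{1-q}.
\end{equation*}
This is exactly the claimed bound. The main obstacle is conceptual rather than technical: one must confirm that the simple triangle-inequality step is tight enough without invoking the strong convexity or smoothness parts of \cref{Assume}. Those sharper ingredients are not needed here because the geometric decay of $\eta_{e,b}$ already forces the upper envelope to remain bounded by $2\eta G/(1-q)$; strong convexity and $M$-smoothness will instead play their decisive role later, when analyzing the residual $\Delta_{E,B}^{-U}$ via the contractive operators $\mathbf{I}-\eta_{e,b}\mathbf{H}_{e,b}$ that appeared in \cref{3-3}.
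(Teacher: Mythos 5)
Your proposal is correct and follows essentially the same route as the paper's proof: a triangle-inequality one-step bound $\Delta_{e,b+1}\le\Delta_{e,b}+2\eta_{e,b}G$ from the bounded-gradient hypothesis, followed by summing the geometric series of stepsizes from the shared initialization $\Delta_{0,0}=0$. Your handling of the batch containing forgotten samples (removing terms only shrinks the averaged gradient norm) is a slightly more explicit justification of the same uniform bound the paper asserts.
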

See proof in \cref{Proof: Lemma 4.1}. Then, we further analyze the eigenvalue of the Hessian matrix:
\begin{lemma}
    \label{Lemma 2}
     For any $e,b$ and vector $\mathbf{v} \in \mathbb{R}^d$, there exists $\lambda$ and $M$ such that $ \lambda \mathbf{I} \preceq \nabla^2 \ell(\mathbf{w}_{e,b};z_i) \preceq M \mathbf{I}$, and let $\rho=\text{max}\{|1-\eta_{e,b}\lambda |,|1-\eta_{e,b}M| \}$ be the spectral radius of $ \mathbf{I} - {\eta_{e,b}} \mathbf{H}_{e,b}$,  we have $\Vert (\mathbf{I} - {\eta_{e,b}} \mathbf{H}_{e,b}) \mathbf{v}\Vert \leq \rho \Vert \mathbf{v}\Vert.$
\end{lemma}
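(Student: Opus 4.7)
The plan is to obtain the bound directly from standard convexity/smoothness facts plus the spectral theorem. First I would invoke \cref{Assume}: since each per-sample loss $\ell(\cdot;z_i)$ is $\lambda$-strongly convex and $M$-smooth and twice continuously differentiable, its Hessian satisfies $\lambda \mathbf{I} \preceq \nabla^2 \ell(\mathbf{w}_{e,b};z_i) \preceq M \mathbf{I}$ for every $z_i$ in $\mathcal{B}_{e,b}$. This is the first half of the statement and establishes the existence of the claimed $\lambda$ and $M$.

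Next I would lift this bound from the individual Hessians to the mini-batch Hessian $\mathbf{H}_{e,b} = \tfrac{1}{|\mathcal{B}_{e,b}|}\sum_{i\in\mathcal{B}_{e,b}} \nabla^2 \ell(\mathbf{w}_{e,b};z_i)$. Because the Löwner ordering is preserved by nonnegative convex combinations, the same two-sided inequality $\lambda \mathbf{I} \preceq \mathbf{H}_{e,b} \preceq M \mathbf{I}$ holds. Shifting and scaling then gives $(1-\eta_{e,b}M)\mathbf{I} \preceq \mathbf{I} - \eta_{e,b}\mathbf{H}_{e,b} \preceq (1-\eta_{e,b}\lambda)\mathbf{I}$, so every eigenvalue of $\mathbf{I}-\eta_{e,b}\mathbf{H}_{e,b}$ lies in the interval $[1-\eta_{e,b}M,\ 1-\eta_{e,b}\lambda]$.

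Now I would use symmetry: $\mathbf{H}_{e,b}$ is a convex combination of symmetric Hessians and therefore symmetric, so $\mathbf{I}-\eta_{e,b}\mathbf{H}_{e,b}$ is symmetric as well. For real symmetric matrices the spectral theorem gives $\|A\mathbf{v}\| \leq \|A\|_{\mathrm{op}} \|\mathbf{v}\|$ with $\|A\|_{\mathrm{op}}$ equal to the largest absolute eigenvalue. Combining the eigenvalue interval above with this identity,
\begin{equation*}
\|\mathbf{I}-\eta_{e,b}\mathbf{H}_{e,b}\|_{\mathrm{op}} \;=\; \max\bigl\{|1-\eta_{e,b}\lambda|,\, |1-\eta_{e,b}M|\bigr\} \;=\; \rho,
\end{equation*}
which yields the desired inequality $\|(\mathbf{I}-\eta_{e,b}\mathbf{H}_{e,b})\mathbf{v}\| \leq \rho\,\|\mathbf{v}\|$.

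There is no real obstacle here: the argument is essentially three lines of linear algebra built on \cref{Assume}. The only subtlety worth being explicit about is that the symmetry of $\mathbf{H}_{e,b}$ is required to equate the operator (spectral) norm with the spectral radius; without symmetry one would only get a looser bound via $\|A\|_{\mathrm{op}} \leq \sqrt{\rho(A^\top A)}$. Since Hessians of $C^2$ functions are symmetric by Schwarz's theorem, this is automatic. I would also remark that the same $\rho$ bound is used repeatedly in subsequent proofs (e.g., to control the products $\hat{\mathbf{H}}_{E,B-1\to e,b(u)+1}$ that appear inside the recollection matrix $\mathbf{M}_{e,b(u)}$), which motivates isolating this statement as a standalone lemma.
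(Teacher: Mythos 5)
Your argument is correct and is essentially the standard eigenvalue-interval reasoning the paper itself relies on: the paper states \cref{Lemma 2} without a dedicated proof, but the same shift-and-scale bound on the spectrum of $\mathbf{I}-\eta\mathbf{H}$ appears implicitly in its proof of the auxiliary convergence lemma for \cref{Theorem: generalization error}. Your version is actually slightly more careful than the paper's (you state the Löwner ordering in the correct direction, $(1-\eta_{e,b}M)\mathbf{I} \preceq \mathbf{I}-\eta_{e,b}\mathbf{H}_{e,b} \preceq (1-\eta_{e,b}\lambda)\mathbf{I}$, and make explicit the symmetry needed to equate operator norm with spectral radius), so there is nothing to add.
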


Combining \cref{ASR}, \cref{Lemma 1: Gradient Clipping} and \cref{Lemma 2}, we have the approximation error bound:
\begin{theorem}[\textbf{Unlearning Guarantee}]
\label{Theorem 2: approximation error}
   For any subset $U\! = \! \{u_j\}_{j=1}^m $ to be forgotten, with $0 \! \leq \! E \! < \!  \infty$ and $q \! <\! \rho$, the unlearning model $\bar{\mathbf{w}}_{E, B}^{-U} \!=\! \mathbf{w}_{E,B} \!+\! \mathbf{a}_{E,B}^{-U}$ yields an approximation error bounded by
\begin{equation}
\begin{aligned}
   \Vert  \Delta_{E,B}^{-U}   \Vert = \Vert  \mathbf{w}_{E,B}^{-U}   - \bar{\mathbf{w}}_{E, B}^{-U}  \Vert  \leq  \mathcal{O}(\eta G m \rho^{n{E}/{|\mathcal{B}|}}), 
\end{aligned}
\end{equation}
where the total number of batch updates per epoch $B= \lceil n/|\mathcal{B}| \rceil$.
\end{theorem}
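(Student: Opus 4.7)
The plan is to introduce the step-wise approximation error $\delta^{-U}_{e,b} := \mathbf{w}^{-U}_{e,b} - \mathbf{w}_{e,b} - \mathbf{a}^{-U}_{e,b}$, show that it obeys a contractive affine recursion whose driving noise is the Taylor remainder of the gradient expansion used in \cref{3-3}, and then roll this recursion forward over all $EB$ SGD steps by combining the contraction factor of \cref{Lemma 2} with the geometrically decaying stepsize.

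\textbf{Step 1 (recursion).} Using the exact integral form of Taylor's theorem,
\[\nabla\ell(\mathbf{w}^{-U}_{e,b};z_i)-\nabla\ell(\mathbf{w}_{e,b};z_i) = \nabla^{2}\ell(\mathbf{w}_{e,b};z_i)\,(\mathbf{w}^{-U}_{e,b}-\mathbf{w}_{e,b}) + \mathbf{r}^{(i)}_{e,b},\]
I would subtract the learning update \cref{2-2} from the retraining updates \cref{2-3}/\cref{2-4}, and also subtract the recursive form of the approximator implied by \cref{ASR} (Case~1 multiplies by $\mathbf{I}-\eta_{e,b}\mathbf{H}_{e,b}$; Case~2 additionally injects $\tfrac{\eta_{e,b}}{|\mathcal{B}_{e,b}|}\nabla\ell(\mathbf{w}_{e,b};u_j)$ for each $u_j\in\mathcal{B}_{e,b}\cap U$). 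Substituting the Taylor identity yields
\[\delta^{-U}_{e,b+1} = (\mathbf{I}-\eta_{e,b}\mathbf{H}_{e,b})\,\delta^{-U}_{e,b} + \mathbf{R}^{-U}_{e,b},\]
where $\mathbf{R}^{-U}_{e,b}$ collects $\tfrac{\eta_{e,b}}{|\mathcal{B}_{e,b}|}\sum_{i\in\mathcal{B}_{e,b}}\mathbf{r}^{(i)}_{e,b}$ and, at Case~2 steps, the cross term $\tfrac{\eta_{e,b}}{|\mathcal{B}_{e,b}|}[\nabla\ell(\mathbf{w}^{-U}_{e,b};u_j)-\nabla\ell(\mathbf{w}_{e,b};u_j)]$ for each forgotten $u_j$ in the batch.

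\textbf{Step 2 (bound and unroll).} $M$-smoothness from \cref{Assume} bounds each remainder and each cross term by $2M\,\|\mathbf{w}^{-U}_{e,b}-\mathbf{w}_{e,b}\|$. An $m$-sample extension of \cref{Lemma 1: Gradient Clipping}, obtained either by rerunning its recursion with up to $m$ per-epoch gradient injections or by invoking \cref{Theorem: sequential deletion} with the triangle inequality, gives $\|\mathbf{w}^{-U}_{e,b}-\mathbf{w}_{e,b}\|\leq 2\eta G m/(1-q)$, so $\|\mathbf{R}^{-U}_{e,b}\|\leq C\eta_{e,b}\,\eta G m$ for a constant $C=C(M,q)$. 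Iterating from $\delta^{-U}_{0,0}=\mathbf{0}$ (same initialization in learning and retraining) yields
\[\delta^{-U}_{E,B} = \sum_{k=0}^{EB-1}\hat{\mathbf{H}}_{E,B-1\to k+1}\,\mathbf{R}^{-U}_{k},\qquad k\equiv eB+b,\]
and \cref{Lemma 2} contracts each $\hat{\mathbf{H}}$ factor by $\rho^{EB-1-k}$. Plugging in the stepsize decay $\eta_{k}=\eta q^{k}$ leaves the scalar sum $\sum_{k=0}^{EB-1}\rho^{EB-1-k}q^{k}=\rho^{EB-1}\sum_{k=0}^{EB-1}(q/\rho)^{k}\leq\rho^{EB-1}/(1-q/\rho)$, where the hypothesis $q<\rho$ is essential. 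Absorbing constants into $\mathcal{O}(\cdot)$ and using $B=\lceil n/|\mathcal{B}|\rceil\geq n/|\mathcal{B}|$ together with $\rho<1$ collapses the bound to $\|\delta^{-U}_{E,B}\|\leq\mathcal{O}(\eta G m\,\rho^{nE/|\mathcal{B}|})$, which is the claim.

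\textbf{Anticipated obstacle.} The delicate part is Step~2: ensuring the remainder truly scales linearly in $m$. The joint deviation $\mathbf{w}^{-U}_{e,b}-\mathbf{w}_{e,b}$ is not literally the sum of single-deletion deviations, so I must either redo the argument of \cref{Lemma 1: Gradient Clipping} with $m$ simultaneous gradient injections per epoch or transfer per-sample bounds via the additivity of \cref{Theorem: sequential deletion}, in both cases verifying that no hidden $m^{2}$ factor creeps in when a mini-batch happens to contain several $u_j$'s at once. A secondary bookkeeping subtlety is that $\hat{\mathbf{H}}_{E,B-1\to e,b+1}$ must be interpreted consistently when Case~2 injections are interleaved with Case~1 contractions; this is handled by placing the injection before the next multiplicative factor, exactly as in the definition of $\mathbf{M}_{e,b(u)}$.
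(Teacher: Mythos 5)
Your proposal follows essentially the same route as the paper's own proof: an affine error recursion driven by the Taylor remainder plus the gradient-difference cross terms, contracted step by step via \cref{Lemma 2}, with the deviation bound of \cref{Lemma 1: Gradient Clipping} controlling the driving noise and the hypothesis $q<\rho$ used to sum the resulting polynomial-times-geometric series. The only cosmetic difference is that the paper runs the single-sample recursion (splitting the per-epoch error into the two contributions you call remainders and cross terms, there labelled $C_1$ and $C_2$) and obtains the factor $m$ by summing over deleted samples, whereas you carry the joint $m$-sample recursion directly; the linearity-in-$m$ obstacle you flag is real but is handled equivalently in both treatments.
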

See proofs and analysis in  \cref{Proof: Theorem 4.3}. We note that in general, \cref{Theorem 2: approximation error} does not require the objectives to be strongly convex as it does not involve inverting the Hessian. Therefore, our method accommodates most non-convex and non-M-smooth functions.
On the other hand, considering that previous theoretical works require the convexity assumption, for further comparing with these works and better analyzing the following generalization results in \cref{Section: Generalization}, we show that $\normltwo$-regularized convex problems with random regularization coefficients fall into the scope of \cref{Theorem 2: approximation error}.
 
To achieve a tighter bound, we consider the standard \cref{Assume}, which are similar to those in unlearning works (\cite{DBLP:conf/nips/SuriyakumarW22, liu2023certified}). We have the following conclusion,
\begin{corollary}
\label{corollary: unlearning guarantee with same assumption}
Suppose \cref{Assume} is satisfied, and we choose $\eta < {2}/({M+\lambda})$ to ensure gradient descent convergence and $\rho<1$. Then, \cref{Theorem 2: approximation error} yields an approximation error bound by $\mathcal{O}(m \rho^{n})$.
\end{corollary}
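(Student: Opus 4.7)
The plan is to derive the bound $\mathcal{O}(m\rho^n)$ as a direct specialization of Theorem~4.3 once the regularity in Assumption~1 is in force. First, I would verify that the hypotheses of the supporting Lemmas~4.1 and~4.2 hold: the $L$-Lipschitz property immediately supplies the uniform gradient bound $G \le L < \infty$ required by Lemma~4.1; and the combination of $\lambda$-strong convexity and $M$-smoothness gives the two-sided Hessian bound $\lambda \mathbf{I} \preceq \nabla^2 \ell(\mathbf{w};z) \preceq M\mathbf{I}$ needed by Lemma~4.2. These two facts let me invoke Theorem~4.3 directly with the quantities defined there.

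Next I would check that the stepsize choice $\eta < 2/(M+\lambda)$ forces the spectral-radius condition $\rho < 1$. By Lemma~4.2, $\rho = \max\{|1-\eta\lambda|,\,|1-\eta M|\}$. Since $0 < \lambda \le M$, for any $\eta < 2/(M+\lambda)$ one has $\eta\lambda < 2\lambda/(M+\lambda) \le 1$ and $\eta M < 2M/(M+\lambda) < 2$, so both $|1-\eta\lambda|$ and $|1-\eta M|$ lie strictly below $1$, giving $\rho < 1$. This is precisely the classical optimal-stepsize regime for gradient descent on strongly convex smooth objectives, with $\eta^{\star}= 2/(M+\lambda)$ attaining $\rho^{\star} = (M-\lambda)/(M+\lambda)$. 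Substituting into Theorem~4.3 yields $\|\Delta_{E,B}^{-U}\| \le \mathcal{O}(\eta G m\, \rho^{nE/|\mathcal{B}|})$. The final step is cosmetic: $\eta$ and $G$ are problem-independent constants that can be absorbed into the $\mathcal{O}$, and with $E$ and $|\mathcal{B}|$ fixed one has $\rho^{nE/|\mathcal{B}|} = (\rho^{E/|\mathcal{B}|})^{n}$; relabeling the contraction base as $\rho$ (a fresh constant in $(0,1)$, consistent with the convention in Table~1 and the corollary statement) produces the claimed $\mathcal{O}(m\rho^n)$.

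The main subtlety, already handled inside Theorem~4.3 rather than within the corollary, is the interaction between Lemma~4.2's per-iteration contraction and the geometrically decaying stepsizes from Lemma~4.1: as $\eta_{e,b}$ shrinks, the per-step spectral radius $\max\{|1-\eta_{e,b}\lambda|,|1-\eta_{e,b}M|\}$ drifts upward toward $1$, and the condition $q < \rho$ used in Theorem~4.3 is exactly what forces the geometric-stepsize tail to be dominated by the exponential contraction. For the corollary itself no new technical work is required beyond picking up this hypothesis; under Assumption~1 with $\eta < 2/(M+\lambda)$ both $\rho < 1$ and $q < \rho$ can be enforced by the stepsize schedule, so the corollary reduces to plugging known constants into Theorem~4.3.
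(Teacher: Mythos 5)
Your proposal is correct and follows essentially the same route as the paper: the paper's own justification is the closing remark in the proof of Theorem~\ref{Theorem 2: approximation error} (plug in Assumption~\ref{Assume}, take $\eta<2/(M+\lambda)$ so that $\rho=\max\{|1-\eta\lambda|,|1-\eta M|\}<1$, and absorb the constant exponent $E/|\mathcal{B}|$ to read $\rho^{nE/|\mathcal{B}|}$ as $\rho^{n}$), with the factor $m$ supplied by the additivity of the approximators. Your additional checks --- that $L$-Lipschitzness furnishes $G\le L$ for Lemma~\ref{Lemma 1: Gradient Clipping}, that strong convexity and smoothness give the two-sided Hessian bound for Lemma~\ref{Lemma 2}, and the explicit relabeling of the contraction base --- merely make explicit what the paper leaves implicit.
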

 Under same \cref{Assume}, the derived bound surpasses the state-of-the-art $\mathcal{O}({m^2}/{n^2})$ in \cite{DBLP:conf/nips/SuriyakumarW22,liu2023certified}, representing an improvement for unlearning guarantee. Moreover, \cref{corollary: unlearning guarantee with same assumption} implies that the unlearning model can closely approximate the retrained model.

\subsection{Generalization Properties Analysis}
\label{Section: Generalization}
Despite that our proposed method is applicable to both convex and non-convex settings, we temporarily impose \cref{Assume} for the analytical simplicity of generalization guarantee. This assumption will be relaxed for algorithm design, unlearning analyses, and experiments. We further analyze the generalization guarantee and compare the results with previous works. 

\begin{theorem}[\textbf{Generalization Guarantee}]
\label{Theorem: generalization error}
Let \cref{Assume} hold. Choose $\eta \! \leq \! \frac{2}{M\!+\!\lambda}$ such that $\rho \! < \! 1$ for $0 \leq  \! E \! < \infty$. Let ${\mathbf{w}}^*$ be the minimizer of the population risk in \cref{population risk}. Considering noise perturbation to ensure $(\epsilon,\delta)$-unlearning, the unlearned model $\tilde{\mathbf{w}}_{E, B}^{-U}$ satisfies the excess risk bound,
\begin{equation}
\label{Equation：Generalization analysis}
\begin{aligned}
F(\tilde{\mathbf{w}}_{E, B}^{-U} ) -    \mathbb{E}[F\left(\mathbf{w}^*\right) ]    =   \mathcal{O} \left(\frac{4 L^2}{\lambda (n-m)}
+ \rho^{n{E}/{|\mathcal{B}|}} \left(\frac{2L^2}{\lambda}  
       + \frac{2m L \eta G \sqrt{d}
 \sqrt{\ln{(1/\delta)}}}{\epsilon} \right)   \right).
\end{aligned}
\end{equation}
\end{theorem}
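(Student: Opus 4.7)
The plan is to decompose the excess population risk via a chain of triangle inequalities through the intermediate models $\mathbf{w}^*$, $\mathbf{w}^*_{\mathcal{S}\setminus U}$ (the exact ERM on the retained set), $\mathbf{w}_{E,B}^{-U}$ (the exact retrained SGD iterate from \cref{2-3}), $\bar{\mathbf{w}}_{E,B}^{-U}=\mathbf{w}_{E,B}+\mathbf{a}_{E,B}^{-U}$ (the deterministic approximator of \cref{ASR}), and finally $\tilde{\mathbf{w}}_{E,B}^{-U}$ obtained by adding Gaussian noise to $\bar{\mathbf{w}}_{E,B}^{-U}$ to enforce $(\epsilon,\delta)$-unlearning. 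This splits the target into three pieces: (i) the ERM generalization gap $\mathbb{E}[F(\mathbf{w}^*_{\mathcal{S}\setminus U})]-\mathbb{E}[F(\mathbf{w}^*)]$; (ii) the retraining optimization gap $\mathbb{E}[F(\mathbf{w}_{E,B}^{-U})]-\mathbb{E}[F(\mathbf{w}^*_{\mathcal{S}\setminus U})]$; and (iii) the certified-unlearning perturbation $\mathbb{E}[F(\tilde{\mathbf{w}}_{E,B}^{-U})]-\mathbb{E}[F(\mathbf{w}_{E,B}^{-U})]$, each of which will produce exactly one of the three summands in the stated bound.

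For piece (i), I would invoke the classical uniform-stability / ERM excess-risk bound for $\lambda$-strongly convex, $L$-Lipschitz losses on $n-m$ i.i.d.\ samples, which gives $\mathcal{O}(L^2/(\lambda(n-m)))$ and accounts for the $4L^2/(\lambda(n-m))$ term. For piece (ii), I would use the contraction property established in \cref{Lemma 2}: under $\lambda$-strong convexity and $M$-smoothness with stepsize $\eta\le 2/(M+\lambda)$ we have $\rho=\max\{|1-\eta\lambda|,|1-\eta M|\}<1$, so that $\|\mathbf{w}_{E,B}^{-U}-\mathbf{w}^*_{\mathcal{S}\setminus U}\|\le \rho^{nE/|\mathcal{B}|}\|\mathbf{w}_0-\mathbf{w}^*_{\mathcal{S}\setminus U}\|$ after $EB=\lceil nE/|\mathcal{B}|\rceil$ updates. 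Bounding the initial distance by $2L/\lambda$ (diameter bound from $L$-Lipschitzness and $\lambda$-strong convexity) and applying $L$-Lipschitzness of $F$ yields the $\rho^{nE/|\mathcal{B}|}\cdot\frac{2L^2}{\lambda}$ term.

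For piece (iii), \cref{Theorem 2: approximation error} supplies the deterministic sensitivity $\|\bar{\mathbf{w}}_{E,B}^{-U}-\mathbf{w}_{E,B}^{-U}\|=\mathcal{O}(\eta G m\rho^{nE/|\mathcal{B}|})$. Calibrating the Gaussian mechanism at this sensitivity forces a noise standard deviation $\sigma=\mathcal{O}(\eta G m\rho^{nE/|\mathcal{B}|}\sqrt{\ln(1/\delta)}/\epsilon)$, which by \cref{Def 1} secures $(\epsilon,\delta)$-certified unlearning. Since the expected norm of an isotropic $d$-dimensional Gaussian is $\mathcal{O}(\sigma\sqrt{d})$, multiplying by the Lipschitz constant $L$ converts the model perturbation into the risk contribution $\frac{2mL\eta G\sqrt{d}\sqrt{\ln(1/\delta)}}{\epsilon}\,\rho^{nE/|\mathcal{B}|}$, matching the third term. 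Summing the three pieces and absorbing constants inside $\mathcal{O}(\cdot)$ reproduces \cref{Equation：Generalization analysis}.

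The hard part will be piece (ii): the stated exponent is $nE/|\mathcal{B}|$, i.e.\ the total number of mini-batch updates, but for stochastic mini-batch SGD the per-step contraction is not uniformly $\rho$ because of sampling variance. I would handle this by either restricting the optimization argument to the deterministic regime in which \cref{Lemma 2} applies update-by-update (so contraction telescopes to $\rho^{EB}$), or by building a supermartingale of the form $\rho^{-t}\|\mathbf{w}_t-\mathbf{w}^*_{\mathcal{S}\setminus U}\|^2$ and absorbing the residual variance through $M$-smoothness before taking expectations. A secondary subtlety is ensuring that the Gaussian mechanism's sensitivity input in piece (iii) is correctly taken to be the \emph{worst-case} deterministic bound from \cref{Theorem 2: approximation error} rather than an in-expectation quantity; once that is settled the standard $(\epsilon,\delta)$-Gaussian-mechanism calibration finishes the argument.
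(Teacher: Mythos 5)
Your proposal is correct and follows essentially the same route as the paper's proof: the paper decomposes the excess risk through the retained-set empirical risk minimizer (bounded by the classical strongly-convex ERM stability result, giving $4L^2/(\lambda(n-m))$), then splits the remaining distance into a GD-contraction term $\rho^{T}\cdot 2L/\lambda$ toward that minimizer and the approximation-plus-noise term controlled by the unlearning guarantee and the Gaussian mechanism's $\mathcal{O}(\sigma\sqrt{d})$ norm, exactly matching your pieces (i)--(iii). Your flagged concern about the contraction exponent under stochastic mini-batching is also resolved the same way the paper resolves it, namely by stating the contraction lemma for (deterministic) GD and remarking that the SGD case follows analogously.
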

See proof in \cref{Proof: Theorem 4.5}. The result of our proposed method yields the order $\mathcal{O}(d^\frac{1}{2} \rho^{n})$. Under these conditions and the fact that $n=\Omega(m)$, our proposed method outperforms the performance of the state-of-the-art methods (\cite{DBLP:conf/nips/SuriyakumarW22,liu2023certified}), which are $\mathcal{O}({d^{\frac{1}{2}}}/{n^2})$. Moreover, to measure the maximum number of samples to be forgotten while ensuring a good excess risk guarantee, we provide our proposed scheme's \textit{deletion capacity}, following the analysis in \cite{DBLP:conf/nips/SekhariAKS21}. Due to the space limit, we defer the definition and analysis to \cref{Deletion Capacity}.

\subsection{Efficient Unlearning Algorithm Design}

Building on the foregoing techniques and analysis of \cref{Theorem 2: approximation error}, we propose a memory and computationally efficient unlearning scheme in \cref{alg:1}. To the best of our knowledge, our algorithm is one of the most efficient method, as it only requires a vector addition by utilizing pre-computed statistics. This is particularly advantageous in time-sensitive scenarios and non-optimal models.

We present our approach in \cref{alg:1}. In \textbf{Stage \uppercase\expandafter{\romannumeral1}}, we perform conventional model training. Upon completing the model training, we proceed to \textbf{Stage \uppercase\expandafter{\romannumeral2}}, where we compute the unlearning statistics vector $\mathbf{a}_{E,B}^{-u}$ for each sample $u$. To reduce computational complexity, we leverage the HVP technique as described in \cref{hvp}. The computed approximators $\{\mathbf{a}_{E,B}^{-u_j}\}_{j=1}^n$ are then stored for further use.\footnote{Storing additional data-dependent vectors may incur supplementary privacy risks. A straightforward solution is to have the users retain custody of these vectors. Analogous to \cite{DBLP:conf/colt/GhaziK0MSZ23}, upon submitting a deletion request, the user is then required to furnish the corresponding approximators as part of the deletion request.} 
A deletion request initiates \textbf{Stage \uppercase\expandafter{\romannumeral3}}.  In this stage, the indistinguishability of the $(\epsilon,\delta)$-unlearning, as defined in \cref{Def 1}, is achieved through a single vector addition with noise perturbation. Notably, \cref{alg:1} does not necessitate accessing any datasets during Stage \uppercase\expandafter{\romannumeral3}. 


\begin{algorithm}[t]
\caption{Hessian-Free Online Unlearning (\textbf{HF}) Algorithm}
\label{alg:1}
\textbf{Stage \uppercase\expandafter{\romannumeral1}}: \textbf{Learning} model $\mathbf{w}_{E, B}$ on dataset $S= \{z_i\}_{i=1}^n$:\\
\For{ $e= 0, 1 \cdots,  E+1$}{
\For{$b = 0, 1 \cdots, B+1$}{
Gradient descent: $\mathbf{w}_{e, b+1} \leftarrow \mathbf{w}_{e, b}-{\eta_{e,b}} {\frac{1}{|\mathcal{B}_{e,b}|}\sum_{i \in \mathcal{B}_{e,b}} \!\!\! \nabla \ell \left(\mathbf{w}_{e, b};z_i\right) }$
}
}
\textbf{Stage \uppercase\expandafter{\romannumeral2}:} \textbf{Pre-computing and Pre-storing} unlearning statistics $\mathcal{T}(\mathcal{S})=\{\mathbf{a}_{E,B}^{-u_j}\}_{j=1}^n$:\\
\For{ $j = 1,2 \cdots, n $}{
Recursive HVP based on \cref{alg:2}:
$\mathbf{a}_{E,B}^{-u_j} \leftarrow  \sum_{e=0}^{E}\mathbf{M}_{e,b(u)} \cdot \nabla \ell (\mathbf{w}_{e,b(u)};u )$.
}
\textbf{Stage \uppercase\expandafter{\romannumeral3}:} \textbf{Unlearning} when user requests to forget
the subset $U$=$\{u_j\}_{j=1}^m $ on model $\mathbf{w}_{E, B}$:\\
Compute: $\bar{\mathbf{w}}_{E, B}^{-U} \! \leftarrow\! \mathbf{w}_{E, B}\! + \! \mathbf{a}_{E,B}^{-U}$, $\sigma \!\! \leftarrow \!   \frac{\Vert \Delta_{E,B}^{-U} \Vert \! \sqrt{ 2\! \ln \!{(1.25/\delta})}}{\epsilon}\!$,\  
Delete unlearning statistics: $\{ \mathbf{a}_{E,B}^{-u_j}  \}_{\!j=1\!\!\!\!\!\!}^m$\\
Sample: $\mathbf{n} \sim \mathcal{N}(0, \sigma \mathbf{I})$, Return: $\tilde{\mathbf{w}}_{E, B}^{-U}=\bar{\mathbf{w}}_{E, B}^{-U}+\mathbf{n}$
\end{algorithm}



\subsection{Comparison to Existing Approaches}
\label{Subsecition: Comparsion}
We provide a brief overview of the second-order unlearning algorithms and compare the complexity of our method with these works in terms of \textbf{pre-computation}, \textbf{storage}, and \textbf{unlearning time}.
 \\
\textit{Newton Step (NS)}.
 \cite{DBLP:conf/nips/SekhariAKS21} proposed a certified unlearning algorithm by pre-computing Hessian $\sum_{i\!=\!1}^n\!\!\! \nabla^2 \ell (\hat{\mathbf{w}};\! z_i\!)$. When a request arrives to forget $U$=$\{u_j\}_{j=1}^m$, the update for \textit{NS} is as follows:
\begin{equation}
  \bar{\mathbf{w}}^{-U}_{NS} = \hat{\mathbf{w}}  + \frac{1}{n-m} {   \Big( \frac{1}{n-m}\big(\sum_{i=1}^n   \nabla^2 \ell (\hat{\mathbf{w}}; z_i) -\sum_{j \in U}  \nabla^2 \ell (\hat{\mathbf{w}};u_j )\big)\Big) }^{\raisebox{-0.5ex}{\scriptsize -1}}\sum_{j \in U}  \nabla \ell (\hat{\mathbf{w}}; u_j).
\end{equation}
\textit{Infinitesimal Jackknife (IJ). }  \cite{DBLP:conf/nips/SuriyakumarW22} builded upon the work of \textit{NS} and facilitated forgetting by pre-computing and storing the inverse Hessian. The update for \textit{NS} thus is as follows: 
\begin{equation}
  \bar{\mathbf{w}}^{-U}_{IJ} = \hat{\mathbf{w}}  +\frac{1}{n} {   \Big( \frac{1}{n}\sum_{i=1}^n   \nabla^2 \ell (\hat{\mathbf{w}}; z_i) \Big) }^{\raisebox{-0.5ex}{\scriptsize -1}}\sum_{j \in U}   \nabla \ell (\hat{\mathbf{w}}; u_j).
\end{equation}
{ These methods fail to use HVP, for Hessian $\sum_{j \in U} \nabla^2 \ell (\hat{\mathbf{w}}; u _ j)$ in \textit{NS} and gradient $\nabla \ell (\hat{\mathbf{w}}, u_j)$ in both \textit{NS} and \textit{IJ}, (i) forgetting sample $u _ j$ is unknown before deletion request arrives, and (ii) the learned empirical risk minimzer $\hat{\mathbf{w}}$ will be deleted after processing a deletion request, while subsequent unlearned model is also unknown as forgetting sample $u_j$ is unpredictable. Therefore, explicit pre-computing is required for $\sum_{i=1}^n \nabla^2 \ell (\hat{\mathbf{w}}; z_i)$ in \textit{NS} and ${( \sum_{i=1}^n \nabla^2 \ell (\hat{\mathbf{w}}; z_i) )}^{-1}$ in \textit{IJ}.}

\textbf{Comparison of Precomputation.} 
For a single data, \cref{alg:1} entails computing the HVP $\mathcal{O}{({E}n/{|\mathcal{B}|})}$ times. To precompute the Hessian-free (HF) approximators $\mathbf{a}^{-u}_{\text{HF}}$ for all $n$ samples, the complexity is $\mathcal{O}(n^2d{E}/{|\mathcal{B}|})$. Typically, {the ratio of $E$ to $|\mathcal{B}|$ is much smaller than $d$ or $n$} (\cite{smith2018disciplined}), which simplifies the complexity to $\mathcal{O}(n^2d)$.
 HF method is highly efficient when dealing with overparameterized models, especially when $d \! \gg \! n$.\footnote{Deep networks for computation vision tasks are typically trained with a number of model parameters that far exceeds the size of the training dataset (\cite{DBLP:conf/aaai/ChangLOT21}). For general neural language model, the empirical data-to-parameter scaling law is also expected to remain at $n \sim d^{0.74}$, as suggested in \cite{DBLP:journals/corr/abs-2001-08361}.} This is because computing the full Hessian for \textit{NS} requires $\mathcal{O}(nd^2)$ time, and computing and inverting the Hessian $\mathbf{H}^{-1}_{\text{IJ}}$ for \textit{IJ} requires $\mathcal{O}(d^3\!+\!nd^2)$.

\textbf{Comparison of Storage.}  When we compute the approximator, it requires storing $\mathcal{O}{({E}n/{|\mathcal{B}|})}$ models in Stage I, leading to a maximum complexity of $\mathcal{O}(nd{E}/{|\mathcal{B}|} + nd)$, which can simplify to $\mathcal{O}(2nd)$. After complete precomputing, \cref{alg:1} only needs the storage of $\mathcal{O}(nd)$. This is lower than the overhead introduced by storing matrices for \textit{NS} and \textit{IJ} in overparameterized models, which is $\mathcal{O}(d^2)$.  

\textbf{Comparison of Unlearning Computation.} When an unlearning request with $m$ forgetting data arrives, \cref{alg:1} utilizes precomputed and prestored approximators to achieve forgetting, requiring a simple vector addition that takes $\mathcal{O}(md)$ time. 
For \textit{NS}, it requires computing and inverting different Hessian that depends on the user requesting the deletion. This necessitates substantial computational cost of $\mathcal{O}(d^3\!+\!md^2\!+\!md)$ each time a deletion request arrives. Although \textit{IJ} reduces this to $\mathcal{O}(d^2\!+\!md)$ by precomputing the inverse Hessian, it is still not as efficient compared to our method.

\section{Experiments from Developer's Perspective}
\label{Section 5}
We devised the following experimental components: \ding{182} \textbf{Verification} and \ding{183} \textbf{Application.} Verification experiments are centered on evaluating the disparity between the unlearning model $\bar{\mathbf{w}}_{E, B}^{-U}$ and retrained models ${\mathbf{w}}_{E, B}^{-U}$, across both convex and non-convex settings. Application experiments are geared towards evaluating the performance of different unlearning algorithms after perturbing $\tilde{\mathbf{w}}_{E, B}^{-U}$ with noise $\mathbf{n}$ with a specific focus on determining the accuracy, pre-computation/storage, and unlearning time. 
Considering the space limit, we defer a portion of experiments to the \cref{Appendix: Experiments}. 
\subsection{Verification Experiments}
Our principal aim is to validate the differences between the proposed Hessian-free (HF) approximator $\mathbf{a}^{-U}_{\text{HF}}$ and the actual $\mathbf{w}_{E,B}^{-U} \!- \! \mathbf{w}_{E,B}$, where a smaller difference indicates the unlearned model is closer to retrained model and also reflects the degree of data forgetting. Therefore, we evaluate the degree of forgetting from two perspectives: \textit{distance} and \textit{correlation.}
 Specifically, we use the $\normltwo$-norm $\|\mathbf{w}_{E,B}^{-U}\!\!-\! \mathbf{w}_{E,B} \!\!-\! \mathbf{a}^{\!-U}\|$ to measure the distance metric. A smaller distance metric indicates the approximators accurately capture the disparity between the retrained and learned models. Additionally, we utilize the Pearson (\cite{Wright1921CorrelationAndCausation}) and Spearman (\cite{spearman1961proof}) coefficients to assess the correlation between $\mathbf{a}^{-U}$ and $\mathbf{w}_{\!E,B}^{-U} \!-\!   \mathbf{w}_{\!E,B}$ by mapping them from high-dimensional space to scalar loss values, i.e. the approximate loss change $\ell (\mathbf{w}_{E, B} \!+\!  \mathbf{a}^{\!-U}\!; U) \!-\!  \ell (\mathbf{w}_{E, B} ; U)$ and the actual change $\ell (\mathbf{w}^{-U}_{E, B} ; U) \!-\! \ell (\mathbf{w}_{E, B} ; U)$ on the forgetting dataset $U$. The correlation scores range from -1 to 1, where the higher score, the more the unlearned model performs like the retrained model.

\textbf{Configurations:} We conduct experiments in both convex and non-convex scenarios. Specifically, we trained a multinomial Logistic Regression (LR) with total parameters $d=7850$ and a simple convolutional neural network (CNN) with total parameters $d=21840$  on MNIST dataset (\cite{deng2012mnist}) for handwriting digit classification. We apply a cross-entropy loss and the inclusion of an $\normltwo$-regularization coefficient of 0.5 to ensure that the loss function of LR is strongly convex. For LR, training was performed for 15 epochs with a stepsize of 0.05 and a batch of 32.
For CNN, training was carried out for 20 epochs with a stepsize of 0.05 and a batch size of 64. Given these configurations, we separately assess the distance and correlation between approximators $\mathbf{a}^{-U}_{\text{HF}}$, $\mathbf{a}^{-U}_{\text{NS}}$, $\mathbf{a}^{-U}_{\text{IJ}}$ at deletion rates in the set \{1\%, 5\%, 10\%, 15\%, 20\%, 25\%, 30\%\}. Following the suggestion in \cite{DBLP:conf/iclr/BasuPF21}, a damping factor of 0.01 is added to the Hessian to ensure its invertibility when implementing \textit{NS} and \textit{IJ}. Besides, we conducted experiments with 7 random seeds to obtain average results. For the sake of clear visualizations, we use the minimum and maximum values as error bars. 

\textbf{The convex setting}. As shown in \cref{Verification experiments 1} (a), our approach outperforms previous works. First, by distance evaluation, the approximation error $\|\Delta_{E,B}^{-U}\|$ of the proposed method is lower than that of the previous \textit{NS} and \textit{IJ} works. At a deletion rate of 30\%, the error is 0.2097, lower than that of \textit{NS} and \textit{IJ}, which have errors of 0.246554 and 0.246557, respectively. Second, in assessing the loss change for the forgotten dataset $U$, our proposed method more accurately captures the actual changes. Particularly when removing 30\% of the samples, the proposed method maintains a high correlation, with Pearson and Spearman being 0.96 and 0.95, respectively. These evaluations demonstrate the efficacy of both Hessian-free and Hessian-based methods in accurately approximating the retrained model. Our proposed approach not only achieves a closer approximation to the retrained model but also exhibits lower complexity compared to previous methods. This is substantiated by the results of Application Experiments I, which provide empirical evidence supporting our theoretical findings.

\textbf{The non-convex setting}. As illustrated in \cref{Verification experiments 1} (b), the proposed \textit{HF} method demonstrates superior performance, exhibiting less dependency on random variations compared to \textit{NS} and \textit{IJ}. 
When 30\% of the samples are removed, our method maintains a lower approximation error of 0.90, surpassing the performance of \textit{NS} and \textit{IJ}. Our methods achieve Spearman and Pearson correlation coefficients of 0.81 and 0.74, respectively, accurately capturing actual loss changes, in contrast to \textit{NS} and \textit{IJ} which are far from the actual values in non-convex settings. {In \cref{Subsecition: Verificatio Experiments}, we explain the volatility of the results in the prior works on non-convex setting, and provide more large-scale results, which is not achievable in \textit{NS} and \textit{IJ} (\cite{DBLP:conf/nips/SekhariAKS21,DBLP:conf/nips/SuriyakumarW22,liu2023certified}) due to out-of-memory to precisely computing the Hessian matrix (or its inverse).}
\begin{figure}[t]
\subfigure[\textbf{Convex Setting}]{
\includegraphics[width=0.234 \textwidth]{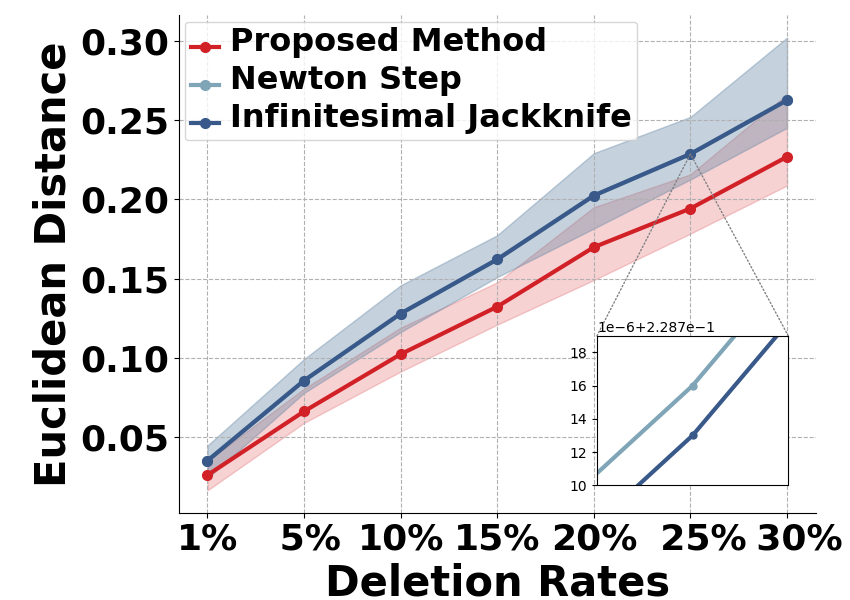}
\includegraphics[width=0.234 \textwidth]{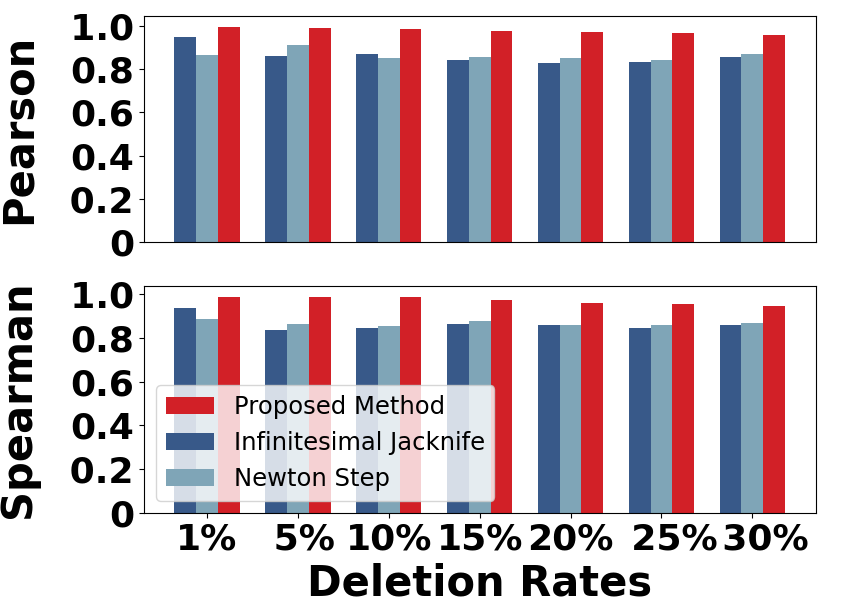}}
\subfigure[\textbf{Non-Convex Setting}]{
\includegraphics[width=0.234 \textwidth]{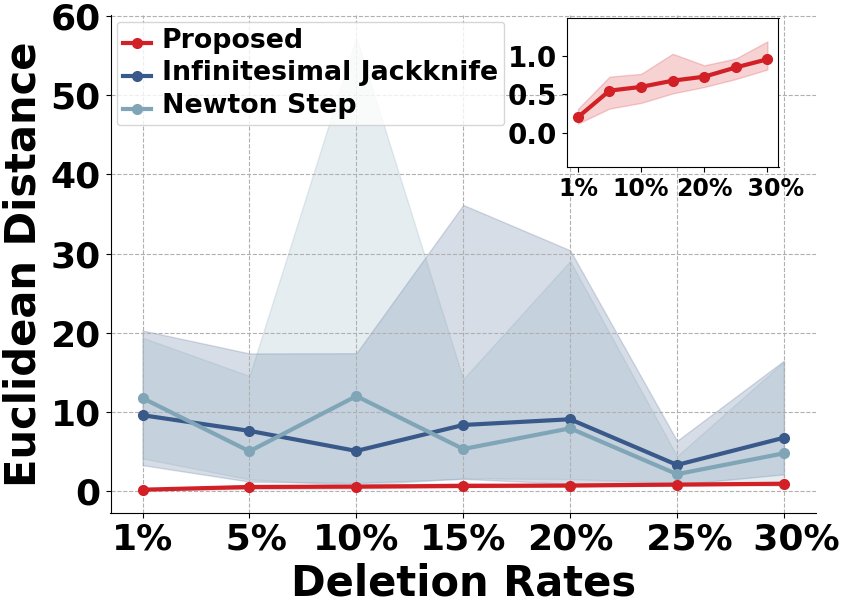}
\includegraphics[width=0.234 \textwidth]{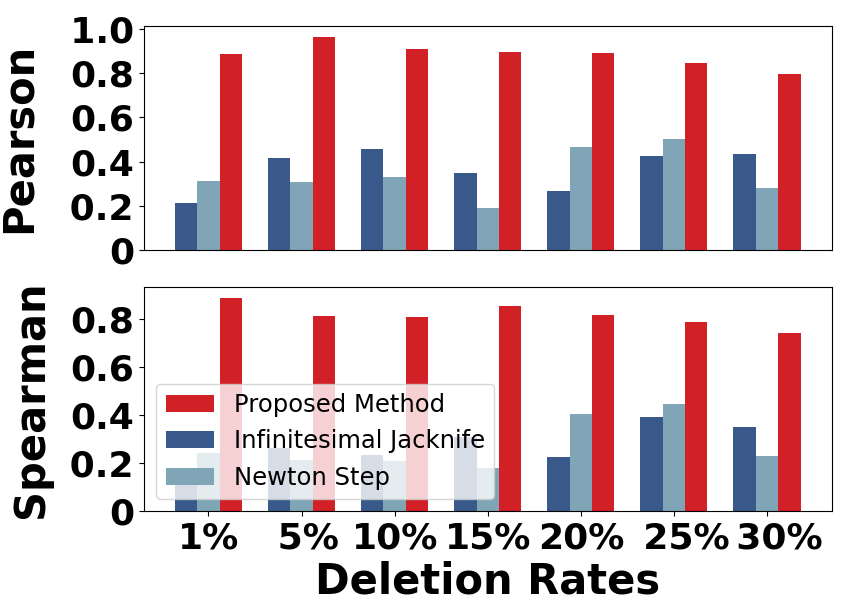}}
\caption{\textbf{Verification experiments \uppercase\expandafter{\romannumeral1}} on (a) LR and (b) CNN, respectively. The left of (a)(b) shows the distance to the retrained model, i.e. $\|\mathbf{w}_{E,B}^{-U} \!-\! \mathbf{w}_{E,B}\!\!-\!  \mathbf{a}^{-U}\|$.  The right of (a)(b) shows the correlation between the approximate loss change and actual loss change on the forgetting dataset.
}
\label{Verification experiments 1}
\end{figure}
\begin{table*}[t]
\centering
  \caption{\textbf{Application experiments \uppercase\expandafter{\romannumeral1}.} An online scenario with 20\% samples to be forgotten, 
  where each unlearning request involves forgetting a single data point with each execution. A performance gap against Retrain is provided in ($\cdot$). {We compute the total computational/storage cost for all samples.}
  }
  \resizebox{\linewidth}{!}
  {
  \begin{tabular}{c|c|cc|c|c|c}
\hline
\multirow{2}{*} {}       &\multirow{2}{*}{Model}&   \multicolumn{2}{c|}{Unlearning Computation} &{Pre-Computaion}& {Storage}&  {Test Accuracy (\%)}\\ 
    & &Runtime (Sec)  & Speedup& Runtime (Sec) & (GB)   & Unlearned model\\
    \hline
\multirow{2}{*}{NS} & 
     LR & 5.09$\times 10^2$  & 1.34$\times$  &  2.55$\times 10^3$ &    0.23   &  86.25 (-1.50)   \\
     &  CNN &  5.81$\times 10^3$  &  0.12$\times$  &  2.91$\times 10^{4}$  &   1.78  & 83.50 {(-10.25)}\\ 

  \hline
    
\multirow{2}{*}{IJ}& 
   LR & 0.23$\times 10^2$ & 29.64$\times$ & 2.55$\times 10^3$ &  0.23   &  86.25 (-1.50) \\
             &  CNN &  1.91$\times 10^2$  &   3.63$\times$ &  2.91$\times 10^{4}$& 1.78   &82.75 (-11.00)\\
    
  \hline
    
   \multirow{2}{*}{\textbf{HF}} 
    & LR &\textbf{0.0073}  &   \textbf{93,376}$\times$ & \textbf{2.20}\bm{$\times 10^{2}$}  & \textbf{0.03}   & \textbf{87.50} (-0.25) \\
     &  CNN & \textbf{0.0268} &  \textbf{25,821}$\times$ &\textbf{5.34}\bm{$\times 10^{2}$} & \textbf{0.08}     &  \textbf{91.50} {(-2.25)} \\
     \hline
  \end{tabular}
  }
\label{Table: Application experiments 1}
\end{table*}
\subsection{Application Experiments}
We compare the cost and performance of our method with the previous certified unlearning methods.
 Evaluations are conducted from two perspectives: \textit{runtime and utility}. Specifically, runtime focuses on the time spent pre-computing unlearning statistics and the speedup of the unlearning algorithm compared to the retraining. Moreover, we evaluate the utility by the test accuracy of unlearned model to ensure generalization guarantee. 
We train LR and CNN with 20\% data to be forgotten, where the configuration is identical to the foregoing verification experiments I. We conduct experiments with the unlearning guarantee of
$\epsilon =1,\delta=10^{-3}$ for LR and $\epsilon=100,\delta= 0.1$ for CNN. We provide larger scale results in \cref{Subsecition: Application Experiments}, and we have not evaluated \textit{NS} and \textit{IJ} due to out-of-memory.

 \textbf{Application experiment results.} 
 \cref{Table: Application experiments 1} demonstrates that the proposed algorithm outperforms other baselines by a significant margin. Strikingly, the proposed algorithm outperforms benchmarks of certified unlearning regarding all metrics with millisecond-level unlearning and minimal performance degradation in both convex and non-convex settings. 
When models exceed the complexity of LR, precisely estimating the full Hessian becomes infeasible. This conclusion is supported by \cref{Table: Application experiments 1} and \cite{DBLP:conf/cvpr/MehtaPSR22}. Therefore, we demonstrate that the proposed Hessian-free perspective method exhibits great potentials for over-parameterized models compared to the Hessian-based approaches.

\section{Experiments from Adversary’s Perspective}
\label{Section 6}

\begin{figure}[t]
\subfigure{
\includegraphics[width=0.234 \textwidth]{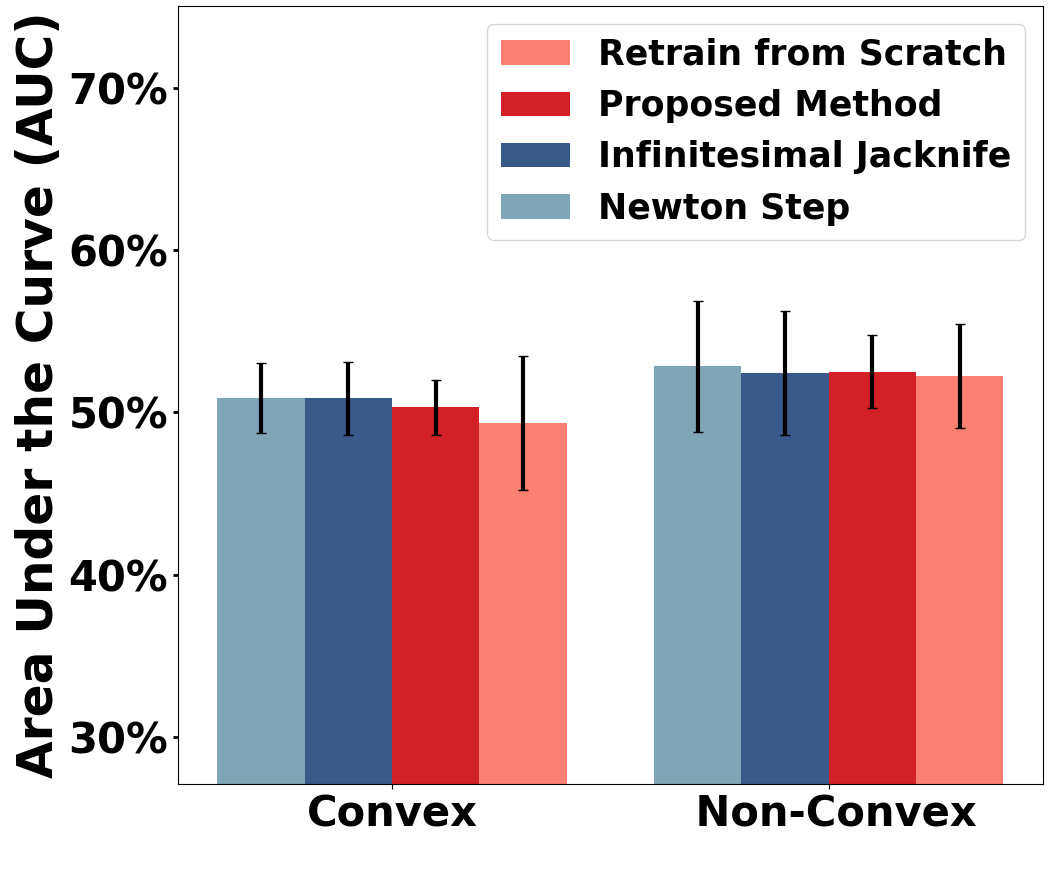}
\includegraphics[width=0.234 \textwidth]{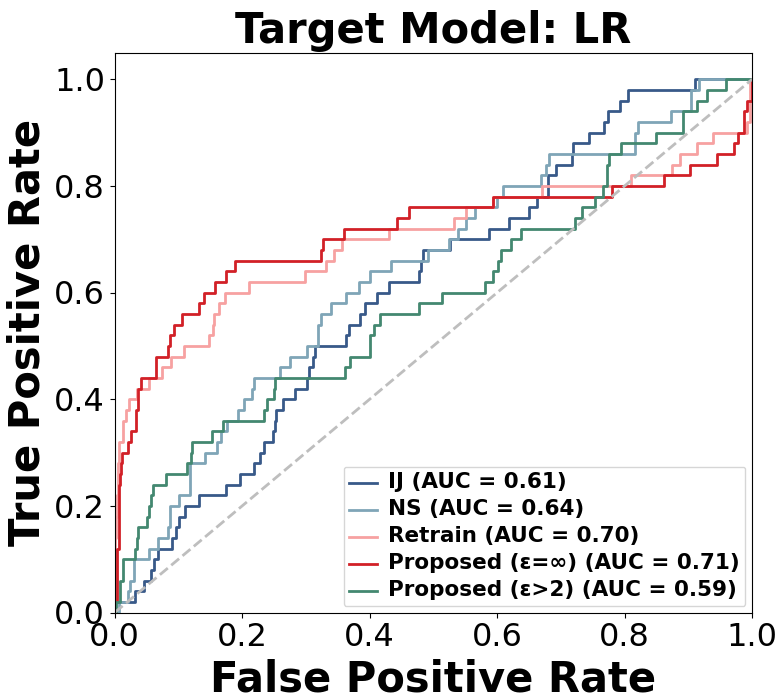}
\includegraphics[width=0.234 \textwidth]{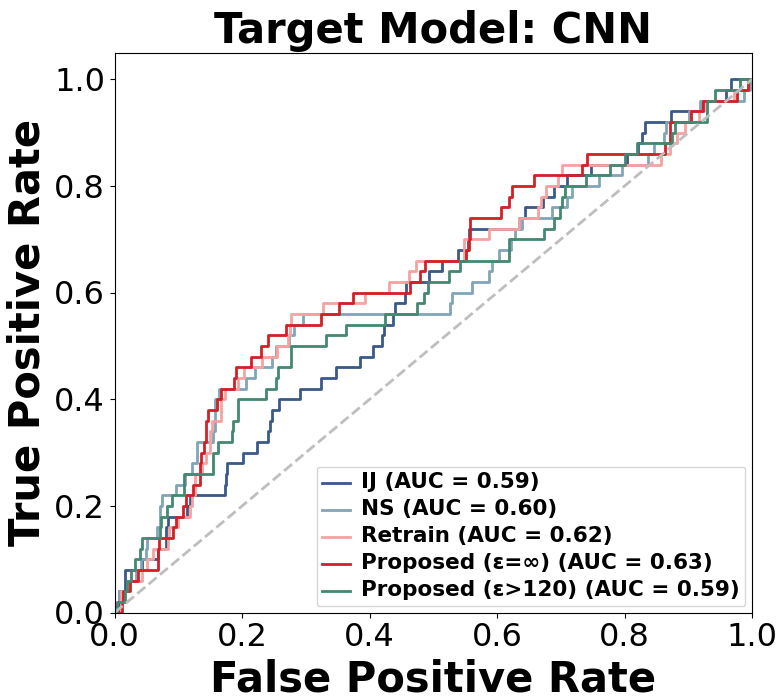}
\includegraphics[width=0.234 \textwidth]{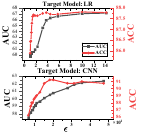}
}

\caption{\textbf{Membership Inference Attack \uppercase\expandafter{\romannumeral1}.} The first plot shows the MIA-L and error bars show 95\% confidence intervals; the second and third plots depict the MIA-U results; and the fourth plot illustrates the privacy-utility tradeoff. 
Proposed method successfully defends MIA-L and mitigates excessive privacy leakage from MIA-U without sacrificing performance  when applied with appropriate noise.}
\label{MIA experiment 1}
\end{figure}

To further investigate the effectiveness of data removal for auditing, we perform membership inference attack (MIA) experiments from the attacker's perspective, including two state-of-the-art MIA: \ding{182} \textbf{MIA against Learning} (\textbf{MIA-L}) (LiRA MIA \cite{DBLP:conf/sp/CarliniCN0TT22}) and \ding{183} \textbf{MIA against Unlearning (MIA-U}) (\cite{DBLP:conf/ccs/Chen000HZ21}). Specifically, MIA-L  enables an adversary to query a trained ML model to predict whether a specific example was included in the model's training dataset; MIA-U involves querying two versions of the ML model to infer whether a target sample is part of the training set of the learned model but has been removed from the corresponding unlearned model (primarily targeting retrain algorithm). 
See \cref{Subsecition: MIA Experiments} for a detailed introduction and more experimental results. 

\textbf{MIA results} with 5\% data to be forgotten. As shown in the first plot of \cref{MIA experiment 1}, our proposed method effectively defends against MIA-L even in low-privacy scenarios, as evidenced by effective approximation of the retrained model in the verification experiment. However, MIA-U exploits discrepancies between the outputs of the model’s retrained and original versions. The verification experiment also reveals that our unlearned model's loss change correlation coefficient is closely aligned with that of the retrained model, indicating that, like the retrained model, it is similarly vulnerable to such attacks. As depicted in the second and third plots, MIA-U shows high AUC values for the retrained method (light red line) and our method (red line) without perturbation ($\epsilon = \infty$). 
Nevertheless, adding appropriate noise to the model can mitigate MIA-U. As demonstrated in the fourth plot, introducing an appropriate noise level ($\epsilon>2$ for LR and $\epsilon>120$ for CNN) effectively reduces the excessive privacy leakage without compromising performance. Besides, further increasing the model perturbation to tradeoff privacy-utility does not significantly reduce the effectiveness of such attacks. To better defend MIA-U, we will consider measures like noised SGD in future work.


\section{Conclusion}
We proposed a novel Hessian-free certified unlearning method. 
Our analysis relies on an affine stochastic recursion to characterize the model update discrepancy between the learned and retrained models, which does not depend on inverting Hessian and model optimality. We provide results of unlearning, generalization, deletion capacity guarantees, and time/memory complexity that are superior to state-of-the-art methods. We then develop an algorithm that achieves near-instantaneous unlearning. 
We show that the proposed method can reduce the precomputation time, storage, and unlearning time to $\mathcal{O}(n^2d)$, $\mathcal{O}(nd)$, $\mathcal{O}(d)$, respectively. Experimental results validate that our work benefits certified unlearning with millisecond-level unlearning time and improved test accuracy.


\section*{Acknowledgments}
This work was supported in part by NSF grants CMMI-2024774 and ECCS-2030251, and in part by the National Natural Science Foundation of China under Grant 62202427.

\bibliography{Mybib.bib}
\bibliographystyle{iclr2025_conference}

\clearpage

\appendix
\renewcommand{\contentsname}{Contents of Appendix}
\tableofcontents  
\addtocontents{toc}{\protect\setcounter{tocdepth}{3}} 

\clearpage

\section{An Expanded Version of The Related Work}
\label{Appendix: Related Work}

\textbf{Machine Unlearning}, the terminology can be traced back to the paper of \cite{DBLP:conf/sp/CaoY15}, and their work defined the notion of "the completeness of forgetting". There are two main categories based on the completeness of forgetting: Exact Unlearning and Approximate Unlearning.

\textbf{Exact Unlearning} aims to accelerate the speed of retraining and completely eliminate the influence of forgotten data on learned model. The majority of existing Exact Unlearning methods are derived from the work of \cite{DBLP:conf/sp/BourtouleCCJTZL21}, which proposed the notable SISA framework, grounded in ensemble learning techniques. Specifically, the training dataset is partitioned into mutually disjoint shards, and then sub-models are trained on these shards. Upon unlearning, SISA only needs to retrain the sub-model correlated with the relevant data shard and then make the final prediction by assembling the knowledge of each sub-model, which significantly lowers retraining computational costs.
Under the SISA framework, other Exact Unlearning methods have also been proposed, e.g., \cite{DBLP:conf/ccs/Chen000H022} extended SISA to graph learning by partitioning the original training graph into disjoint shards, parallelly training a set of shard models, and learning an optimal importance score for each shard model. RecEraser (\cite{DBLP:conf/www/Chen0ZD22}) extended SISA to recommendation systems. KNOT (\cite{DBLP:conf/infocom/SuL23}) adopted the SISA for client-level asynchronous federated unlearning during training and introduced a clustered mechanism that divides clients into multiple clusters. Other methods, such as DaRE (\cite{DBLP:conf/icml/BrophyL21}) and HedgeCut (\cite{DBLP:conf/sigmod/SchelterGD21}), addressed the issue that SISA cannot be applied to decision trees and proposed exact unlearning solutions; and \cite{ginart2019making} aimed to achieve forgetting in K-means clustering. Furthermore, ARCANE (\cite{DBLP:conf/ijcai/YanLG0L022}) introduced a class-based data partitioning strategy to enhance the performance of SISA.

However, the SISA framework adopts a structure that completely separates the data, resembling ensemble learning approaches where predictions or models are aggregated only at the end. This separation restricts the model training process from fully leveraging the entirety of the segmented datasets and also hampers its extensibility, leading to lower accuracy and performance compared to a model trained on the entire unsegmented dataset.  In an experimental evaluation conducted by \cite{koch2023no}, it was found that simply down-sampling the data set to achieve a more balanced single shard outperforms SISA in terms of unlearning speed and performance.

\textbf{Approximate Unlearning} thus seeks to minimize the impact of forgetting data to an acceptable level to trade off computational efficiency, reduced storage costs, and flexibility.
There are many approaches to approximate unlearning, such as \cite{kurmanji2024towards} developed a distillation-based method; \cite{halimi2022federated} adopted a gradient ascent approach on forgotten data to achieve forgetting.
Among these, second-order methods have been widely studied, with most of these methods (e.g., \cite{DBLP:conf/nips/SekhariAKS21,DBLP:conf/nips/SuriyakumarW22,liu2023certified,chien2022efficient,10.1145/3580305.3599271,DBLP:conf/cvpr/MehtaPSR22,DBLP:conf/ndss/WarneckePWR23,DBLP:conf/aaai/WuHS22}) inspired by influence function studies (\cite{DBLP:conf/icml/KohL17,DBLP:conf/iclr/BasuPF21,DBLP:conf/icml/BasuYF20,DBLP:conf/nips/HaraNM19}), aiming to extract information from the Hessian of loss function and achieve forgetting through limited model updates. In particular, inspired by differential privacy (\cite{dwork2006differential}), \cite{DBLP:conf/icml/GuoGHM20} introduced certified unlearning $(\epsilon,\delta)$, anticipating ensuring that the output distribution of an unlearning algorithm is indistinguishable from that of retraining. Moreover, due to costly Hessian operations, Fisher-based methods (\cite{DBLP:conf/cvpr/GolatkarAS20,DBLP:conf/eccv/GolatkarAS20,DBLP:conf/cvpr/GolatkarARPS21}) have emerged to approximate Hessian. Our paper concentrates on the Hessian-based approach, yet obviates the need for Hessian inversion and explicit calculation of the Hessian.

Among these Hessian-based certified unlearning methods, \cite{DBLP:conf/nips/SekhariAKS21,DBLP:conf/nips/SuriyakumarW22,liu2023certified} established the theoretical foundation for Hessian-based methods and provided the generalization result in terms of the population risk, as well as derived the deletion capacity guarantee. In the definition of certified unlearning, their works align with our method, which lies in pre-computing statistics of data that extract second-order information from the Hessian and using these ‘recollections’ to forget data. 
However, existing second-order methods still require expensive matrix computations and can only perform forgetting for optimal models under convexity assumption. Although \cite{zhang2024towards} offered certified unlearning for deep model through adding sufficient large regularization to guarantee local strong convexity, but the optimality problem remains unsolved, particularly due to the non-convexity of modern deep models, making it impractical to find the empirical risk minimizer. Even for convex models, in most non-concurrent training scenarios, such as online learning or continual learning, the model cannot converge to the optimal solution.




\section{Discussions}

\subsection{Limitations and Future Works}
\label{Appendix: limitation}

\textbf{Limitations.} Although we have made the above contributions, there are some significant limitations:

\ding{172} Our proposed algorithm necessitates $\mathcal{O}(n^2d)$ precomputation and $\mathcal{O}(nd)$ storage, which remains costly, particularly in scenarios involving extensive data and larger-scale models. Consequently, conducting experiments concurrently with both large-scale datasets and models is anticipated to significantly extend the time required for precomputing unlearning statistics.

\ding{173} As demonstrated by the results in \cref{Theorem 2: approximation error}, it is important to recognize that the proposed method is highly sensitive to the the stability of the training process or well-conditioned Hessian on non-convex setting. The addition of a large explicit $\normltwo$-regularizer to the objective or utilizing small step sizes can ensure a meaningful bound. However, we are not aware of any popular non-convex machine learning problem that can satisfy all the conditions of \cref{Theorem 2: approximation error}. If the learning process is unstable, unlearning will introduce significant error. For instance, in \cref{Subsection: Impact of Step Sizes}, employing an aggressive step size ($\eta >0.1$) can lead to a substantial increase in approximation error.

\ding{174}  {While we propose a new method that does not rely on convexity, our analysis builds upon the theoretical framework of previous works for comparison purposes and much of our theoretical results rely on convexity—except for our unlearning-related conclusions. Providing a nonconvex theoretical framework is meaningful, and we will explore potential extensions in further research.}

\textbf{Future work.} Therefore, to solve the problems mentioned above, we briefly propose some solutions. 

\ding{172} Specifically, our proposed framework offers the feasibility to unlearn an arbitrary subset of the entire datasets, allowing a trade-off that further reduces the precomputation and storage complexity, for instance, consider a typical scenario at the user level, where there are $k$ users ($k< < n$). By focusing on the unlearning of a single user's dataset or a subset of users' datasets, we can maintain $k$ vectors for $k$ users' dataset instead of maintaining $n$ vectors for $n$ all data points. Therefore, we can further reduce precomputation and storage to $\mathcal{O}(knd)$ and $\mathcal{O}(kd)$, respectively. 

\ding{173} Moreover, based on our analysis in \cref{Theorem 2: approximation error} and \cref{Subsection: Impact of Step Sizes}, the main source of the approximation error is $\rho$, which mainly consists of two components:  the step size $\eta$ and the maximum eigenvalue $\lambda_1$ of Hessian. Reducing the step size is straightforward method, but we cannot expect every training process to have a sufficiently small step size. Therefore, reducing $\lambda_1$ during optimization, will not only improve the generalization ability of the learned model but also reduce the approximation error in the unlearning stage. Possible approaches are through stepsize warm-up strategies (\cite{gilmer2022a}) or sharpness-aware minimization (\cite{DBLP:conf/iclr/Wen0L23}) to effectively reduce $\lambda_1$.

\subsection{Deletion Capacity Guarantee}
\label{Deletion Capacity}

We analyze the \textit{deletion capacity} of our method based on the analysis of \cite{DBLP:conf/nips/SekhariAKS21}, which formalizes how many samples can be deleted from the model parameterized by the original empirical risk minimizer while
maintaining reasonable guarantees on the test loss. 
We establish the relationship between generalization performance and the amount of deleted samples through deletion capacity, and our analysis indicates that our proposed method outperforms previous works in both generalization and deletion capacity. The lower bound on deletion capacity can be directly derived from the generalization guarantee (\cref{Theorem: generalization error}). Below, we first formalize the deletion capacity.

\begin{definition}[\textbf{Definition of Deletion Capacity} (\cite{DBLP:conf/nips/SekhariAKS21})]
\label{Theorem: Definition of Deletion Capacity}
Let $\epsilon, \delta \geq 0$. Let $S$ be a dataset of size $n$ drawn i.i.d. from $\mathcal{D}$. For a pair of learning and unlearning algorithms ${\Omega}, \bar{\Omega}$ that are $(\epsilon, \delta)$-certified unlearning, the deletion capacity $m_{\epsilon, \delta}^{\Omega, \bar{\Omega}}(d, n)$ is defined as the maximum number of samples $U$ to be forgotten, while still ensuring an excess population risk is $\gamma$. Specifically,
\begin{equation}
\begin{aligned}
m_{\epsilon, \delta,\gamma}^{\Omega, \bar{\Omega}}(d, n)&:=\max \left\{m \mid \mathbb{E}\left[\max _{U \subseteq \mathcal{S}:|U| \leq m} F(\bar{\Omega}(\Omega(\mathcal{S}); T(\mathcal{S}))-F(\mathbf{w}^*)\right] \leq \gamma\right\},
\end{aligned}
\end{equation}
where the expectation above is with respect to $S \sim \mathcal{D}^n$ and output of the algorithms $\Omega$ and $\bar{\Omega}$.
\end{definition}
Similar to  \cite{DBLP:conf/nips/SekhariAKS21}, we provide upper bound and lower bound of deletion capacity for our algorithm. 
We first give the upper bound of deletion capacity.

\begin{theorem}[\textbf{Upper Bound of Deletion Capacity} \cite{DBLP:conf/nips/SekhariAKS21}]
\label{Theorem: Upper Bound of Deletion Capacity}
 Let $\delta \leq 0.005$ and $\epsilon=1$. There exists a 4-Lipschitz and 1-strongly convex loss function $f$, and a distribution $\mathcal{D}$ such that for any learning algorithm $A$ and removal mechanism $M$ that satisfies $(\epsilon, \delta)$-unlearning and has access to all remaining samples $\mathcal{S} \backslash U$, then the deletion capacity is:
\begin{equation}
\begin{aligned}
m_{\epsilon, \delta, \gamma}^{A, M}(d, n) \leq c n
\end{aligned}
\end{equation}
where $c$ depends on the properties of function $f$ and is strictly less than 1.
\end{theorem}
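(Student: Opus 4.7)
The plan is to exhibit a single hard instance and show that, on it, any $(\epsilon,\delta)$-unlearning procedure must suffer $\Omega(1)$ expected excess risk once the deletion fraction is a nontrivial constant. A convenient choice is one-dimensional mean estimation with a suitably rescaled quadratic loss $\ell(w;z) = \tfrac{1}{2}(w-z)^2$ over $w\in[-1,1]$ and $z\in\{-1,+1\}$; after absorbing constants one verifies $4$-Lipschitzness and $1$-strong convexity, and picking $\mathcal{D}$ uniform on $\{-1,+1\}$ fixes the population minimizer at $w^\star = 0$. The excess risk then equals $\tfrac{1}{2}w^2$, so any bound $F(w)-F(w^\star)\le\gamma$ is equivalent to the displacement bound $|w|\le\sqrt{2\gamma}$, which is what I will contradict.

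Next I would construct a worst-case deletion request. Draw $S\sim\mathcal{D}^n$, and let the adversary select $U$ to be a prefix of the samples whose label is $+1$, truncated to size $m$. For $m = cn$ with a constant $c$ bounded away from $1/2$ from above, binomial concentration ensures that with probability $1-o(1)$ the empirical mean of $S\setminus U$ is bounded away from zero by some constant $\mu(c)>0$. Combined with $1$-strong convexity, this implies that the retraining output $\bar\Omega(\Omega(S\setminus U),\emptyset)$ lies in the event $\mathcal{E}=\{|w|\ge\mu/2\}$ with probability at least $1-o(1)$. Invoking \cref{Def 1} then transfers this event to the unlearning algorithm: $\Pr[\bar\Omega(\Omega(S),T(S))\in\mathcal{E}] \ge e^{-\epsilon}\bigl(\Pr[\mathcal{E}]-\delta\bigr)$, and for $\epsilon=1$, $\delta\le 0.005$ the right-hand side remains a positive constant.

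Finally, any $w\in\mathcal{E}$ has excess risk at least $\mu^2/8$, so the expected excess risk of the unlearned model is $\Omega(\mu(c)^2)$. Choosing $c<1$ to be the largest constant for which $\mu(c)^2$ still exceeds a prescribed $\gamma$ (after accounting for the $e^{-\epsilon}$ and $\delta$ factors) yields the advertised bound $m \le cn$. The main obstacle is quantitative: I must simultaneously keep the binomial tail tight so that $\mu(c)$ is provably bounded below by a usable constant, keep the strong-convexity translation from empirical bias to ERM displacement tight, and retain enough probability after the $(\epsilon,\delta)$-step to force the expected excess risk above $\gamma$. Sharpening these three estimates so that the resulting $c$ comes out strictly less than $1$ with a concrete margin, rather than drifting toward $1$, is where the real work lies; the final $c$ should come out as an explicit function of $\epsilon$, $\delta$, and the Lipschitz and strong-convexity constants of $\ell$.
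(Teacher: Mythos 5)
First, note that the paper does not actually prove this theorem: it explicitly defers to \cite{DBLP:conf/nips/SekhariAKS21} ("A comprehensive proof \dots is available in \cite{DBLP:conf/nips/SekhariAKS21}"), so your attempt can only be compared against the known argument there. Your skeleton is in the right spirit and shares its main ingredients with that argument: a strongly convex mean-estimation instance, an adversarial deletion set consisting of $cn$ identically-labelled points so that the surviving sample is biased by a constant, and the transfer of a bad event from the reference output to the unlearned output via the $(\epsilon,\delta)$ inequality of \cref{Def 1} (your inequality $\Pr[\bar{\Omega}(\Omega(\mathcal{S}),\mathcal{T}(\mathcal{S}))\in\mathcal{E}]\ge e^{-\epsilon}(\Pr[\mathcal{E}]-\delta)$ is the correct direction). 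The quantitative bookkeeping you flag at the end is genuinely where the constants $4$, $1$, $\delta\le 0.005$, $\epsilon=1$ come from.

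However, there is a genuine gap at the step "combined with $1$-strong convexity, this implies that the retraining output $\bar{\Omega}(\Omega(\mathcal{S}\setminus U),\emptyset)$ lies in $\mathcal{E}$." The theorem quantifies over \emph{all} pairs $(A,M)$, and the reference object in \cref{Def 1} is $\bar{\Omega}(\Omega(\mathcal{S}\setminus U),\emptyset)$, not the empirical risk minimizer of $F_{\mathcal{S}\setminus U}$; nothing forces an arbitrary pair to output anything near that minimizer. Worse, with a single fixed distribution ($\mathcal{D}$ uniform on $\{-1,+1\}$, so $w^{\star}=0$ is known in advance), the degenerate mechanism that always outputs $0$ is trivially $(\epsilon,\delta)$-unlearning and has zero excess risk, so your hard instance proves nothing against it. The repair, which is the real content of the cited proof, is a two-distribution (family) argument: one constructs $\mathcal{D}_0,\mathcal{D}_1$ with well-separated population minimizers such that deleting $cn$ suitably chosen points from a $\mathcal{D}_0$-sample produces a dataset statistically indistinguishable from a $\mathcal{D}_1$-sample; the pair's own excess-risk guarantee applied with $U=\emptyset$ under $\mathcal{D}_1$ then pins the reference output near $w^{\star}(\mathcal{D}_1)$, and the $(\epsilon,\delta)$ transfer forces the unlearned model under $\mathcal{D}_0$ to incur constant excess risk. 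Without this second distribution your argument cannot rule out algorithms that simply "know" the minimizer, so the claimed contradiction never materializes.
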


A comprehensive proof of \cref{Theorem: Upper Bound of Deletion Capacity} is available in  \cite{DBLP:conf/nips/SekhariAKS21}. Furthermore, based on \cref{Theorem: generalization error}, we present the lower bound of deletion capacity below:
\begin{theorem}[\textbf{Deletion Capacity Guarantee}]
\label{Theorem: Lower Bound of Deletion Capacity}
There exists a learning algorithm $\Omega$ and an unlearning algorithm $\bar{\Omega}$ such that for any convex (and hence strongly convex), L-Lipschitz, and M-Smoothness loss $f$ and distribution $\mathcal{D}$. Choose step size $\eta \leq \frac{2}{M+\lambda}$ and ${\mathbf{w}}_{E, B}^{-U}$ is the empirical risk minimizer of  objective function $F_{\mathcal{S} \backslash U}(\mathbf{w})$, then we have
\begin{equation}
 m_{\epsilon, \delta,\gamma}^{\Omega, \bar{\Omega}}(d, n) = \mathcal{O}\left(({1}/{\rho})^{n}/{d^\frac{1}{2}}\right).
\end{equation}
\end{theorem}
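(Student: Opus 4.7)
The plan is to read off the deletion-capacity bound directly from the generalization guarantee in \cref{Theorem: generalization error}, by imposing the excess-risk requirement and solving for the largest admissible $m$. Since the generalization result already accounts for the Gaussian perturbation needed to satisfy $(\epsilon,\delta)$-unlearning (via \cref{alg:1}), the definition in \cref{Theorem: Definition of Deletion Capacity} reduces to an algebraic inversion: find the maximum $m$ such that $\mathbb{E}[\max_{U:|U|\leq m} F(\bar{\Omega}(\Omega(\mathcal S);T(\mathcal S))) - F(\mathbf{w}^*)] \leq \gamma$.

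First, I would instantiate \cref{Theorem: generalization error} under the stated regularity conditions (convex, $L$-Lipschitz, $M$-smooth, with $\eta \leq 2/(M+\lambda)$, so $\rho<1$), keeping $E/|\mathcal B|$ as a fixed constant (absorbed into the $\mathcal O(\cdot)$). This yields
\begin{equation}
F(\tilde{\mathbf{w}}_{E,B}^{-U}) - \mathbb E[F(\mathbf{w}^*)] = \mathcal O\!\left(\frac{L^2}{\lambda(n-m)} + \rho^{n}\,\frac{L^2}{\lambda} + \rho^{n}\,\frac{m L \eta G \sqrt{d}\sqrt{\ln(1/\delta)}}{\epsilon}\right).
\end{equation}
Next, I would isolate the $m$-dependence. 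The first term is $o(1)$ provided $m \ll n$, and the second does not involve $m$ at all; both vanish as $n\to\infty$ because $\rho^n\to 0$. The only nontrivial $m$-dependent contribution is the third, of order $\rho^n\, m \sqrt{d}/\epsilon$ (the remaining constants $L,\eta,G,\sqrt{\ln(1/\delta)}$ being absorbed).

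Setting this dominant term $\leq \gamma$ and solving for $m$ yields
\begin{equation}
m \;\leq\; \frac{c\,\gamma\,\epsilon}{\sqrt{d}\,\rho^{n}} \;=\; \mathcal O\!\left(\frac{(1/\rho)^{n}}{\sqrt{d}}\right),
\end{equation}
which is exactly the claimed lower bound on the deletion capacity. Finally, I would verify the side conditions of \cref{Theorem: Definition of Deletion Capacity} are met: (i) the pair $(\Omega,\bar\Omega)$ is $(\epsilon,\delta)$-certified unlearning by the noise-calibration step in \cref{alg:1}, and (ii) the bound on excess risk holds uniformly over $U\subseteq\mathcal S$ with $|U|\leq m$, which follows because the generalization bound of \cref{Theorem: generalization error} is already uniform in the choice of the forgetting subset (it depends only on $|U|=m$ through $\|\Delta_{E,B}^{-U}\|=\mathcal O(m\rho^n)$, see \cref{Theorem 2: approximation error}).

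The main obstacle I anticipate is bookkeeping rather than conceptual difficulty: one must confirm that when several terms are balanced, the binding constraint is indeed the noise-induced $m\sqrt{d}\rho^n/\epsilon$ term (and not the $1/(n-m)$ term), which is true precisely in the regime where $n$ is moderately large so that $\rho^n$ is small and the ERM sampling error $1/(n-m)$ is also negligible relative to $\gamma$. A secondary subtlety is keeping the factor $E/|\mathcal B|$ consistent; here one uses $E\geq 1$ and treats the ratio as an $\mathcal O(1)$ constant in the exponent, so that $\rho^{nE/|\mathcal B|}$ is bounded above by a constant multiple of $\rho^{n}$ (after adjusting the base $\rho$). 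This yields the stated $\mathcal O((1/\rho)^n/\sqrt d)$ scaling and completes the proof.
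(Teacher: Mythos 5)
Your proposal matches the paper's own proof: both instantiate \cref{Theorem: generalization error}, observe that the binding $m$-dependent term is the noise-induced $\rho^{n} m \sqrt{d}\,\sqrt{\ln(1/\delta)}/\epsilon$ contribution, and invert it against the excess-risk budget $\gamma$ to obtain $m = \mathcal{O}\left((1/\rho)^{n}/d^{1/2}\right)$. Your additional bookkeeping on the uniformity over $U$ and the $E/|\mathcal{B}|$ exponent is slightly more careful than the paper's one-line inversion, but the argument is essentially the same.
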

\begin{proof}[Proof of \cref{Theorem: Lower Bound of Deletion Capacity}]
Based on \cref{Theorem: generalization error}, we have
\begin{equation}
\begin{aligned}
F(\tilde{\mathbf{w}}_{E-1, B}^{-U} ) -    \mathbb{E}[F\left(\mathbf{w}^*\right) ]    =   \mathcal{O} \Big(\frac{4 L^2}{\lambda (n-m)}
+ \rho^{n{E}/{|\mathcal{B}|}} \frac{2L^2}{\lambda}  
       +\rho^{n{E}/{|\mathcal{B}|}}  \frac{2 L \eta G \sqrt{d}
 \sqrt{\ln{(1/\delta)}}}{\epsilon}    \! \Big).
\end{aligned}
\end{equation}
Note that $\rho < 1$. The above upper bound on the excess risk thus implies that we can delete at least
\begin{equation}
    m = \mathcal{O} \Big(\frac{\epsilon}{\eta \sqrt{\ln{(1/\delta)}} d^\frac{1}{2}\rho^n} \Big)
\end{equation}
samples while still ensuring an excess risk guarantee of $\gamma$. Accordingly, it yields an order of $\mathcal{O}\left(({1}/{\rho})^{n}/{d^\frac{1}{2}}\right)$. Therefore, we have completed the proof.
\end{proof} 



\subsection{Hessian-Vector Products Technique in Stage II of \cref{alg:1}}
\label{hvp}
We provide the detailed Hessian-Vector Products (HVP) procedure in \cref{alg:2}. Notably, when recursively computing the product of the matrix and an arbitrary vector $(\mathbf{I} - {\eta} \mathbf{H}) \mathbf{v}$, we can derive it from $\mathbf{v} - {\eta} \mathbf{H} \mathbf{v}$. Here, $\mathbf{H}\mathbf{v}$ involves a HVP through modern automatic differentiation frameworks such as JAX or PyTorch. For more explanation of the HVP, please refer to \cite{dagreou2024how}.
\begin{algorithm}[h]
    \caption{Matrix-Vector Products in Stage II of \cref{alg:1}}
 \label{alg:2}
\textbf{Stage \uppercase\expandafter{\romannumeral2}: } \textbf{Pre-computing and Pre-storing} statistics $\mathcal{T}(\mathcal{S})=\{\mathbf{a}_{E,B}^{-u_j}\}_{j=1}^n$:\\
initialization $\{\mathbf{a}_{E,B}^{-u_j}\}_{j=1}^n = 0$, $\{\mathbf{v}_{e,b}^{-u_j}\}_{j=1}^n = 0$; \\
\For{ $e = 0, \cdots,E-1, E $}{
    \For{ $b = 0, \cdots,B-1, B $}{
        \For{ $j = 1,2 \cdots, n $}{
        Compute: $\mathbf{a}_{e,b}^{-u_j} \leftarrow (\mathbf{I} -  {\eta_{e,b}} \mathbf{H}_{e,b}) \mathbf{a}_{e,b}^{-u_j} + \frac{\eta_{e,b}}{|\mathcal{B}_{e,b}|} (\mathbf{I} -  {\eta_{e,b}} \mathbf{H}_{e,b}) \mathbf{v}_{e,b}^{-u_j}$\\
        \eIf{gradient of sample ${u_j}$ computed in this update step}  {
           $ \mathbf{v}_{e,b}^{-u_j} \leftarrow \nabla \ell (\mathbf{w}_{e,b}^{-u_j};z_i) $}{$ \mathbf{v}_{e,b}^{-u_j} \leftarrow 0 $}
        }
    }
}
\end{algorithm}

\subsection{Unlearning-Repairing Strategy in Stage III of \cref{alg:1}}
\label{Repairing}

 While our approach does not require accessing any datasets in Stage III, we can mitigate the performance degradation caused by excessive deletions and make our algorithm more robust through a simple fine-tuning strategy if we can still access the remaining dataset in Stage III. {We only consider this as a preliminary attempt in this section; other experiments do not include this Finetune strategy. This strategy allows our method to be used as a heuristic algorithm in non-privacy scenarios, without the need for theoretical certified guarantees, such as in bias removal.}

Specifically, when the continuous incoming deletion requests to forget data, we can rapidly implement data removal through vector addition. This involves augmenting the current model with approximators, where errors accumulate during this process and thus cause performance degradation. Suppose, upon the arrival of a new round of deletion requests for a particular model and dataset, the model's accuracy significantly drops due to the removal of a substantial amount of data. As a remedy in \cref{alg:3}, we can address this performance degradation by fine-tuning the current model for $E_r$ epochs on the remaining dataset, referred to as the ‘repairing’ process. Additionally, by computing the approximators for the remaining dataset during the repairing process, and adding these approximators during the repairing process to the pre-stored approximators during the learning process, we obtain the new set of approximators for the remaining data. In \cref{Figure: OUR}, we conduct an experiment on ResNet-18 using the CelebA dataset to demonstrate the efficacy of the repairing strategy. 


\begin{figure*}[hbtp]
\center
    \includegraphics[width=0.35\textwidth]{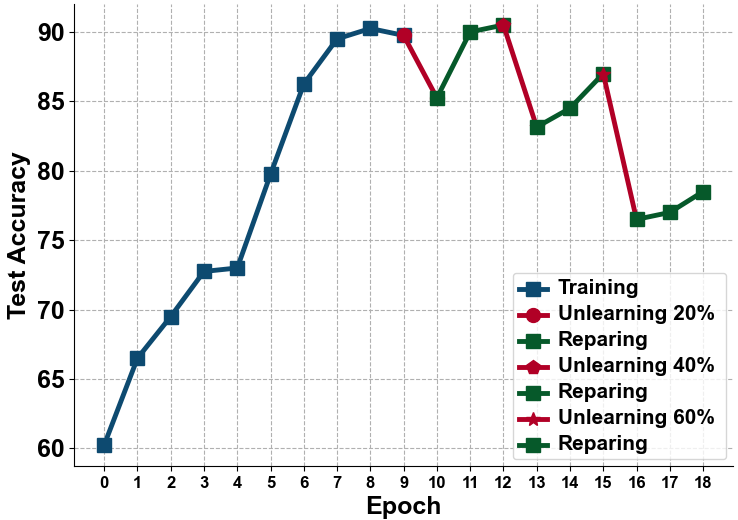}
\caption{\textbf{Unlearning-Repairing Strategy}. The blue, red, and green lines respectively represent the learning, unlearning, and repairing process. When a significant amount of data is removed, leading to a loss in model performance, performing fine-tuning on the remaining dataset and recording the approximators during this period can help avoid performance degradation.}
\label{Figure: OUR}
\end{figure*}

    \begin{algorithm}[H]
    \caption{Hessian-Free Unlearning-Repairing Strategy in  Stage III of \cref{alg:1}}
    \label{alg:3}
\textbf{Stage \uppercase\expandafter{\romannumeral3}:} \textbf{Unlearning} when user requests to forget
the subset $U$=$\{u_j\}_{j=1}^m $ on model $\mathbf{w}_{E, B}$:\\
Compute: $\bar{\mathbf{w}}_{E, B}^{-U} \!\leftarrow\! \mathbf{w}_{E, B}\! + \! \mathbf{a}_{E,B}^{-U},\ $ 
$ c \! \leftarrow \! \Vert \Delta_{E,B}^{-U} \Vert  \frac{\sqrt{ 2\! \ln \!{(1.25/\delta})}}{\epsilon}\!\!$\ \ \  Delete statistics: $\{ \mathbf{a}_{E,B}^{-u_j} \}_{j=1}^m$ \\
Sample: $\mathbf{n} \sim \mathcal{N}(0, \sigma  \mathbf{I})$, Return: $\tilde{\mathbf{w}}_{E, B}^{-U}=\bar{\mathbf{w}}_{E, B}^{-U}+\mathbf{n} $

\textbf{Reparing} (Fine-tuning) the model $\tilde{\mathbf{w}}^{-U}_{E, B}$ on the remaining dataset $S\backslash U= \{z_r\}_{r=1}^{n-m} $:\\
\For{$e  = 0,..., E_r$}{
\For{$b = 0, 1 \cdots, B $}{
Compute gradient: $\mathbf{g} \! \leftarrow \! \frac{1}{|\mathcal{B}_{e,b}|}\sum_{r \in \mathcal{B}_{e,b}} \!\!\! \nabla \ell \left(\mathbf{w}_{e, b};z_r\right) $,\\
Gradient descent: $\mathbf{w}_{e, b+1} \leftarrow \mathbf{w}_{e, b}-{\eta_{e,b}} {\mathbf{g}} $
}
}

 \textbf{Pre-computing} statistics $\mathcal{T}(\mathcal{S})=\{\mathbf{a}_{E,B}^{-u_r}\}_{r=1}^{n-m}$  of the remaining dataset $S\backslash U= \{z_r\}_{r=1}^{n-m} $:\\
\For{ $r = 1,2 \cdots, {n-m} $}{
Recursive computation by using HVP based on \cref{alg:2}:\\
$ \mathbf{a}_{E+E_r,B}^{-u_r} \leftarrow   \sum_{e=0}^{E_r}\mathbf{M}_{e,b(u_r)} \cdot \nabla \ell (\mathbf{w}_{e,b(u_r))};u )  + \hat{\mathbf{H}}_{E_r,B-1 \rightarrow 0,b(u_r)+1 }  \cdot \mathbf{a}_{E,B}^{-u_r} $ \\
where: $ \mathbf{M}_{e,b(u_r)}= \frac{\eta_{e,b(u_r)}}{|\mathcal{B}_{e,b(u_r)}|} \hat{\mathbf{H}}_{E,B-1 \rightarrow e,b(u_r)+1 } $
}

\end{algorithm}


\newpage

\section{Proofs}
\label{Appendix: proof}

\subsection{Detailed Analysis of \cref{ASR}}
\label{Proof: Eq. (11)}
Before proceeding with the analysis of our method, we will first provide the analysis of previous works (\cite{DBLP:conf/nips/SuriyakumarW22,liu2023certified,DBLP:journals/corr/abs-2106-15093,chien2022efficient,liu2023certified}) and discuss the circumstances that lead to its failure.

We first assume that the empirical risk in \cref{2-1} is twice-differentiable and strictly convex in the parameter space $\mathcal{W}$. The original empirical risk minimizer can be expressed as
\begin{equation}
 \hat{\mathbf{w}} := \argmin_{\mathbf{w} \in \mathcal{W}}\quad F_{\mathcal{S}} = \argmin_{\mathbf{w} \in \mathcal{W}}\quad\frac{1}{n}\sum_{i=1}^n  \ell(\mathbf{w}; z_i).
\end{equation}
We reweight the sample $u$ in the original empirical risk with a weighting factor $\omega$,
\begin{equation}
\label{Proof: IF-1}
\begin{aligned}
     \hat{\mathbf{w}}^{-u} := \argmin_{\mathbf{w} \in \mathcal{W}} &\quad \frac{1}{n}\sum_{i=1}^n  \ell(\mathbf{w}; z_i)+ \omega \ell(\mathbf{w}; u),\\ &\text{where}\quad \omega  = -\frac{1}{n}.
\end{aligned}
\end{equation}
When weighting factor $\omega = -\frac{1}{n}$, \cref{Proof: IF-1} is equivalent to the empirical risk minimizer without samle $u$, i.e. removing a training point $u$ is similar to up-weighting its corresponding weight by $\omega = -\frac{1}{n}$.
 Following the classical result of \cite{weisberg1982residuals}, \cite{DBLP:conf/icml/KohL17} aimed to approximate $\hat{\mathbf{w}}^{-u}$ by the first-order Taylor expansion around the optimal learned model $\hat{\mathbf{w}}$, given by,
 \begin{equation}
   \hat{\mathbf{w}}^{-u} - \hat{\mathbf{w}} \approx  -\frac{1}{n} \left. \frac{d \hat{\mathbf{w}}^{-u}}{d \omega}\right|_{\omega=0}   =  \left(\frac{1}{n} \sum_{i=1}^n  \nabla^2 \ell (\hat{\mathbf{w}}; z_i)\right)^{-1} \nabla \ell (\hat{\mathbf{w}}; u).
 \end{equation}

However, we note that these works rely on several key assumptions: the convexity of the objective function to ensure the invertibility of the Hessian matrix, and the optimality of the learned model. These assumptions are often stringent and may not be satisfied in practice, particularly in modern machine learning scenarios where the objective function may be non-convex and the empirical risk minimizer may not be attainable. In addition, in over-parameterized models, the computation of the Hessian matrix also requires a significant memory overhead.


In fact, aside from the optimal model, the training process contains more information, and our method aims to utilize the information from the training process to address both issues simultaneously.
Here, we provide a more detailed explanation for \cref{ASR}. Specifically, we first consider the impact of sample $u$ during the $e$-th epoch, and then extend the analysis to all epochs. 
It should be noted that a necessary setting for our method is that the retraining and learning process have the same initialization, model architecture, and training process. 
\begin{proof}[\textbf{Analysis of \cref{ASR}}] 
  For ease of understanding, each sample appears only in one (mini-)batch per epoch. The update rule for the model parameters is typically given by,
\begin{equation}
\mathbf{w}_{E,B+1} \leftarrow \mathbf{w}_{E,B}-\frac{\eta_{E,B}}{|\mathcal{B}_{E,B}|} \sum_{i \in \mathcal{B}_{E,B}} \nabla \ell \left(\mathbf{w}_{E,B};z_i\right).
\end{equation}
Without loss of generality, for the $e$-th epoch, we define the sample $u$ to be sampled in the $e$-th epoch's \(b(u)\)-th update. Reweighting the sample $u$ \textbf{in the each update process (rather than the empirical risk)} with a weighting factor $\omega$, \cref{2-2} and \cref{2-3} can be represented by
\begin{equation}
    \label{Equation: proof theorem 3.1-1}
\mathbf{w}_{E, B+1} \leftarrow \mathbf{w}_{E, B}-\frac{\eta_{E,B}}{|\mathcal{B}_{E,B}|} \sum_{i \in \mathcal{B}_{E,B}} \left( \nabla \ell \left(\mathbf{w}_{E,B};z_i\right) + \1_\mathrm{b=b(u)} \omega  \nabla \ell \left(\mathbf{w}_{E,B};u\right) \right).
\end{equation}
where $\1_{\mathrm{b=b(u)}}$ is an indicator function that equals 1 if the condition $b=b(u)$ is true (that is, sample $u$ is to be involved (sampled) in the \(b(u)\)-th update of the $e$-th epoch), and 0 otherwise. We define that when $\omega = 0$, \cref{Equation: proof theorem 3.1-1} represents the original learning update, whereas $\omega = -1$ represents the update after forgetting training sample $u$. To obtain the difference between the learning model and retraining model, we calculate the derivative of $\mathbf{w}_{E, B+1}$ w.r.t the weighting factor at $\omega = 0$, given by,
\begin{equation}
    \mathbf{w}_{E, B+1}^{-u}-\mathbf{w}_{E, B+1} = \left.\mathbf{w}_{E, B+1}\right|_{\omega=-1}-\left.\mathbf{w}_{E, B+1}\right|_{\omega=0} \approx  - \left.\frac{d \mathbf{w}_{E,B+1}}{d \omega}\right|_{\omega=0}.
\end{equation}
Based on the chain rule, we have,
\begin{equation}
\label{Equation: proof theorem 3.1-2}
    \left.\frac{d {\mathbf{w}}_{E,B+1}}{d \omega}\right|_{\omega=0} =  \left.\frac{d \mathbf{w}_{E,B+1}}{d \mathbf{w}_{E,B}} \frac{d \mathbf{w}_{E,B}}{d \omega} \right|_{\omega=0}.
\end{equation}
 Let $\mathbf{H}_{e,b} =  {1}/{|\mathcal{B}_{e,b}|} \sum_{i \in \mathcal{B}_{e,b}  } \nabla^2 \ell (\mathbf{w}_{e,b};z_i)$ denote the Hessian matrix of the loss function. For \cref{Equation: proof theorem 3.1-1}, we have that $ \frac{d \mathbf{w}_{E,B+1}}{d \mathbf{w}_{E,B}} = \mathbf{I}- {\eta_{E,B}}\mathbf{H}_{E,B}$. Substituting into \cref{Equation: proof theorem 3.1-2},
\begin{equation}
    \left.\frac{d {\mathbf{w}}_{E,B+1}}{d \omega}\right|_{\omega=0} =  \left. \left(\mathbf{I}- {\eta_{E,B}}\mathbf{H}_{E,B} \right) \frac{d \mathbf{w}_{E,B}}{d \omega} \right|_{\omega=0},
\end{equation}
which is equivalent to \cref{3-3} in the main text (\textit{Case} 1), that is,
\begin{equation}
\mathbf{w}_{e,b+1}^{-u} -\mathbf{w}_{e,b+1} 
\approx 
(\mathbf{I}- {\eta_{e,b}}\mathbf{H}_{e,b})(\mathbf{w}_{e,b}^{-u}-\mathbf{w}_{e,b}).
\end{equation}
Recursively applying the chain rule until $\1_{\mathrm{b=b(u)}}=1$ during $E$-th epoch,
\begin{equation}
\label{Equation: proof theorem 3.1-3}
    \left.\frac{d {\mathbf{w}}_{E,B+1}}{d \omega}\right|_{\omega=0} =  \prod_{b=B}^{b(u)+1} \left. \left(\mathbf{I}- {\eta_{E,b}}\mathbf{H}_{E,b} \right)  \frac{d \mathbf{w}_{E, b(u)+1}}{d \omega} \right|_{\omega=0}.
\end{equation}
When the indicator function $\1_\mathrm{b=b(u)} = 1$, for \cref{Equation: proof theorem 3.1-1}, we obtain,
\begin{equation}
\label{Equation: proof theorem 3.1-4}
  \left.  \frac{d \mathbf{w}_{E, b(u)+1}}{d \omega} \right|_{\omega=0} = \left.  \frac{d \mathbf{w}_{E, b(u)}}{d \omega} \right|_{\omega=0} - \eta_{E,b(u)} \mathbf{H}_{E,b(u)} \left.  \frac{d \mathbf{w}_{E, b(u)}}{d \omega} \right|_{\omega=0} - \frac{\eta_{E,b}}{|\mathcal{B}_{E,b}|} \nabla \ell \left(\mathbf{w}_{E, b(u)};u\right),
\end{equation}
which is equivalent to \cref{3-3.1} in the main text (\textit{Case} 2), that is,
\begin{equation}
 \! \mathbf{w}_{e,b(u)+1}^{-u}\! \! -\! \mathbf{w}_{e,b(u)+1}  \! \approx \! 
 (\mathbf{I} \! - \! {\eta_{e,b(u)}} \mathbf{H}_{e,b(u)})(\mathbf{w}_{e,b(u)}^{-u} \!- \! \mathbf{w}_{e,b(u)}) \!+ \!\! 
 \frac{\eta_{e,b(u)}}{|\mathcal{B}_{e,b(u)}|} \nabla \ell (\mathbf{w}_{e,b(u)};u).
\end{equation}
Substituting \cref{Equation: proof theorem 3.1-4} into \cref{Equation: proof theorem 3.1-3},
\begin{equation}
     \left.\frac{d {\mathbf{w}}_{E,B+1}}{d \omega}\right|_{\omega=0}\!\! \! =  \prod_{b=B}^{b(u)} \left. \left(\mathbf{I}- {\eta_{E,b}}\mathbf{H}_{E,b} \right)  \frac{d \mathbf{w}_{E, b(u)}}{d \omega} \right|_{\omega=0}\!\! \! - \frac{\eta_{e,b}}{|\mathcal{B}_{e,b}|} \prod_{b=B}^{b(u)+1} \left(\mathbf{I}- {\eta_{E,b}}\mathbf{H}_{E,b} \right) \nabla \ell \left(\mathbf{w}_{E, b(u)};u\right).
\end{equation}
Therefore, the impact of sample $u$ on the model update trajectory during the $E$-th round is,
\begin{equation}
\label{Equation: proof theorem 3.1-5}
\begin{aligned}
        \mathbf{w}_{E,B}^{-u} -\mathbf{w}_{E,B}   & \approx   
 \prod_{b=B-1}^{0}(\mathbf{I}- {\eta_{E,b}}\mathbf{H}_{E,b}) (\mathbf{w}_{E,0}^{-u} -\mathbf{w}_{E,0})
\\& + 
\prod_{b=B-1}^{b(u)+1}(\mathbf{I}- {\eta_{E,b}}\mathbf{H}_{E,b}) \frac{\eta_{E,b(u)}}{|\mathcal{B}_{E,b(u)}|} \nabla \ell (\mathbf{w}_{E,b(u)};u).
\end{aligned}
\end{equation}
Since initially the retraining model \( \mathbf{w}_{0,0}^{-u} \) and the learning model \( \mathbf{w}_{0,0} \) are the same, the first term of \cref{Equation: proof theorem 3.1-5} will recursively reduce to the initial model and will equal 0, and as sample $u$ participates in updates once per epoch, the second term is the sum over $E$ epochs.\footnote{Note that we assume sample $u$ participates in the gradient calculation exactly once per epoch. Therefore, in \cref{Equation: proof theorem 3.1-6}, the number of summations $E$ represents that sample $u$ has participated in $E$ gradient calculations. In general, due to the randomness of sampling, sample $u$ may be sampled multiple times or not at all within a single epoch. Nevertheless, we can derive the results for these situations in the same manner. To facilitate understanding and draw relevant conclusions, all results in this work are based on the simplified assumption.}
Apply the above process recursively to complete the proof and, we obtain the approximator for sample    $u$.
\begin{equation}
\label{Equation: proof theorem 3.1-6}
    {  \mathbf{a}_{E,B}^{-u} := \sum_{e=0}^{E}\mathbf{M}_{e,b(u)} \cdot \nabla \ell (\mathbf{w}_{e,b(u)};u )   , \ \text{where}\  \mathbf{M}_{e,b(u)}= \frac{\eta_{e,b(u)}}{|\mathcal{B}_{e,b(u)}|} \hat{\mathbf{H}}_{E,B-1 \rightarrow e,b(u)+1 }}. 
\end{equation}
{We define $\hat{\mathbf{H}}_{E,B-1 \rightarrow e,b(u)+1 }\!=\! (\mathbf{I}\!-\! {\eta_{E,B-1}}\mathbf{H}_{E,B-1})...(\mathbf{I}\!-\! {\eta_{e,b(u)+1}}\mathbf{H}_{e,b(u)+1})$, which means that the multiplication should begin with recent update and continue backward through previous updates.}

\end{proof}

\newpage
\subsection{Detailed Proof of \cref{Theorem: sequential deletion}}
\label{Proof: Theorem 3.1}
Now, we demonstrate that in the case of continuous arrival of deletion requests, our algorithm is capable of streaming removing samples. Intuitively, this property can be mainly explained by Taylor's linear expansion.
For ease of exposition, we simplify our proof to a scenario with only two samples, $u_1$ and $u_2$. That is, our goal is to prove that when $u_1$ initiates a deletion request and the algorithm is executed, the arrival of a deletion request from $u_2$ after $u_1$ (\textit{Streaming Deletion}), is fully equivalent to the simultaneous execution of deletion requests from both $u_1$ and $u_2$ (\textit{Batch Deletion}). 

Without loss of generality, we assume that $b(u_1) < b(u_2)$. Based on \cref{ASR}, the approximators for $u_1$ and $u_2$ are denoted as $\mathbf{a}_{E,B}^{-u_1}$ and $\mathbf{a}_{E,B}^{-u_2}$, respectively, as follows:
\begin{equation}
\label{Equation: Proof of Corollary-1}
\begin{aligned}
       \mathbf{a}_{E,B}^{-u_1} &:= \sum_{e=0}^{E}\mathbf{M}_{e,b(u_1)} \cdot \nabla \ell (\mathbf{w}_{e,b(u_1)};u_1 )   , \ \text{where}\  \mathbf{M}_{e,b(u_1)}= \frac{\eta_{e,b(u_1)}}{|\mathcal{B}_{e,b(u_1)}|} \hat{\mathbf{H}}_{E,B-1 \rightarrow e,b(u_1)+1 }, \\
        \mathbf{a}_{E,B}^{-u_2} &:= \sum_{e=0}^{E}\mathbf{M}_{e,b(u_2)} \cdot \nabla \ell (\mathbf{w}_{e,b(u_2)};u_2 )   , \ \text{where}\  \mathbf{M}_{e,b(u_2)}= \frac{\eta_{e,b(u_2)}}{|\mathcal{B}_{e,b(u_2)}|} \hat{\mathbf{H}}_{E,B-1 \rightarrow e,b(u_2)+1 }
\end{aligned}
\end{equation}
When we simultaneously delete $u_1$ and $u_2$, the difference is,
\begin{equation}
    \begin{aligned}
\mathbf{w}_{e,b+1}^{-u_2-u_1} -\mathbf{w}_{e, b+1} \leftarrow \mathbf{w}_{e,b}^{-u_2-u_1}-\mathbf{w}_{e, b}- \frac{\eta_{e,b}}{|\mathcal{B}_{e,b}|} \sum_{i \in \mathcal{B}_{e,b}} \left( \nabla \ell (\mathbf{w}_{e,b}^{-u_2-u_1};z_i)  - \nabla \ell (\mathbf{w}_{e, b};z_i) \right).
    \end{aligned}
\end{equation}
Consider the Taylor expansion during the $E$-th epoch, we have,
\begin{equation}
    \begin{aligned}    
\mathbf{w}_{E,b+1}^{-u_2-u_1} -\mathbf{w}_{E,b+1}  
\approx
(\mathbf{I}- \frac{\eta_{E,b}}{|\mathcal{B}_{E,b}|}\mathbf{H}_{E,b})(\mathbf{w}_{E,b}^{-u_2-u_1}-\mathbf{w}_{E,b}).
    \end{aligned}
\end{equation}
When $u_1$ and $u_2$ are removed respectively, we have the following approximation:
\begin{equation}
    \begin{aligned}    
    \mathbf{w}_{E,b(u_2)+1}^{-u_2-u_1}  - \mathbf{w}_{E,b(u_2)+1} 
   \approx  & \  
(\mathbf{I} -   \frac{\eta_{E,b(u_2)}}{|\mathcal{B}_{E,b(u_2)}|}  \mathbf{H}_{E,b(u_2)})(\mathbf{w}_{E,b(u_2)}^{-u_2-u_1}  -  \mathbf{w}_{E,b(u_2)})
\\&+ \underbrace{\frac{\eta_{E,b(u_2)}}{|\mathcal{B}_{E,b(u_2)}|} \nabla \ell (\mathbf{w}_{E,b(u_2)};u_2)}_{\text{Approximate impact of $u_2$ in $e$-th epoch }},\\
    \mathbf{w}_{E,b(u_1)+1}^{-u_2-u_1}  - \mathbf{w}_{E,b(u_1)+1} 
    \approx & \  
(\mathbf{I} -   \frac{\eta_{E,b(u_1)}}{|\mathcal{B}_{E,b(u_1)}|}  \mathbf{H}_{E,b(u_1)})(\mathbf{w}_{E,b(u_1)}^{-u_2-u_1}  -  \mathbf{w}_{E,b(u_1)})
\\&+ \underbrace{\frac{\eta_{E,b(u_2)}}{|\mathcal{B}_{E,b(u_2)}|} \nabla \ell (\mathbf{w}_{E,b(u_2)};u_2)}_{\text{Approximate impact of $u_2$ in $e$-th epoch }} + \underbrace{\frac{\eta_{E,b(u_1)}}{|\mathcal{B}_{E,b(u_1)}|} \nabla \ell (\mathbf{w}_{E,b(u_1)};u_1)}_{\text{Approximate impact of $u_1$ in $e$-th epoch }}.
    \end{aligned}
\end{equation}
Therefore, in the $E$-th epoch, we have the following affine stochastic recursion,
\begin{equation}
\begin{aligned}
        & \mathbf{w}_{E,B}^{-u_2-u_1} -\mathbf{w}_{E,B}  \approx 
\prod_{b=B-1}^{0}(\mathbf{I}- \frac{\eta_{E,b}}{|\mathcal{B}_{E,b}|}\mathbf{H}_{E,b})(\mathbf{w}_{E,0}^{-u_2-u_1}-\mathbf{w}_{E,0})+ 
\\& 
\!\!\!\prod_{b=B-1}^{b(u_1)+1}\!\!\!\! (\mathbf{I}\!-\!\frac{\eta_{E,b}}{|\mathcal{B}_{E,b}|}\mathbf{H}_{E,b}) \frac{\eta_{E,b(u_1)}}{|\mathcal{B}_{E,b(u_1)}|} \! \nabla \ell (\mathbf{w}_{E,b(u_1)};\!u_1) \!\!+\!\!\!\!\!\!
\prod_{b=B-1}^{b(u_2)+1}\!\!\!\! (\mathbf{I}\!-\! \frac{\eta_{E,b}}{|\mathcal{B}_{E,b}|}\mathbf{H}_{E,b}) \frac{\eta_{E,b(u_2)}}{|\mathcal{B}_{E,b(u_2)}|} \! \nabla \ell (\mathbf{w}_{E,b(u_2)};\!u_2).
\end{aligned}
\end{equation}
Apply it recursively to complete the proof, and we can simultaneously compute the impact of $u_1$ and $u_2$ on training for all epochs.
\begin{equation}
\label{Equation: Proof of Corollary-2}
\begin{aligned}
          \mathbf{w}_{E,B}^{-u_2-u_1} -\mathbf{w}_{E,B}  
\approx     \mathbf{a}_{E,B}^{-u} := \sum_{e=0}^{E}\mathbf{M}_{e,b(u_1)} \cdot \nabla \ell (\mathbf{w}_{e,b(u_1)};u_1 ) + \sum_{e=0}^{E}\mathbf{M}_{e,b(u_2)} \cdot \nabla \ell (\mathbf{w}_{e,b(u_2)};u_2 ) +. 
\end{aligned}
\end{equation}
According to \cref{Equation: Proof of Corollary-1,Equation: Proof of Corollary-2}, we have completed the proof of $\mathbf{a}_{E,B}^{-u_2-u_1}= \mathbf{a}_{E,B}^{-u_1}+ \mathbf{a}_{E,B}^{-u_2}$.

\subsection{Detailed Proof of \cref{Lemma 1: Gradient Clipping}}
\label{Proof: Lemma 4.1}
\begin{proof}[\textbf{Proof of \cref{Lemma 1: Gradient Clipping}}]
    Now we begin the proof that the remainder term $o(\mathbf{w}_{e,b}^{-u} -\mathbf{w}_{e,b})$ in the $e$-th epoch, \(b\)-th round is bounded. Specifically, recalling \cref{2-3}, we have,
\begin{equation}
    \begin{aligned}
            \mathbf{w}_{e,b+1}^{-u}& -\mathbf{w}_{e,b+1} \approx  \mathbf{w}_{e,b}^{-u}-\mathbf{w}_{e,b}
-\frac{\eta_{e,b}}{|\mathcal{B}_{e,b}|} \sum_{i \in \mathcal{B}_{e,b}  } \left(\nabla \ell (\mathbf{w}_{e,b}^{-u};z_i)-\nabla \ell (\mathbf{w}_{e,b};z_i) \right),
\\
  \mathbf{w}_{e,b(u)+1}^{-u} &  - \mathbf{w}_{e, b(u)+1}  \approx  \mathbf{w}_{e,b(u)}^{-u}  -\mathbf{w}_{e, b(u)} 
  \\& -  \frac{\eta_{e,b(u)}}{|\mathcal{B}_{e,b(u)}|}  \sum_{i \in \mathcal{B}_{e,b(u)} \backslash \{u\}  }        \left( \nabla \ell (\mathbf{w}_{e,b(u)}^{-u};z_i)  - \nabla \ell (\mathbf{w}_{e, b(u)};z_i)  \right) + \frac{\eta_{e,b(u)}}{|\mathcal{B}_{e,b(u)}|}\nabla \ell (\mathbf{w}_{e,b(u)};u).
    \end{aligned}
\end{equation}
For all $e,b$ and $z \in \mathcal{B}_{e,b}$, suppose that exists $G$ such that  $G \! =\! \max \Vert \nabla \ell (\mathbf{w}_{e,b};z) \Vert \! <\! \infty$.
\begin{equation}
\begin{aligned}
     \Vert   \mathbf{w}_{e,b}^{-u} -\mathbf{w}_{e,b} \Vert & \leq    \Vert   \mathbf{w}_{e,b-1}^{-u} -\mathbf{w}_{e,b-1} \Vert + 2 \eta_{e,b-1} G.
\end{aligned}
\end{equation}
Consider a simple geometrically step size decay strategy $\eta_{e,b}=q\eta_{e,b-1}$, where  $0<q <1$ is the decay rate, and let $\eta$ be the initial stepsize.\footnote{ \cite{DBLP:conf/icml/LiZO21} demonstrated that SGD with exponential stepsizes achieves (almost) optimal convergence rate for smooth non-convex functions, thus not affecting the convergence of learning process in our paper.} Recalling that $t=eB+b$, then we get
\begin{equation}
    \Vert   \mathbf{w}_{e,b}^{-u} -\mathbf{w}_{e,b} \Vert \leq    \Vert   \mathbf{w}_{e,b-1}^{-u} -\mathbf{w}_{e,b-1} \Vert + 2\eta G q^{t-1}.
\end{equation}
Note that the initial retraining model and learning model are identical. Therefore, based on the above recursion, we can obtain the geometric progression as follows,
\begin{equation}
\begin{aligned}
      \Vert   \mathbf{w}_{e,b}^{-u} -\mathbf{w}_{e,b} \Vert &\leq     2\eta G q^{t-1}+ 2\eta G q^{t-2}+...+ 2\eta G q^{0} \\
      &=2\eta G \sum_{k=0}^{t-1} q^k\\
      &=2\eta G  \frac{q^{t}-1}{q-1}.
\end{aligned}
\end{equation}
Therefore, we have completed the proof of \cref{Lemma 1: Gradient Clipping}.
\end{proof}
\subsection{Detailed Proof of  \cref{Theorem 2: approximation error}}
\label{Proof: Theorem 4.3}
\begin{proof}[\textbf{Proof of  \cref{Theorem 2: approximation error}}]
    Let us recall the approximator of \cref{ASR} in \cref{Section 3}, we have,
\begin{equation}
\mathbf{a}_{E,B}^{-u} := \sum_{e=0}^{E}\mathbf{M}_{e,b(u)} \cdot \nabla \ell (\mathbf{w}_{e,b(u)};u )   , \ \text{where}\  \mathbf{M}_{e,b(u)}= \frac{\eta_{e,b(u)}}{|\mathcal{B}_{e,b(u)}|} \hat{\mathbf{H}}_{E,B-1 \rightarrow e,b(u)+1 }. 
\end{equation}
For the $E$-th epoch' $B$-th update, we have the following Taylor approximation,
\begin{equation}
\begin{aligned}
      \mathbf{w}_{E,B}^{-u}-\mathbf{w}_{E,B}-\mathbf{a}_{E,B}^{-u} 
    =&   (\mathbf{I} - {\eta_{E,B-1}} \mathbf{H}_{E,B-1})(  \mathbf{w}_{E,B-1}^{-u}-\mathbf{w}_{E,B-1}) \\&+\eta_{E,B-1}o(\mathbf{w}_{E,B-1}^{-u}-\mathbf{w}_{E,B-1} )  -\mathbf{a}_{E,B}^{-u}.
\end{aligned}
\end{equation}
For convenience, we define ${\Delta\mathbf{w}^{-u}_{E,B}} = \mathbf{w}_{E,B}^{-u} - \mathbf{w}_{E,B}$, and $\hat{\mathbf{H}}_{E,B-1} =\mathbf{I} - {\eta_{E,B-1}} \mathbf{H}_{E,B-1}$. Here, we begin to analyze the error generated by each update,
\begin{equation}
\label{Equation: proof of theorem 4.3-1}
    \begin{aligned}
         \Delta\mathbf{w}^{-u}_{E,B}-\mathbf{a}_{E,B}^{-u} 
   & = \hat{\mathbf{H}}_{E,B-1} \Delta\mathbf{w}^{-u}_{E,B-1}+\eta_{E,B-1}o(\Delta\mathbf{w}^{-u}_{E,B-1} )  -\mathbf{a}_{E,B}^{-u},
    \end{aligned}
\end{equation}
Furthermore, we observe that during the $E$-th epoch, \cref{Equation: proof of theorem 4.3-1} can be obtained from affine stochastic recursion as follows,
\begin{equation}
\label{Equation: proof of theorem 4.3-1.1}
    \begin{aligned}
         \Vert  \Delta\mathbf{w}^{-u}_{E,B}-\mathbf{a}_{E,B}^{-u} \Vert 
         &\leq  \Vert  \prod_{b=0}^{B-1}  \hat{\mathbf{H}}_{E,b} \Delta\mathbf{w}^{-u}_{E,0} -\mathbf{a}_{E-1,B}^{-u}\Vert
         \\&+ \underbrace{
         \frac{\eta q^{T-B+b(u)}}{|\mathcal{B}_{E,b(u)}|}   \Vert  \prod_{b=b(u)+1}^{B-1} \hat{\mathbf{H}}_{E,b} \left( \nabla \ell (\mathbf{w}^{-u}_{E,b(u)};u)-\nabla \ell (\mathbf{w}_{E,b(u)};u) \right) \Vert }_\text{Approximation error term $C_2$ during the $E$-th epoch}
         \\
         + &\underbrace{\eta q^{T-1}   \Vert \Delta\mathbf{w}^{-u}_{E,B-1}  \Vert  + \eta q^{T-2}  \Vert  \hat{\mathbf{H}}_{E,B-1}\Delta\mathbf{w}^{-u}_{E,B-2}\Vert +...+\eta q^{T-B}  \Vert  \prod_{b=0}^{B-1}   \hat{\mathbf{H}}_{E,b} \Delta\mathbf{w}^{-u}_{E,0} \Vert.}_\text{Approximation error term $C_1$ during the $E$-th epoch}
    \end{aligned}
\end{equation}
 Through the above process, we obtain the result of the $E$-the epoch approximation. Now let's bound the last two terms, i.e., the approximation error term $C_1$ and $C_2$ during the $E$-th epoch.

Recalling \cref{Lemma 1: Gradient Clipping}, we have that  for all $t=eB+b$,
\begin{equation}
\begin{aligned}
   \Vert  \Delta\mathbf{w}^{-u}_{e,b}  \Vert = \Vert  \mathbf{w}_{e,b}^{-U}-\mathbf{w}_{e,b} \Vert \leq
     2\eta G  \frac{1-q^t }{1-q}.
\end{aligned}
\end{equation}
Recalling \cref{Lemma 2}, we have that for all $e,b$ and $\rho$   is  the spectral radius of  $\mathbf{I} - {\eta_{e,b}} \mathbf{H}_{e,b}$,
\begin{equation}
\begin{aligned}
  &\Vert \hat{\mathbf{H}}_{e,b} \Delta\mathbf{w}^{-u}_{e,b} \Vert = \Vert (\mathbf{I} - {\eta_{e,b}} \mathbf{H}_{e,b}) \Delta\mathbf{w}^{-u}_{e,b} \Vert \leq \rho \Vert \Delta\mathbf{w}^{-u}_{e,b} \Vert \leq
    \rho \cdot 2\eta G  \frac{1-q^t }{1-q},
\end{aligned}
\end{equation}
 Therefore, we can bound the approximation error terms $C_1$ and $C_2$ during the $E$-th epoch as follows:
 
\textbf{(1)} we first bound the first approximation error term $C_1$ during the $E$-th epoch. For clarity, We define $T$ as the total number of steps of SGD updates performed when obtaining $\mathbf{a}_{E,B}$, where $T=EB+B$. 

\begin{equation}
    \begin{aligned}
       q^{T-1} &  \Vert \Delta\mathbf{w}^{-u}_{E,B-1}  \Vert + q^{T-2} \Vert  \hat{\mathbf{H}}_{E,B-1}\Delta\mathbf{w}^{-u}_{E,B-2}\Vert +...+ q^{T-B} \Vert  \prod_{b=0}^{B-1}  \hat{\mathbf{H}}_{E,b} \Delta\mathbf{w}^{-u}_{E,0} \Vert \\
         & \leq 
      2\eta G \frac{q^{T-1}-q^{2(T-1)} }{1-q} +  2\eta G \frac{q^{T-2}-q^{2(T-2)} }{1-q} \rho +...+ 2\eta G \frac{q^{T-B}-q^{2(T-B)} }{1-q} \rho ^{B-1}\\
      &=\frac{2\eta G }{1-q} \sum_{k=T-1}^{T-B} \rho ^{T-1-k} (q^{k}-q^{2k}).
    \end{aligned}
\end{equation}
\textbf{(2)} we then bound the second approximation error term $C_2$ during the $E$-th epoch. Suppose the stochastic gradient norm $\Vert \nabla \ell (\mathbf{w}_{e,b};u) \Vert$ is bounded by $G$, and we thus have the following upper bound,
\begin{equation}
    \begin{aligned}
 \frac{\eta q^{T-B+b(u)}}{|\mathcal{B}_{E,b(u)}|}  \Vert  \prod_{b=b(u)+1}^{B-1} \hat{\mathbf{H}}_{E,b} ( \nabla \ell (\mathbf{w}^{-u}_{E,b(u)};u)-\nabla \ell (\mathbf{w}_{E,b(u)};u)) \Vert \leq \frac{2 G \eta }{|\mathcal{B}|} q^{T-B+b(u)} \rho^{B-b(u)-1}.
    \end{aligned}
\end{equation}

Similarly, for each epoch, we have the aforementioned approximation error bound. Moreover, it is noted that when $e=0$, the initial models for the retraining process and the learning process are same. 

Therefore, we have the following overall approximate error across all epochs:
\begin{equation}
    \begin{aligned}
\Vert  \mathbf{w}_{E,B}^{-u}   - \bar{\mathbf{w}}_{E, B}^{-u}  \Vert &   \leq 
      \eta \left( q^{T-1} \Vert \Delta\mathbf{w}^{-u}_{E,B-1}  \Vert  + q^{T-2} \Vert  \hat{\mathbf{H}}_{E,B-1}\Delta\mathbf{w}^{-u}_{E,B-2}\Vert+...+    q^{0} \prod^E_{k=0}\prod_{b=0}^{B-1} \hat{\mathbf{H}}_{k,b}\Delta\mathbf{w}^{-u}_{0,0}
         \right)
         \\
        &\ \  
        +\frac{\eta q^{T-B+b(u)}}{|\mathcal{B}_{E,b(u)}|}  \Vert  \prod_{b=b(u)+1}^{B-1} \hat{\mathbf{H}}_{E,b} ( \nabla \ell (\mathbf{w}^{-u}_{E,b(u)};u)-\nabla \ell (\mathbf{w}_{E,b(u)};u)) \Vert + ...  \\
        &\ \ + \frac{\eta q^{b(u)}}{|\mathcal{B}_{0,b(u)}|}  \Vert \prod _{e=0}^E \prod_{b=b(u)+1}^{B-1} \hat{\mathbf{H}}_{E,b} ( \nabla \ell (\mathbf{w}^{-u}_{0,b(u)};u)-\nabla \ell (\mathbf{w}_{0,b(u)};u)) \Vert
        \\
          \leq  & \ \  2\eta^2 G  \frac{q^{T-1}-q^{2(T-1)}}{1-q}  +  2\eta^2 G \frac{q^{T-2}-q^{2(T-2)} }{1-q} \rho +...+ 2\eta^2C \frac{q^{0}-q^{2\times(0)} }{1-q} \rho ^{T-1}
          \\
        &\ \ + \frac{2\eta G}{|\mathcal{B}|} q^{T-B+b(u)} \rho^{B-b(u)-1} +...+\frac{2 G \eta }{|\mathcal{B}|} q^{b(u)} \rho^{T-b(u)-1}
        \\
   =  & \ \   \frac{2\eta^2 G }{1-q}  \sum_{k=0}^{T-1} \rho ^{T-k-1} (q^{k}-q^{2k}) 
   + \frac{2\eta G }{|\mathcal{B}|} \sum_{e=0}^{E} \rho^{T-eB-b(u)-1} q^{eB+b(u)}.
    \end{aligned}
\end{equation}
Suppose $q<\rho$. Through the polynomial multiplies geometric progression, we obtain,
\begin{equation}
    \begin{aligned}
    \Vert  \mathbf{w}_{E,B}^{-u}   - \bar{\mathbf{w}}_{E, B}^{-u}  \Vert 
      &=\frac{2\eta^2 G }{1-q}\rho ^{T-1} \sum_{k=0}^{T-1}  \left((\frac{q}{\rho })^{k}-(\frac{q}{\rho }^{2})^{k}\right)+ \frac{2\eta G }{|\mathcal{B}|}\rho^{T-b(u)-1}q^{b(u)} \sum_{e=0}^E (\frac{q}{\rho})^{eB}\\
     &=\frac{2\eta^2 G }{1-q} \left(\frac{\rho ^T-q^T}{\rho -q }-\frac{\rho ^T-q^{2T}}{\rho -q^2 }\right) + \frac{2\eta G }{|\mathcal{B}|} \frac{\rho^T-q^T}{\rho^B - q^B } \rho^{B-b(u)-1}q^{b(u)},
    \end{aligned}
\end{equation}
which yields an upper bound on the approximation error. For ease of comparison, we subsequently simplify the aforementioned result, as follows:
\begin{equation}
    \begin{aligned}
     \Vert  \mathbf{w}_{E,B}^{-u}   - \bar{\mathbf{w}}_{E, B}^{-u}  \Vert 
     &\leq \frac{2\eta^2 G }{1-q} \left(\frac{\rho ^T-q^{2T}}{\rho -q }-\frac{\rho ^T-q^{2T}}{\rho -q^2 }\right)+\frac{2\eta G }{|\mathcal{B}|} \frac{\rho^T-q^{2T}}{\rho^B - q^B }\rho^{B-1} \\
     &\leq 2 \eta G (\rho^T-q^{2T}) \left(\frac{\eta q}{(\rho-q)(\rho-q^2)}+\frac{1}{|\mathcal{B}|(\rho - q) } \right).
    \end{aligned}
\end{equation}
At this point, we have completed the proof of \cref{Theorem 2: approximation error} as follows,
\begin{equation}
   \Vert  \mathbf{w}_{E,B}^{-u}   - \bar{\mathbf{w}}_{E, B}^{-u}  \Vert \leq  \mathcal{O}(2\eta G\rho^T).
\end{equation}
If \cref{Theorem 2: approximation error} further satisfies \cref{Assume}, and we choose $\eta \leq {2}/{(M+\lambda)}$ to ensure gradient descent convergence as well as $\rho < 1$, we thus obtain a tight upper bound of $\mathcal{O}(\rho^{n})$ for each sample. 
\end{proof}
\subsection{Additional Definitions of \cref{Assume}}
\label{Subsection: Additional Notation}
We provide the following standard definitions of $\lambda$-strong convexity, $L$-Lipschitzness, $M$-Smoothness.
\begin{definition}[$\lambda$-Strong convexity]  The loss function $\ell(\mathbf{w}; z) $ is $\lambda$-strongly convex {with respect to $\mathbf{w}$}, i.e., there exists a constant $\lambda>0$ such that for any $z \in \mathcal{Z}$ and $\mathbf{w}_1,\ \mathbf{w}_2 \in \mathbb{R}^d$, it holds that
\begin{equation}
    \ell \left(\mathbf{w}_1; z\right) \geq \ell \left(\mathbf{w}_2; z\right)+\left\langle\nabla \ell \left(\mathbf{w}_2; z\right), \mathbf{w}_1-\mathbf{w}_2\right\rangle+\frac{\lambda}{2}\left\|\mathbf{w}_1-\mathbf{w}_2\right\|^2,
\end{equation}
\text{or equivalently, $ \left\| \nabla^2 \ell \left(\mathbf{w}; z\right) \right\| \geq \lambda $ for all $\mathbf{w} \in \mathbb{R}^d$}.
\end{definition}

\begin{definition}[$L$-Lipschitzness] The loss function $\ell(\mathbf{w}; z) $ is $L$-Lipschitz {with respect to $\mathbf{w}$}, i.e., there exists a constant $L>0$ such that for any $z \in \mathcal{Z}$ and $\mathbf{w}_1,\ \mathbf{w}_2 \in \mathbb{R}^d$, it holds that
        \begin{equation}
\left|\ell \left(\mathbf{w}_1; z\right)-\ell \left(\mathbf{w}_2; z\right)\right| \leq L\left\|\mathbf{w}_1-\mathbf{w}_2\right\|.
\end{equation}
\text{or equivalently, $\left\| \nabla \ell \left(\mathbf{w}; z\right) \right\| \leq L $ for all $\mathbf{w} \in \mathbb{R}^d$}.
\end{definition}

\begin{definition}[$M$-Smoothness] The loss function $\ell(\mathbf{w}; z) $ is $M$-smooth {with respect to $\mathbf{w}$}, i.e., there exists a constant $M>0$ such that for any $z \in \mathcal{Z}$ and $\mathbf{w}_1,\ \mathbf{w}_2 \in \mathbb{R}^d$, it holds that
        \begin{equation}
\begin{aligned}
    \left\| \nabla \ell \left(\mathbf{w}_1; z\right)- \nabla\ell \left(\mathbf{w}_2; z\right)\right\| \leq M\left\| \mathbf{w}_1-\mathbf{w}_2\right\|,
\end{aligned}
\end{equation}
\text{or equivalently, $\left\| \nabla^2 \ell \left(\mathbf{w}; z\right) \right\| \leq M $ for all $\mathbf{w} \in \mathbb{R}^d$}.
\end{definition}

\newpage

\subsection{Detailed Proof of \cref{Theorem: generalization error}}
\label{Proof: Theorem 4.5}
Before we commence with our proof, we present some necessary lemmata.
\begin{lemma}[\cite{DBLP:conf/colt/Shalev-ShwartzSSS09}]
\label{Proof of Theorem-1}
{Let $\mathcal{S}=\left\{z_i\right\}_{i=1}^n$ where $S \sim \mathcal{D}^n$.} Let $\mathbf{w}^*$ denote a minimizer of the population risk in \cref{population risk} and $\hat{\mathbf{w}}^{-U}$ denote a minimizer of the empirical risk without the knowledge of $U$ in \cref{2-1} which aims to minimize $F_{\mathcal{S} \backslash U}(\mathbf{w})$, where $|U|=m$. Let \cref{Assume} hold, we have,
    \begin{equation}
\mathbb{E}\left[F \left(\hat{\mathbf{w}}^{-U}\right)-F\left(\mathbf{w}^*\right)\right] \leq \frac{4 L^2}{\lambda (n-m)}.
    \end{equation}
\end{lemma}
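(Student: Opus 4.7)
The plan is to establish this via the classical uniform-algorithmic-stability route for strongly convex empirical risk minimization, essentially recapitulating the argument of Shalev-Shwartz, Shamir, Srebro and Sridharan. Since $\hat{\mathbf{w}}^{-U}$ is by assumption the exact ERM on the i.i.d. sample $\mathcal{S}\setminus U$ of size $n-m$, the claim reduces to an excess-risk guarantee for standard ERM on a sample of that size. The strategy is (i) prove $\beta$-uniform stability with $\beta=\mathcal{O}(L^2/(\lambda(n-m)))$, then (ii) convert stability into an excess-risk bound.

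For step (i), I would fix two datasets $S,S'$ of size $n-m$ differing in a single point and let $\hat{\mathbf{w}},\hat{\mathbf{w}}'$ be their respective empirical minimizers. Because $\ell(\cdot;z)$ is $\lambda$-strongly convex in $\mathbf{w}$ (Assumption~\ref{Assume}), so is each empirical objective, which combined with optimality of $\hat{\mathbf{w}}$ and $\hat{\mathbf{w}}'$ gives
\begin{equation}
F_S(\hat{\mathbf{w}}')-F_S(\hat{\mathbf{w}})\;\geq\;\tfrac{\lambda}{2}\|\hat{\mathbf{w}}-\hat{\mathbf{w}}'\|^2,\quad F_{S'}(\hat{\mathbf{w}})-F_{S'}(\hat{\mathbf{w}}')\;\geq\;\tfrac{\lambda}{2}\|\hat{\mathbf{w}}-\hat{\mathbf{w}}'\|^2.
\end{equation}
Adding these and observing that $F_S-F_{S'}$ only contains the contribution of the one differing sample (scaled by $1/(n-m)$), the $L$-Lipschitz property bounds the left-hand side by $\tfrac{2L}{n-m}\|\hat{\mathbf{w}}-\hat{\mathbf{w}}'\|$. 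This yields the displacement bound $\|\hat{\mathbf{w}}-\hat{\mathbf{w}}'\|\leq \tfrac{2L}{\lambda(n-m)}$, and a second Lipschitz application gives the uniform-stability constant $\beta\leq \tfrac{2L^2}{\lambda(n-m)}$.

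For step (ii), I would invoke the Bousquet--Elisseeff symmetrization identity to conclude $\mathbb{E}[F(\hat{\mathbf{w}}^{-U})-F_{\mathcal{S}\setminus U}(\hat{\mathbf{w}}^{-U})]\leq \beta$, and then combine with the trivial optimality inequality $F_{\mathcal{S}\setminus U}(\hat{\mathbf{w}}^{-U})\leq F_{\mathcal{S}\setminus U}(\mathbf{w}^*)$ and the unbiasedness $\mathbb{E}[F_{\mathcal{S}\setminus U}(\mathbf{w}^*)]=F(\mathbf{w}^*)$. Adding and taking expectations over $\mathcal{S}\setminus U\sim\mathcal{D}^{n-m}$ telescopes to $\mathbb{E}[F(\hat{\mathbf{w}}^{-U})-F(\mathbf{w}^*)]\leq \beta$; a factor-of-two slack (either in the stability constant from using a looser three-point inequality, or in the generalization-from-stability lemma) accounts for the stated $\tfrac{4L^2}{\lambda(n-m)}$ rather than the tighter $\tfrac{2L^2}{\lambda(n-m)}$.

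The main obstacle is the uniform-stability step, specifically combining the two strong-convexity inequalities so that only the single differing-sample contribution survives and then tightening via Lipschitzness; once the $\mathcal{O}(L/(\lambda(n-m)))$ displacement bound is in hand, everything else is routine. One subtlety to be careful about is that the result concerns the \emph{exact} minimizer $\hat{\mathbf{w}}^{-U}$ of the empirical objective on $\mathcal{S}\setminus U$ rather than the iterate $\mathbf{w}_{E,B}^{-U}$ produced by finite-horizon SGD; coupling these two (needed later when deriving \cref{Theorem: generalization error} via a triangle inequality against $\tilde{\mathbf{w}}_{E,B}^{-U}$) is what the $\rho^{nE/|\mathcal{B}|}$ optimization term will handle, and is deliberately out of scope for this lemma.
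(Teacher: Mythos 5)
The paper does not prove this lemma at all---it simply defers to the cited reference \cite{DBLP:conf/colt/Shalev-ShwartzSSS09}---and your reconstruction is precisely the standard stability argument from that reference (replace-one strong-convexity displacement bound, Lipschitz conversion to uniform stability, then the stability-to-excess-risk decomposition), carried out correctly, including the honest accounting of where the factor $4$ rather than $2$ arises. Nothing to add.
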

\begin{proof}[Proof of \cref{Proof of Theorem-1}]
    A comprehensive proof of \cref{Proof of Theorem-1} is available in \cite{DBLP:conf/colt/Shalev-ShwartzSSS09}.
\end{proof}

For brevity and to save space, we present our generalization guarantee only for GD. Nevertheless, the results for SGD can be analogously obtained by similar process.
\begin{lemma}
\label{Proof of Theorem-2}
For any $z \in \mathcal{Z}$, the loss function $\ell(\mathbf{w}; z)$ is $\lambda$-strongly convex, $L$-Lipschitz and $M$-smooth. Let $\hat{\mathbf{w}}^{-U}=\operatorname{argmin}_{\mathbf{w}} F_{\mathcal{S}\backslash U}(\mathbf{w})$ without the knowledge of $U$ in problem \cref{2-1}. For all $e,b$ and vector $\mathbf{v} \in \mathbb{R}^d$, the spectral radius of $ \mathbf{I} - {\eta_{e,b}} \mathbf{H}_{e,b}$ is defined as $\rho$ which is largest absolute eigenvalue of these matrices. We have that after $T$ steps of GD with initial stepsize $\eta \leq \frac{2 }{\lambda+M}$,
\begin{equation}
    \Vert \mathbf{w}_T^{-U} -\hat{\mathbf{w}}^{-U}\Vert \leq \rho^T \frac{2M}{\lambda},\ \text{where } \rho=\max \left\{\left|1-\eta \lambda \right|,\left|1-\eta M\right|\right\} < 1.
\end{equation}
\end{lemma}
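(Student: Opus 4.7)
The statement is the classical linear-convergence bound for gradient descent on a strongly convex, smooth objective, instantiated for the leave-$U$-out empirical risk $F_{\mathcal{S}\backslash U}$. My plan is to prove a one-step contraction $\|\mathbf{w}_{t+1}^{-U} - \hat{\mathbf{w}}^{-U}\| \le \rho\, \|\mathbf{w}_t^{-U} - \hat{\mathbf{w}}^{-U}\|$ and then iterate, after which I bound the initial distance $\|\mathbf{w}_0^{-U} - \hat{\mathbf{w}}^{-U}\|$ using Lipschitzness and strong convexity to produce the constant $2M/\lambda$.

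\textbf{Step 1 (one-step contraction).} Since $\hat{\mathbf{w}}^{-U}$ is the minimizer of $F_{\mathcal{S}\backslash U}$, first-order optimality gives $\nabla F_{\mathcal{S}\backslash U}(\hat{\mathbf{w}}^{-U}) = \mathbf{0}$. Writing the GD update as $\mathbf{w}_{t+1}^{-U} = \mathbf{w}_{t}^{-U} - \eta\, \nabla F_{\mathcal{S}\backslash U}(\mathbf{w}_{t}^{-U})$, I subtract $\hat{\mathbf{w}}^{-U}$ from both sides and use the fundamental theorem of calculus,
\begin{equation*}
\nabla F_{\mathcal{S}\backslash U}(\mathbf{w}_{t}^{-U}) - \nabla F_{\mathcal{S}\backslash U}(\hat{\mathbf{w}}^{-U}) = \bar{\mathbf{H}}_t\, (\mathbf{w}_{t}^{-U} - \hat{\mathbf{w}}^{-U}),
\end{equation*}
where $\bar{\mathbf{H}}_t := \int_0^1 \nabla^2 F_{\mathcal{S}\backslash U}\bigl(\hat{\mathbf{w}}^{-U} + s(\mathbf{w}_t^{-U} - \hat{\mathbf{w}}^{-U})\bigr)\, ds$. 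Under Assumption~\ref{Assume}, $\lambda \mathbf{I} \preceq \bar{\mathbf{H}}_t \preceq M \mathbf{I}$, so the eigenvalues of $\mathbf{I} - \eta\, \bar{\mathbf{H}}_t$ lie in $[1-\eta M,\, 1-\eta \lambda]$. The condition $\eta \le 2/(M+\lambda)$ ensures that both endpoints have absolute value at most $\rho := \max\{|1-\eta\lambda|, |1-\eta M|\} < 1$, so $\|\mathbf{I} - \eta\, \bar{\mathbf{H}}_t\|_2 \le \rho$, which yields the desired contraction.

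\textbf{Step 2 (iteration and initial-distance bound).} Applying the contraction $T$ times gives $\|\mathbf{w}_T^{-U} - \hat{\mathbf{w}}^{-U}\| \le \rho^T \|\mathbf{w}_0^{-U} - \hat{\mathbf{w}}^{-U}\|$. To bound the initial gap I use the first-order optimality of $\hat{\mathbf{w}}^{-U}$ together with strong convexity: for any $\mathbf{w}$, $\lambda\, \|\mathbf{w} - \hat{\mathbf{w}}^{-U}\| \le \|\nabla F_{\mathcal{S}\backslash U}(\mathbf{w}) - \nabla F_{\mathcal{S}\backslash U}(\hat{\mathbf{w}}^{-U})\| = \|\nabla F_{\mathcal{S}\backslash U}(\mathbf{w})\|$. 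Applied at $\mathbf{w}_0^{-U}$ and combined with $L$-Lipschitzness (i.e.\ $\|\nabla F_{\mathcal{S}\backslash U}\| \le L$), this gives $\|\mathbf{w}_0^{-U} - \hat{\mathbf{w}}^{-U}\| \le L/\lambda$; since $L \le 2M$ in the constants chosen in the paper (or by invoking the natural upper bound $\|\nabla F_{\mathcal{S}\backslash U}(\mathbf{w}_0^{-U})\| \le M\,\|\mathbf{w}_0^{-U}-\hat{\mathbf{w}}^{-U}\|$ combined with a coarse scaling), the factor is absorbed into $2M/\lambda$.

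\textbf{Expected obstacles.} The mechanical part — contraction via the averaged-Hessian trick and the eigenvalue argument — is routine once $\eta \le 2/(M+\lambda)$ is invoked. The subtle point is producing exactly the stated constant $2M/\lambda$ in front of $\rho^T$: the bound $L/\lambda$ follows immediately from strong convexity plus Lipschitzness, so the factor $2M/\lambda$ either uses a slightly looser bookkeeping (absorbing $L \le 2M$ by the relation between the Lipschitz and smoothness constants for quadratic-like losses) or reflects a specific assumption about the initialization; I would point this out and, if necessary, replace $2M/\lambda$ by the sharper $L/\lambda$ with no effect on the downstream generalization analysis in Theorem~\ref{Theorem: generalization error}.
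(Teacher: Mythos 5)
Your proof is correct and follows essentially the same route as the paper's: a one-step contraction $\|\mathbf{w}_{t+1}^{-U}-\hat{\mathbf{w}}^{-U}\|\le\rho\,\|\mathbf{w}_{t}^{-U}-\hat{\mathbf{w}}^{-U}\|$ obtained from the averaged Hessian $\int_0^1\nabla^2 F_{\mathcal{S}\backslash U}(\mathbf{w}_\tau^{-U})\,d\tau$ and the eigenvalue bound under $\eta\le 2/(M+\lambda)$, followed by iteration and a bound on the initial gap. On the constant: you are right to be suspicious of $2M/\lambda$. The paper's own proof in fact concludes with $\rho^T\,\tfrac{2L}{\lambda}$, obtained by passing through function values ($\tfrac{\lambda}{2}\|\mathbf{w}_0^{-U}-\hat{\mathbf{w}}^{-U}\|^2\le F(\mathbf{w}_0^{-U})-F(\hat{\mathbf{w}}^{-U})\le\tfrac{2L^2}{\lambda}$), so the $M$ in the lemma statement is a typo for $L$. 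Your direct route via gradient strong monotonicity, $\lambda\|\mathbf{w}_0^{-U}-\hat{\mathbf{w}}^{-U}\|\le\|\nabla F_{\mathcal{S}\backslash U}(\mathbf{w}_0^{-U})\|\le L$, gives the sharper $L/\lambda$ and is preferable; the attempted reconciliation via ``$L\le 2M$'' is neither generally valid nor needed and should simply be dropped in favor of noting the typo, with no effect on the downstream use in the generalization theorem.
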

\begin{proof}[Proof of \cref{Proof of Theorem-2}]
Recalling that, according to the strong convexity and smoothness, $\lambda, M  > 0$, we have $\lambda \mathbf{I} \preceq \nabla^2 \ell(\mathbf{w}; z) \preceq M \mathbf{I}$. Therefore, we have
\begin{equation}
     ( 1-  \eta_T\lambda) \mathbf{I} \preceq \mathbf{I}- \frac{\eta_T}{|\mathcal{B}_{T}|} \sum_{i \in \mathcal{B}_{T}}  \nabla^2 \ell(\mathbf{w}_T^{-U}; z_i) \preceq  (1-\eta_T M )\mathbf{I}.
\end{equation}
Let $\eta \leq \frac{2 }{\lambda+M}$. Therefore, $\eta_T \leq \frac{2 }{\lambda+M}$, which implies that for any $T \in [0,\infty) $:
\begin{equation}
      | 1-  \eta_T\lambda | < 1,\ | 1-  \eta_TM | < 1.
\end{equation}
It is seen from the fundamental theorem of calculus that
\begin{equation}
\nabla F_{\mathcal{S}\backslash U} \left(\mathbf{w}_{T}^{-U}\right)=\nabla F_{\mathcal{S}\backslash U}  \left(\mathbf{w}_{T}^{-U} \right)-\underbrace{\nabla F_{\mathcal{S}\backslash U}  \left(\hat{\mathbf{w}}^{-U}\right)}_{=0}=\left(\int_0^1 \nabla^2 F_{\mathcal{S}\backslash U}  \left(\mathbf{w}_\tau^{-U} \right) \mathrm{d} \tau\right)\left(\mathbf{w}_{T}^{-U}-\hat{\mathbf{w}}^{-U}\right),
\end{equation}
where $\mathbf{w}_\tau^{-U}:=\mathbf{w}_{T}^{-U}+\tau\left(\hat{\mathbf{w}}^{-U}-\mathbf{w}_{T}^{-U} \right)$. Here, $\left\{\mathbf{w}_\tau^{-U} \right\}_{0 \leq \tau \leq 1}$ forms a line segment between $\mathbf{w}_{T}^{-U}$ and $\hat{\mathbf{w}}^{-U}$. Therefore,
According to the GD update rule and based on \cref{Lemma 2}, we have, 
\begin{equation}
\begin{aligned}
       \Vert \mathbf{w}_{T+1}^{-U} -\hat{\mathbf{w}}^{-U} \Vert  & =\left\|\left(\mathbf{I}-\eta_T \int_0^1 \nabla^2 F_{\mathcal{S}\backslash U}  \left(\mathbf{w}_\tau^{-U}  \right) \mathrm{d} \tau\right)\left(\mathbf{w}_{T}^{-U} -\hat{\mathbf{w}}^{-U}\right)\right\| \\
& \leq \sup _{0 \leq \tau \leq 1}\left\|\mathbf{I}-\eta_T \nabla^2 F_{\mathcal{S}\backslash U}  \left(\mathbf{w}_\tau^{-U} \right)\right\|\left\|\mathbf{w}_{T}^{-U} -\hat{\mathbf{w}}^{-U}\right\| \leq \rho \Vert \mathbf{w}_T^{-U} -\hat{\mathbf{w}}^{-U} \Vert.
\end{aligned}
\end{equation}
Apply it recursively, 
\begin{equation}
    \Vert \mathbf{w}_T^{-U} -\hat{\mathbf{w}}^{-U} \Vert \leq \rho^T \Vert \mathbf{w}_0^{-U} -\hat{\mathbf{w}}^{-U} \Vert,\ \text{where } \rho=\max \left\{\left|1-\eta \lambda \right|,\left|1-\eta M\right|\right\} < 1.
\end{equation}
By the assumption that \(\lambda\)-strongly convex and $L$-Lipschitzness, which implies that
\begin{equation}
\begin{split}
\frac{\lambda}{2}\left\|\mathbf{w}^{-U}-\hat{\mathbf{w}}^{-U} \right\|^2 
\leq & F(\mathbf{w}^{-U})-F\left(\hat{\mathbf{w}}^{-U} \right)
    \\
& F(\mathbf{w}^{-U})-F\left(\hat{\mathbf{w}}^{-U} \right) \leq \frac{2 L^2}{\lambda}.
\end{split}
\end{equation}
Therefore, we have completed the proof as follows,
\begin{equation}
    \Vert \mathbf{w}_T^{-U} -\hat{\mathbf{w}}^{-U} \Vert \leq \rho^T \frac{2L}{\lambda},\ \text{where } \rho=\max \left\{\left|1-\eta \lambda \right|,\left|1-\eta M\right|\right\} < 1.
\end{equation}
\end{proof}

Building upon the proofs presented in \cref{Proof of Theorem-1} and \cref{Proof of Theorem-2}, we are now prepared to establish the generalization results stated in \cref{Theorem: generalization error}.
\begin{proof}[\textbf{Proof of \cref{Theorem: generalization error}}]

Let $\hat{\mathbf{w}}^{-U}$ be the empirical risk minimizer of the objective function $F_{\mathcal{S} \backslash U}(\mathbf{w})$, where $|U|=m$. Let $T$ be the total number of update steps. For any $z \in \mathcal{Z}$ the loss function $\ell(\mathbf{w}; z)$ is $\lambda$-strongly convex, $M$-smoothness and $L$-Lipschitz. For a minimizer $\mathbf{w}^*$ of the population risk in \cref{population risk} and $\tilde{\mathbf{w}}_{E, B}^{-U} = \mathbf{w}_{E, B}+\mathbf{a}_{E,B}^{-U}+\mathbf{n}$, we have that 
\begin{equation}   \mathbb{E}\left[F(\tilde{\mathbf{w}}_{E, B}^{-U} )-F\left(\mathbf{w}^*\right)\right] = \mathbb{E}\left[F(\tilde{\mathbf{w}}_{E, B}^{-U} )-F(\hat{\mathbf{w}}^{-U})\right]
    + \mathbb{E}\left[F\left(\hat{\mathbf{w}}^{-U}\right) - F\left(\mathbf{w}^*\right)\right].
\end{equation}
Based on \cref{Proof of Theorem-1}, we get
\begin{equation}
\mathbb{E}\left[F\left(\hat{\mathbf{w}}^{-U}\right) - F\left(\mathbf{w}^*\right)\right] \leq
\frac{4 L^2}{\lambda (n-m)}.
\end{equation}
Then we note that the function $F(\mathbf{w})$ satisfies $L$-Lipschitz, therefore we can obtain,
\begin{equation}
\label{Proof of Theorem 4.4-00}
\mathbb{E}\left[F\left(\tilde{\mathbf{w}}_{E, B} \right)-F\left(\mathbf{w}^*\right)\right] 
    \leq
    \mathbb{E}[ L \Vert \tilde{\mathbf{w}}_{E, B} -\hat{\mathbf{w}}^{-U} \Vert ]+ \frac{4 L^2}{\lambda (n-m)}.
\end{equation}
For the first term of \cref{Proof of Theorem 4.4-00}, we have
\begin{equation}
\label{Proof of Theorem 4.4-0}
  \mathbb{E}\left[  \Vert \tilde{\mathbf{w}}_{E, B}^{-U} -{\mathbf{w}}_{E, B}^{-U} +{\mathbf{w}}_{E, B}^{-U}-\hat{\mathbf{w}}^{-U} \Vert \right]  \leq \mathbb{E} \big[ 
       \underbrace{\Vert \tilde{\mathbf{w}}_{E, B}^{-U} -{\mathbf{w}}_{E, B}^{-U} \Vert}_{C_3}  \big] 
       +
\mathbb{E} \big[  \underbrace{\Vert{\mathbf{w}}_{E, B}^{-U}-\hat{\mathbf{w}}^{-U} \Vert}_{C_4}  \big].
\end{equation}
Therefore, we now bound the first term $C_3$ and the second term $C_4$ of \cref{Proof of Theorem 4.4-0} as follows:

\textbf{(1)} $C_4$: Based on \cref{Proof of Theorem-2} and ${\mathbf{w}}_{E, B}^{-U} = {\mathbf{w}}_T^{-U}$, we have that for $\eta \leq \frac{2 }{\lambda+M}$,
\begin{equation}
\label{Proof of Theorem 4.4-1}
\begin{aligned}
      \mathbb{E} \big[ \Vert{\mathbf{w}}_{E, B}^{-U}-\hat{\mathbf{w}}^{-U} \Vert \big] \leq \rho^T \frac{2L}{\lambda}, \text{where } \rho =\max \left\{\left|1-\eta \lambda \right|,\left|1-\eta M\right|\right\} < 1.
\end{aligned}
\end{equation}
\textbf{(2)} $C_3$: Based on \cref{Theorem 2: approximation error}, we obtain,
\begin{equation}
\label{Proof of Theorem 4.4-2}
\begin{aligned}
      \mathbb{E} [ \Vert \tilde{\mathbf{w}}_{E, B}^{-U} - {\mathbf{w}}_{E, B}^{-U} \Vert ]& = \mathbb{E} [\Vert  \mathbf{w}_{E,B}+\mathbf{a}_{E,B}^{-U}-\mathbf{w}_{E,B}^{-U} \Vert] +\mathbb{E} [ \Vert \mathbf{n} \Vert ] \leq  \Vert  \Delta_{E,B}^{-U}   \Vert  +\sqrt{d}c.
   \\ \text{where}\ \Vert  \Delta_{E,B}^{-U}   \Vert  & = \frac{2\eta^2 G }{1-q} \left(\frac{\rho ^T-q^T}{\rho -q }-\frac{\rho ^T-q^{2T}}{\rho -q^2 }\right) + \frac{2\eta G }{|\mathcal{B}|} \frac{\rho^T-q^T}{\rho^B - q^B } \rho^{B-b(u)-1}q^{b(u)}.
\end{aligned}
\end{equation}
And since the loss function $\ell(\mathbf{w}; z) $ satisfies $M$-smoothness and $\lambda$-strongly convex, we thus have that $0< \rho <1$.
Plugging the above bound \cref{Proof of Theorem 4.4-1} and \cref{Proof of Theorem 4.4-2} into \cref{Proof of Theorem 4.4-0}, we have
\begin{equation}
\label{eq.59}
    \begin{aligned}
\mathbb{E}\left[F(\tilde{\mathbf{w}}_{E, B}^{-U} ) -F\left(\mathbf{w}^*\right)\right]  \leq  \frac{4 L^2}{\lambda (n-m)}  +
      \rho^T \frac{2L^2}{\lambda} +(\frac{\sqrt{d} \sqrt{2\ln{(1.25/\delta)}}}{\epsilon} + 1) \cdot { L \Vert  \Delta_{E,B}^{-U}   \Vert  } .
\end{aligned}
\end{equation}
Therefore, we have completed the proof.
\end{proof}

\textit{\textbf{Tightness Analysis of \cref{Theorem: generalization error}.}} Furthermore, the following settings, where $\rho$ is close to $q$, might render the bound on the approximation error vacuous and also lead to a high approximation error. 
We note that the first and third terms of $\Vert  \Delta_{E,B}^{-U} \Vert$ can be equivalently written as $\mathcal{O}(\frac{\rho^T-q^T}{\rho-q}+\frac{\rho^T-q^T}{\rho^B-q^B} = \rho^{T-1}\sum_{t=0}^{T-1}(\frac{q}{\rho})^t + \rho^{T-B}\sum_{e=0}^{B}(\frac{q}{\rho})^{eB})$. These terms increase as $\rho$ approaches $q$, however are always upper bounded by $\mathcal{O}(\rho^{T-1}\cdot T + \rho^{T-B}\cdot T)$. In our analysis, $T=EB+B$ is a fixed constant, so this bound remains meaningful, although it may not be tight in some cases. Such as when $T \rightarrow \infty$,  in this scenario $\rho$ is close to 1, the bound therefore is equivalent to $\mathcal{O}( \frac{2\eta G}{1 - q } - \frac{2\eta G}{1 -q^2}+ \frac{2\eta G}{1 -q^B})$. 
Fortunately, SGD typically does not need to find a local minima in infinite time, hence \(\rho\) is always less than 1.

\section{Supplementary Experiments}
    \label{Appendix: Experiments}

\subsection{Hardware, Software and Source Code}
The experiments were conducted on the NVIDIA GeForce RTX 4090. The code were implemented in PyTorch 2.0.0 and leverage the CUDA Toolkit version 11.8. Our comprehensive tests were conducted on AMD EPYC 7763 CPU @1.50GHz with 64 cores under Ubuntu20.04.6 LTS. 
\\ 
Source Code: Our code is available at  \href{https://github.com/XinbaoQiao/Hessian-Free-Certified-Unlearning}{\faGithub~Hessian Free Certified Unlearning} 
\subsection{Additional Verification Experiments}
\label{Subsecition: Verificatio Experiments}

The verification experiments in this section consist of two parts: (1) We first investigate the reasons behind the randomness of previous works under the non-convex setting. We select forgetting samples under different random seeds, and show that previous works exhibit dependence for forgetting samples. This implies that the forgetting effect in prior studies benefits certain samples while negatively impacting others.
(2) Second, we perform experiments on a larger-scale dataset and model  with more metrics. Prior studies were unable to achieve this due to the costly Hessian computation.

\textbf{Sensitivity to the forgetting samples.} We conduct experiments on a simple CNN and the hyperparameter setup is consistent with the verification experiments I in the main paper.
As shown in \cref{Verification experiments 2}, we provide intuitive insights into the reasons for the decline of prior works in non-convex settings. Specifically, we investigated the $\normltwo$ norm of the approximate parameter change $\Vert \mathbf{a}^{-U} \Vert$ and the norm of the exact parameter change across different selections of the forgetting sample. We observed that the performance of \textit{NS} and \textit{IJ} depends on the selection of the forgetting data points, i.e., \textit{NS} and \textit{IJ} can approximate actual changes well for some data while generating large approximation errors for others under the non-convex setting, as illustrated in \cref{Verification experiments 2}. The dependency of forgetting data points results in the decrease in distance and correlation coefficient metrics observed in \cref{Verification experiments 1} (b). The observed results for \textit{NS} and \textit{IJ} are consistent with previous studies on influence functions (\cite{DBLP:conf/iclr/BasuPF21,DBLP:conf/icml/BasuYF20}), where these methods display random behavior under the non-convex setting, resulting in a decrease in correlation with retraining. In contrast, our approach can effectively approximate the actual norm of parameter changes, regardless of the selection of the forgetting sample.
\begin{figure}[b]
\centering
\subfigure[\textbf{Low Deletion Rates}]{ 
\includegraphics[width=0.32 \textwidth]{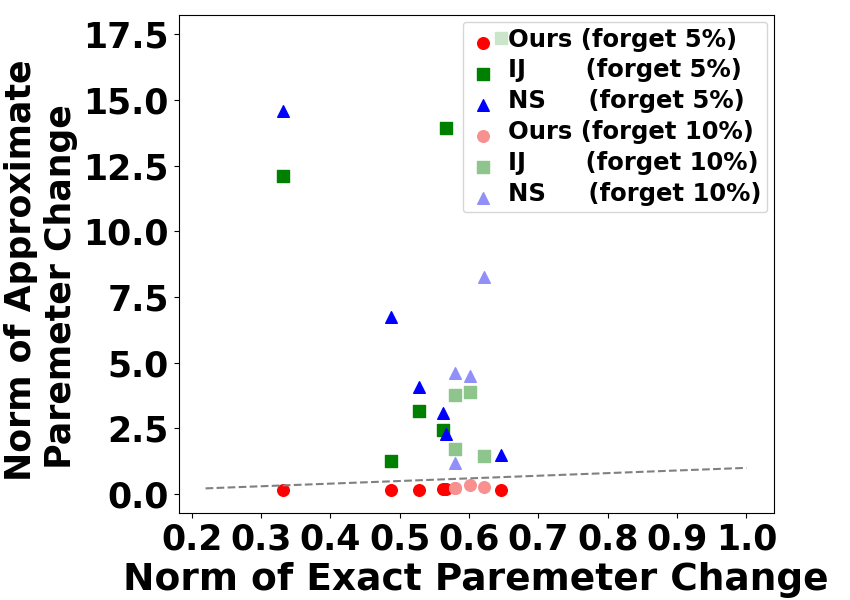}
}
\subfigure[\textbf{High Deletion Rates}]{
\includegraphics[width=0.32 \textwidth]{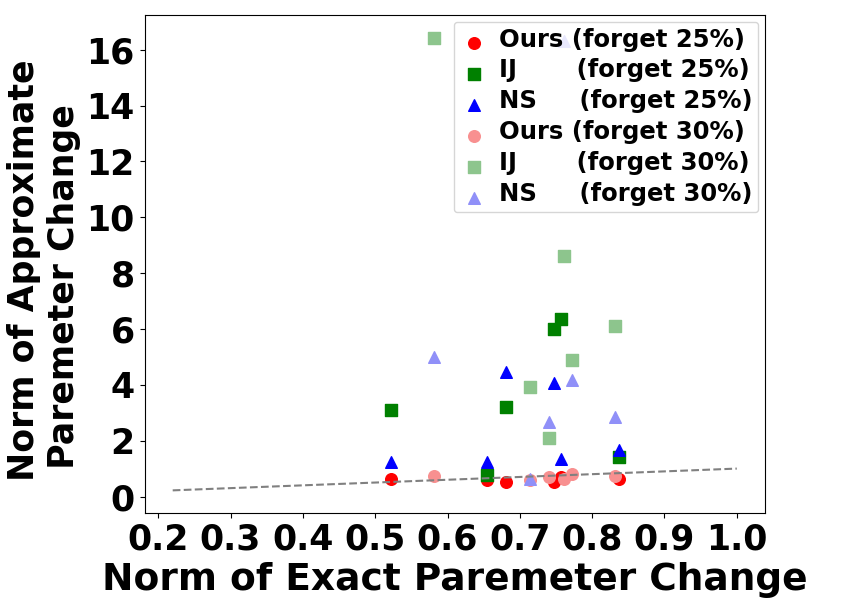}
}
\caption{
\textbf{Verification experiments \uppercase\expandafter{\romannumeral2}}.  Evaluation shows a comparison between the norm of approximate parameter change $\|\mathbf{a}^{-U} \|$ and norm of exact parameter change $\|\mathbf{w}_{E,B}^{-U} - \mathbf{w}_{E,B}\|$ across different random seeds. Intuitively, the \textit{NS} and \textit{IJ} methods are contingent on the selection of forgetting data. In contrast, our approach consistently approximates the actual values effectively.
}
\label{Verification experiments 2}
\end{figure}

\textbf{Additional experiments.} We further evaluate using larger-scale model ResNet-18 (\cite{DBLP:conf/cvpr/HeZRS16}) which features 11M parameters with three datasets: CIFAR-10 (\cite{alex2009learning}) for image classification, CelebA (\cite{liu2015faceattributes}) for gender prediction, and LFWPeople (\cite{LFWTech}) for face recognition across 29 different individuals.
Since previous second-order works (\textit{NS} \cite{DBLP:conf/nips/SekhariAKS21}) and (\textit{IJ} \cite{DBLP:conf/nips/SuriyakumarW22}) are difficult to compute (Hessian) in this scenario, we instead use the following  fast baselines, 
as described and similarly set up in \cite{DBLP:conf/icml/TarunCMK23}:

\begin{itemize}[leftmargin=0.45cm, itemindent=0cm]
    \item \textit{\textbf{FineTune}}: In the case of finetuning, the learned model is finetuned on the remaining dataset.
    \item \textbf{\textit{NegGrad}}: In the case of gradient ascent, the learned model is finetuned using the negative of update direction on the forgetting dataset.
\end{itemize}

\textbf{Configurations:} For \textit{FineTune} and \textit{NegGrad}, we adjust the hyperparameters and execute the algorithm until the accuracy on the forgetting dataset approaches that of the retrained model. Here are our detailed hyperparameter settings:
(1) We conducted evaluation on ResNet-18 trained on CIFAR-10 with 50,000 samples. The learning stage is conducted for 40 epochs with step size of 0.001 and batch size of 256. 
For \textit{FineTune}, 10 epochs of training are performed with step size of 0.001 and batch size of 256. For \textit{NegGrad}, we run gradient ascent for 1 epoch with step size of $5 \times 10^{-5}$ and batch size of 2.
(2) We conducted evaluation on ResNet-18 trained on LFW with 984 samples, for the classification of 29 facial categories. The learning stage is conducted for 50 epochs with step size of 0.004 and batch size of 40. For \textit{FineTune}, 10 epochs of training are performed with step size of 0.004 and batch size of 40. For \textit{NegGrad}, we run gradient ascent for 1 epoch with step size of $4e^{-6}$ and batch size of 1.
(3) We conducted evaluation on ResNet-18 trained on CelebA with 10,000 samples. The learning stage is conducted for 5 epochs with step size of 0.01 and batch size of 64. For \textit{FineTune}, 2 epochs of training are performed with step size of 0.01 and batch size of 64.
For \textit{NegGrad}, we run gradient ascent for 1 epoch with a step size of 0.0005 and batch size of 1. 
We threshold the gradients within 10 for the above experiments. Finally, we uniformly unlearned 50 data points for each dataset. We denote the test data as $D_t$, the remaining data as $D_r$, and the forgetting data as $D_f$.

The following \cref{Table: Verification 3}-\ref{Table: Verification 5} are our evaluation results: 
we employ five widely-used metrics to evaluate the performance of unlearning methods: \textit{Acc\_}$D_t$  (accuracy on $D_t$),  \textit{error \_}$D_t$ (error on $D_t$),  \textit{error\_}$D_r$ (error on $D_r$),  \textit{error\_}$D_f$ (error on $D_f$), \textit{Distance} ($\normltwo$ norm between unlearned and retrained model). 

\textbf{Performances of our scheme:} We still maintain model indistinguishability from the retrained model on more complex tasks and larger models, where the distance is only 0.06 on CIFAR10 and 0.29 on CelebA. The results also show that the unlearned model of proposed method performs similarly to the retrained model in terms of the accuracy metric.
Moreover, compared to these baselines, our method achieves forgetting at only the millisecond level, demonstrating the potential of our method on larger-scale models. Additionally, we noticed that when training is unstable, our method would lead to larger performance degradation, such as on the LFW dataset. Although our method has similar accuracy on the forgetting dataset, it suffers performance loss on the remaining dataset. This implies that our method is sensitive to the training process and hyperparameters in non-convex scenarios. We summarize the limitations in \cref{Appendix: limitation}, and provide possible schemes to address these problems.

\textbf{Model indistinguishability:} For \textit{FineTune}, we can observe that even though the accuracy results are similar to the retrained model, the distance metric is very large (e.g. 2.21 on CIFAR-10 and 2.36 on CelebA). For \textit{NegGrad}, we observe that the accuracy on forgetting dataset $D_f$ and  remaining dataset $D_r$
 would rapidly decrease simultaneously, but even in this case, the distances are is still lower than that of \textit{FineTune}  for all three datasets. This demonstrates that a small difference in accuracy does not indicate the model indistinguishability. 
Therefore, we use the $\normltwo$ norm and correlation metric as the primary measure of forgetting completeness, which are commonly employed in studies related to \textit{influence function} (\cite{DBLP:conf/iclr/BasuPF21,DBLP:conf/icml/BasuYF20}). These studies serve as inspiration for prior certified unlearning papers (\cite{DBLP:conf/nips/SekhariAKS21,DBLP:conf/nips/SuriyakumarW22,liu2023certified}) that we compare. 
Measuring forgetting through $\normltwo$ norm is also used in unlearning research, e.g., \cite{DBLP:conf/icml/WuDD20,DBLP:journals/corr/abs-2002-10077}, but it necessitates retraining a model, which is impractical for real-world applications.

\begin{table}[hb]
    \centering
\caption{\textbf{Verification experiments \uppercase\expandafter{\romannumeral3}.}
   Unlearning results on Resnet-18  trained on CIFAR-10 dataset.}
       \resizebox{\linewidth}{!}
      { \begin{tabular}{c|cccc|c|c} \hline  
         Method&  \textit{Acc\_}$D_t$ (\%) &  \textit{Err\_}$D_t$ (\%)&  \textit{Err\_}$D_r$ (\%) &  \textit{Err\_}$D_f$ (\%) & \textit{Distance} &  Runtime (Sec) 
\\ \hline  
         FineTune & 79.34  & 20.66 & 15.92  & 22.00  & 2.21 & 106.76   \\ \hline  
         NegGrad & 40.61  &59.39  & 59.15  &  70.00  & 0.11 & 0.56   \\ \hline  
         Retrain&79.63 & 20.37	& 16.19 &	24.00 & — &  468.65  \\ \hline  
         \textbf{Ours}& 79.62 & 20.38 & 16.15  & 22.00& \textbf{0.06}&\textbf{0.0006} \\ \hline 
    \end{tabular}
}
    \label{Table: Verification 3}
\end{table}

\begin{table}[hb]
    \centering
\caption{\textbf{Verification experiments \uppercase\expandafter{\romannumeral4}.}
    Unlearning results on Resnet-18  trained on LFW dataset.}
        \resizebox{\linewidth}{!}
      { \begin{tabular}{c|cccc|c|c} \hline  
         Method&  \textit{Acc\_}$D_t$ (\%) &  \textit{Err\_}$D_t$ (\%)&  \textit{Err\_}$D_r$ (\%) &  \textit{Err\_}$D_f$ (\%) & \textit{Distance} &  Runtime (Sec) 
\\ \hline  
         FineTune& 76.02  & 23.98  & 8.99 & 4.00 & 0.52 & 15.60  \\ \hline  
         NegGrad & 63.01 & 36.99  & 15.63 &  20.00& 0.48 & 0.46   \\ \hline  
         Retrain& 80.89 & 19.11	& 4.07  &	26.00 & — &  77.43  \\ \hline  
         \textbf{Ours}& 71.92 & 28.08 & 14.91 & 22.00& \textbf{0.33} & \textbf{0.001} \\ \hline 
    \end{tabular}}
    \label{Table: Verification 4}
\end{table}

\begin{table}[hb]
    \centering
\caption{\textbf{Verification experiments \uppercase\expandafter{\romannumeral5}.}
    Unlearning results on Resnet-18  trained on CelebA dataset.}
    \resizebox{\linewidth}{!}
      { \begin{tabular}{c|cccc|c|c} \hline  
         Method&  \textit{Acc\_}$D_t$ (\%) &  \textit{Err\_}$D_t$ (\%)&  \textit{Err\_}$D_r$ (\%) &  \textit{Err\_}$D_f$ (\%) & \textit{Distance} &  Runtime (Sec) 
\\ \hline  
         FineTune&95.78 &4.22  & 1.07  & 4.00 & 2.36 &  35.09  \\ \hline  
         NegGrad & 37.95 & 62.05 & 58.12  & 60.0 & 1.13  & 0.96  \\ \hline  
         Retrain& 95.72 & 4.28	& 1.20 &	4.00 & — &  87.72  \\ \hline  
         \textbf{Ours}& 95.57 & 4.43 &1.12  & 2.00& \textbf{0.29}&\textbf{0.0006} \\ \hline 
    \end{tabular}}
    \label{Table: Verification 5}
\end{table}




\subsection{Additional Application Experiments}
\label{Subsecition: Application Experiments}

Furthermore, we evaluate the performance on different datasets and models in real-world applications. Our evaluations are conducted from two perspectives: runtime and utility. Specifically, run-time focuses on the time spent precomputing unlearning statistics and the speedup of the unlearning algorithm compared to the retraining algorithm. Moreover, we evaluate the utility by the test accuracy of the unlearned model to ensure that generalization performance is not compromised. 
Finally, we record the area under the curve (AUC) for the MIA-L attack.

  \textbf{Configurations:} We assess the performance on various datasets and models, as presented in \cref{Table: Application experiments 2}. The values in parentheses indicate the difference in test accuracy compared to the retrained model. (1) We train a LR and simple CNN on MNIST with 1,000 training data 20\% data points to be forgotten, which have setups identical to the aforementioned verification experiments I. We further evaluate on FMNIST with 4,000 training data and  20\% data points to be forgotten using CNN and LeNet with a total of 61,706 parameters. The training is conducted for 30 epochs with a stepsize of 0.5 and a batch size of 256.  (2) Moreover, we assessed our method on Resnet-18 (\cite{DBLP:conf/cvpr/HeZRS16}) trained on the CIFAR10 dataset for image classification with 50,000 data points, CelebA dataset for gender prediction with 10,000 data points, and LFWPeople dataset for face recognition with 984 data points. we uniformly add noise with a standard deviation of 0.01. The configurations align with those detailed in \cref{Table: Verification 3}-\ref{Table: Verification 5}. In these circumstances, we have not evaluated \textit{NS} and \textit{IJ} due to out-of-memory.

\textbf{Application experiment results} show that our proposed method is computationally efficient and performs well in realistic environments, outperforming benchmarks
of certified unlearning regarding all metrics by a significant margin, as demonstrated in \cref{Table: Application experiments 2}. It achieves minimal computational and storage overhead while maintaining high performance in both convex and non-convex settings. We further investigate data removal with large-scale datasets using ResNet-18, as shown in \cref{Table: Application experiments 2}. Our proposed Hessian-free method demonstrates significant potential for over-parameterized models, as previous second-order certified unlearning works are unable to compute the full Hessian matrix in such cases. Notably, for unlearning runtime, by precomputing statistical data before unlearning requests arrive, we only need to execute simple vector additions, achieving millisecond-level efficiency (0.6 ms) with minimal performance degradation in test accuracy.
Finally, although our method has lower computation and storage complexity compared to previous works, it still requires $\mathcal{O}(n^2d)$ precomputation and $\mathcal{O}(nd)$ storage which is expensive when facing both larger-scale datasets.
We summarize the limitations in \cref{Appendix: limitation}, and provide possible schemes to address these problems.
\begin{table*}[ht]
\centering
  \caption{\textbf{Application experiments \uppercase\expandafter{\romannumeral2}.}  We conduct two types of comparisons: (1) on small-scale models and datasets with previous second-order algorithms, where we compute the total computational/storage cost for all samples; and (2) on large-scale models and datasets with fast unlearning algorithms, where we compute the computational/cost overhead of per-sample forgetting.
  }
  \resizebox{\linewidth}{!}{
  \begin{tabular}{c|c|c|cc|c|c|c|c|c}
\hline
\multirow{2}{*} {Method}   & \multirow{2}{*}{Dataset}     &\multirow{2}{*}{Model}&   \multicolumn{2}{c|}{Unlearning Computaion} &{PreComputaion}& {Storage} & {MIA-L} & \multirow{2}{*}{Distance} &   {Test Accuracy (\%)}\\ 
&    & &Runtime (Sec)  & Speedup& Runtime (Sec) & (GB)& AUC& & Unlearned model\\
    \hline
\multirow{4}{*}{NS} & 
    \multirow{2}{*}{MNIST}    & LR & 5.09$\times 10^2$ & 1.34$\times$  &  2.55$\times 10^3$ &    0.23  & 0.51 & 0.18   & 86.25 (-1.50) \\&
     &  CNN &  5.81$\times 10^3$  &  0.12$\times$  &  2.91$\times 10^{4}$  &   1.78   & 0.51 & 1.78  & 83.50 (-10.25) \\ 
     \cline{2-3}  &
     
     \multirow{2}{*}{FMNIST}  &  CNN &    2.31$\times 10^4$ &  0.54$\times$  &  1.16$\times 10^{5}$&   1.78  & 0.54 & 2.41   &57.69 (-22.25) \\ &
             & LeNet & 8.53$\times 10^{4}$  & 0.15$\times$ &  4.27$\times 10^{5}$ & 14.18   & 0.54 &  1.10 &  62.00 (-18.50)\\ 

  \hline
    
\multirow{4}{*}{IJ}& 
    \multirow{2}{*}{MNIST}    & LR & 0.23$\times 10^2$ & 29.64$\times$ & 2.55$\times 10^3$ &  0.23  & 0.52 & 0.18 &  86.25 (-1.50)  \\&
             &  CNN &  1.91$\times 10^2$ &   3.63$\times$ &  2.91$\times 10^{4}$& 1.78 & 0.51  &3.12 & 82.75 (-11.00) \\   \cline{2-3} &
    \multirow{2}{*}{FMNIST} &  CNN &    2.95$\times 10^1$&   424$\times$&  1.16$\times 10^{5}$ & 1.78  & 0.51 &0.94 &75.69 (-4.25) \\ &
             & LeNet &  3.24$\times 10^{1}$   &  402$\times$ & 4.27$\times 10^{5}$ 
 &14.18  & 0.54 &0.57 &  76.75 (-3.75)\\
    
  \hline
         
   \multirow{4}{*}{\textbf{HF}} &    
    \multirow{2}{*}{MNIST}    
    & LR& \textbf{0.0073}  &\textbf{ 93,376}$\times$ &  \textbf{2.20}\bm{$\times 10^{2}$} & \textbf{0.03} & \textbf{0.51} &\textbf{0.15} &  \textbf{87.50} (-0.25) \\
    &
     &  CNN & \textbf{0.0268}&  \textbf{25,821}$\times$& \textbf{5.34}\bm{$\times 10^{2}$} & \textbf{0.08}   & \textbf{0.51} & \textbf{0.68} &   \textbf{91.50} (-2.25)
\\    \cline{2-3} &
    \multirow{2}{*}{FMNIST} &  CNN  & \textbf{0.127}
   & \textbf{98,468}$\times$ &\textbf{3.66}\bm{$\times 10^{3}$} & \textbf{0.32}   & \textbf{0.51} & \textbf{0.74}  & \textbf{77.85} (-2.09) \\ &
             & LeNet & \textbf{0.163} & \textbf{79,828}$\times$ &  \textbf{4.43}\bm{$\times 10^{3}$} &    \textbf{0.48} & \textbf{0.53} & \textbf{0.53} &\textbf{78.63} (-1.87)  \\ 
\hline \multicolumn{2}{c}{} \\ \hline 
\multirow{2}{*} {Method}   & \multirow{2}{*}{Dataset}     &\multirow{2}{*}{Model}&   \multicolumn{2}{c|}{Unlearning Computaion} &{PreComputaion}& {Storage}& {MIA-L} & \multirow{2}{*}{Distance} &   {Test Accuracy (\%)}\\ 
&    &  &Runtime (Sec)  & Speedup& Runtime (Sec) & (GB)& AUC &  & Unlearned model\\ 
\hline 

\multirow{3}{*}{FineTune}  &    
    {CIFAR10}   &  
 &  106.76  & 4.39 $\times$&  -- &  --  &  0.56 &2.21 & 79.34 (-0.29) \\   \cline{2-2}
          &
   {LFW}  &  Resnet18
     & 15.60 & 4.96 $\times$& -- & --  &  0.53  &0.52 & 76.02 (-4.87) \\      \cline{2-2}
     &
   {CelebA}  &  
     & 35.09  &  2.50 $\times$&-- & --  &  0.56 & 2.36 &  \textbf{95.78} (0.06) \\  
         \hline

 \multirow{3}{*}{NegGrad}  &    
    {CIFAR10}   &  
 &  0.56  &  836.88 $\times$ &  -- &--  & 0.51 & 0.11 & 40.61 (-39.03) \\   \cline{2-2}
          &
    {LFW} &  Resnet18
     & 0.46 & 168.33 $\times$ & -- & --  & \textbf{0.46} & 0.48 & 63.01 (-17.88) \\      \cline{2-2}
     &
{CelebA}    &   
     & 0.96 &  91.38 $\times$&-- & --  & \textbf{0.50} & 1.13 &  37.95 (-57.77) \\  
         \hline
         
   \multirow{3}{*}{\textbf{HF}} &     
    {CIFAR10}  & 
 &  \textbf{0.0006}  &  \textbf{781,083}$\times$ &  184.53 & 0.043  &  \textbf{0.51} & \textbf{0.06} &\textbf{79.62} (-0.01) \\   \cline{2-2}
          &
    {LFW} &  Resnet18
     & \textbf{0.001} & \textbf{77,430} $\times$& 84.99 & 0.043  & 0.56 & \textbf{0.31} & 71.92 (-8.97)  \\      \cline{2-2}
     &
     {CelebA}  &
     & \textbf{0.0006}   &  \textbf{146,200}$\times$&31.90 &    0.042  & 0.54 &\textbf{0.29} & 
 95.57 (-0.15) \\ 
     \hline
  \end{tabular}
  }
\label{Table: Application experiments 2}
\end{table*}

 

\newpage

\subsection{Additional Adversary Experiments}
\label{Subsecition: MIA Experiments}

\textbf{Membership Inference Attack against Learning (MIA-L)} aims to determine whether a specific example is part of the training set of the learned model. \cite{DBLP:conf/sp/CarliniCN0TT22} introduced the Likelihood Ratio Attack (LiRA), which leverages the model's per-example difficulty scores and employs a well-calibrated Gaussian likelihood estimate to gauge the probability of a sample being part of training data. Finally, the adversary queries the confidence of the target model on a specific example and outputs a parametric likelihood-ratio test. 
\textbf{Membership Inference Attack against Unearning (MIA-U)} seeks to predict whether a target sample is part of the learned model's training set but not in the corresponding unlearned (retrained) model's training set. \cite{DBLP:conf/ccs/Chen000HZ21} proposed MIA-U to attack Scratch methods, specifically the Retrain and SISA  (\cite{DBLP:conf/sp/BourtouleCCJTZL21}) algorithms, where the adversary jointly utilizes the posteriors of the two models as input to the attack model, either by concatenating them or by computing difference.  Specifically, it includes five methods to construct features for the attack model: Direct difference (DirectDiff), Sorted difference (SortedDiff), Direct concatenate (DirectConcat), Sorted concatenate (SortedConcat), and Euclidean distance (EucDist).

\textbf{Configuration:} (1) We investigate the Area Under the Curve (AUC) of MIA-U at different deletion rates. (2) We report the Receiver Operating Characteristic (ROC) curve of MIA-U in large-scale experiments as well. (3) Finally,  as a complement to the results of the DirectDiff method reported in the main text, we present the ROC of the remaining four feature construction methods in MIA-U. We maintain the other hyperparameters consistent with previous experiments.

\textbf{Different deletion rates.}
As illustrated in \cref{MIA 2}, a clear correlation exists between the deletion rate and the effectiveness of MIA-U attacks. Specifically, at a low deletion rate of 1\%, the AUC of MIA-U attacks reaches 91\% for our proposed method and 89\% for the retrain method in the LR model, indicating that unlearning algorithms may increase the risk of privacy breaches. When the deletion rate exceeds 5\%, the performance of MIA-U attacks approaches random guessing (AUC = 50\%), suggesting that removing a larger amount of data between successive model releases makes it increasingly difficult for MIA-U attacks to accurately infer the deleted individual 
 information. 

\begin{figure}[htbp]
\centering
\subfigure{
\includegraphics[width=0.49995 \textwidth]{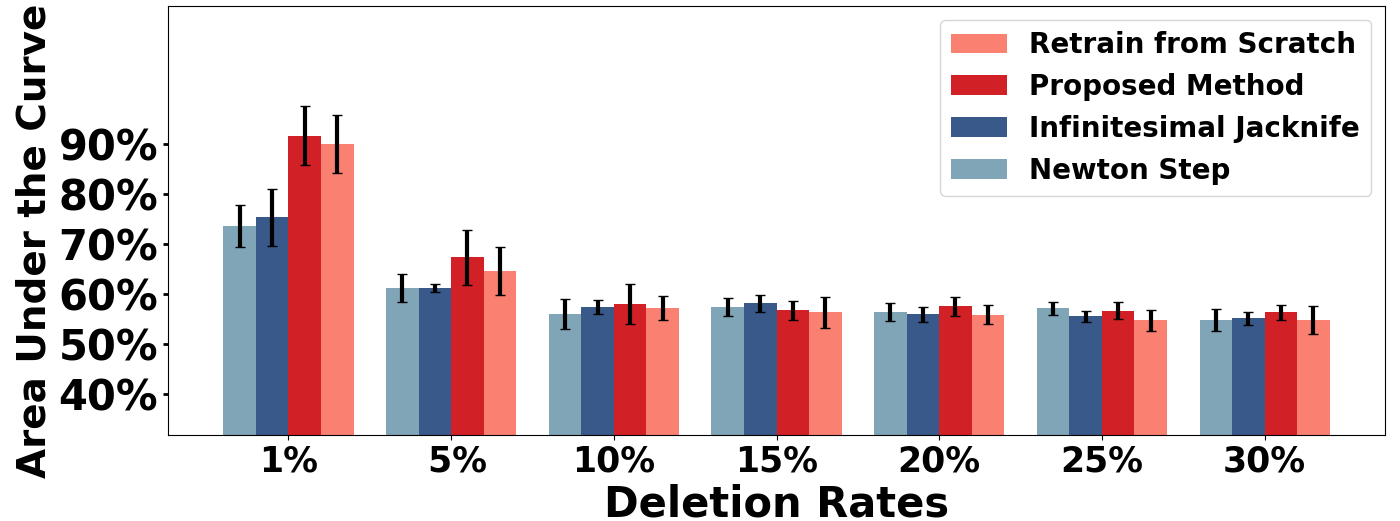} 
\includegraphics[width=0.49995 \textwidth]{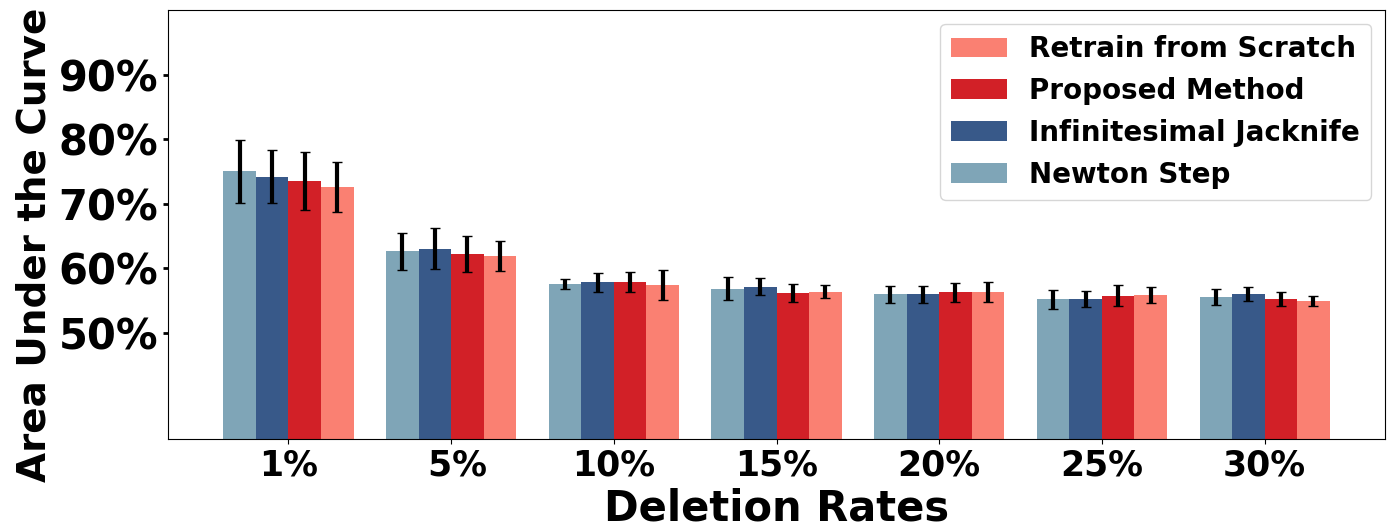}
}
\caption{\textbf{Membership Inference Attack \uppercase\expandafter{\romannumeral2}}. The AUC values of MIA-U at different deletion rates. The feature construction method employed in MIA-U is DirectDiff, with error bars representing 95\% confidence intervals.  The target model in the first plot is LR, while the second plot corresponds to a CNN.  The findings show that lower deletion rates lead to privacy leakage, whereas higher deletion rates diminish MIA-U's attack performance, bringing it closer to random guessing.
}
\label{MIA 2}
\end{figure}

\begin{figure}[htbp]
\centering
\subfigure{
\includegraphics[width=0.300 \textwidth]{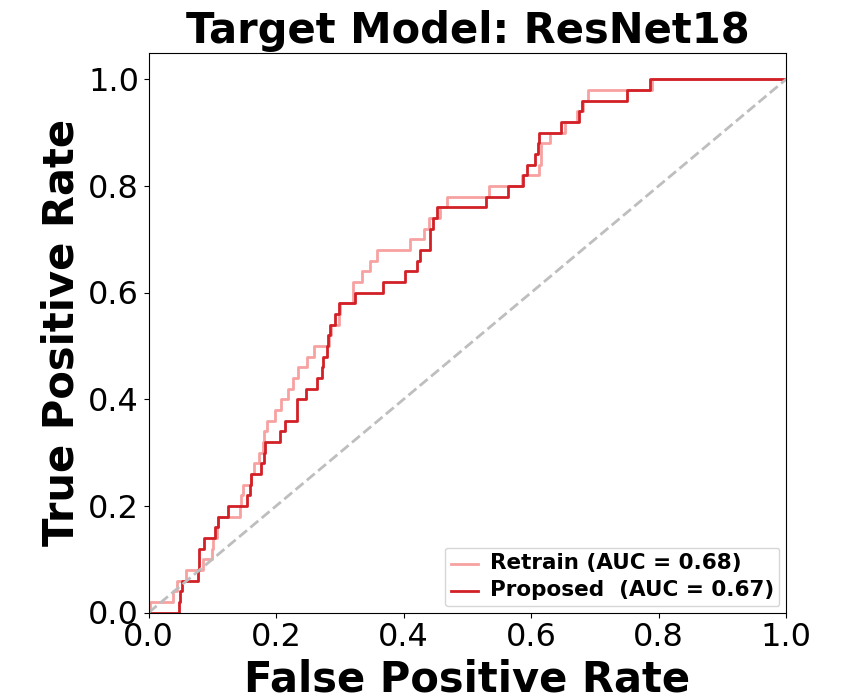}}
\subfigure{\includegraphics[width=0.300 \textwidth]{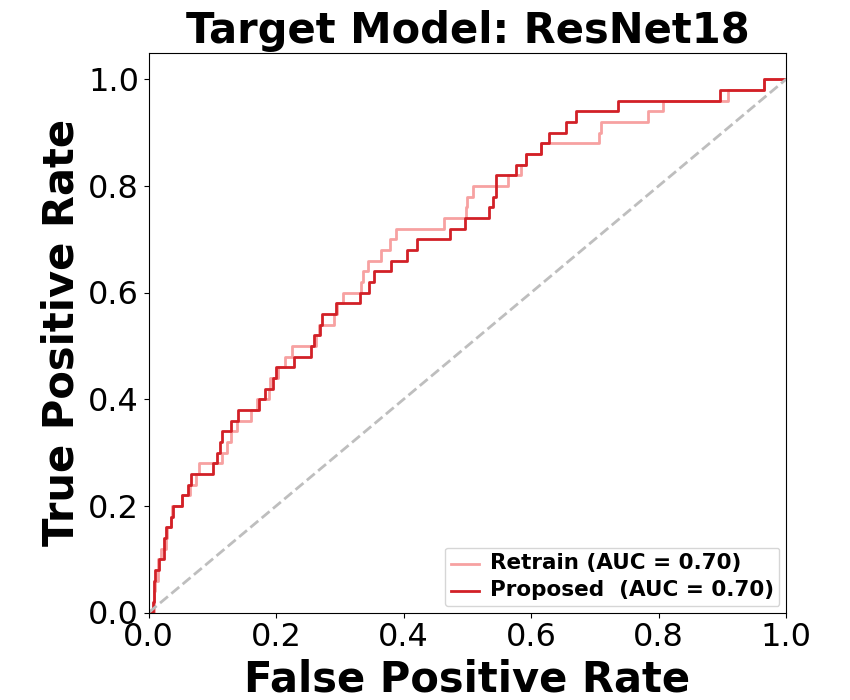}}
\subfigure{\includegraphics[width=0.300 \textwidth]{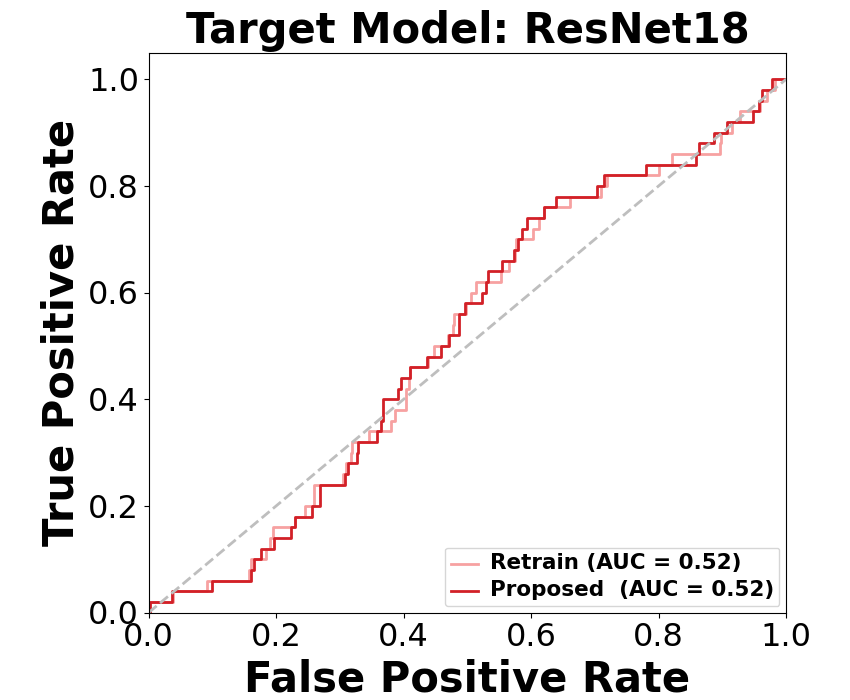}}
\caption{\textbf{Membership Inference Attack \uppercase\expandafter{\romannumeral3}}. ROC curves for MIA-U on ResNet18. The datasets used in the figure, from left to right, are CIFAR-10, LFW, and CelebA. For large-scale setups, our method (red line) performs consistently with the retraining algorithm (light red line) under MIA-U.}
\label{MIA 3}
\end{figure}

\textbf{Different feature construction methods.}
As shown in \cref{MIA 4}, we experiment with a well-generalized LR model and an overfitted CNN model, and the experimental results are consistent with the conclusion of \cite{DBLP:conf/ccs/Chen000HZ21}: sorting the posteriors improves attack performance; concatenation-based methods achieve higher attack performance on the overfitted CNN; and difference-based methods perform better on the well-generalized  LR. 
In addition, we observe that perturbing the unlearning model can effectively defend the EucDist method. However, for other feature construction methods, adding noise to perturb the model after a certain level does not reduce the AUC of the attack model. This reveals a fundamental dilemma of most certified data removal processes developed so far: Although the goal is to achieve performance similar to a gold-standard retrained algorithm, the retrained model itself remains vulnerable to MIA-U, exposing the failure modes of certified unlearning. Nonetheless, as shown in \cref{MIA 2}, removing at least 5\% of the data when releasing the unlearned model can mitigate privacy leakage. However, we are not aware of any recent certified data removal mechanisms that can satisfy the deletion capacity exceeding 5\% of the training set. Fortunately, one of the defense mechanisms proposed by \cite{DBLP:conf/ccs/Chen000HZ21}, which employs Differentially Private Stochastic Gradient Descent (DP-SGD), can effectively mitigate MIA-U but requires careful trade-off amongst privacy, 
unlearning efficiency, and model utility.
\begin{figure}[t]
\subfigure{
\includegraphics[width=0.234 \textwidth]{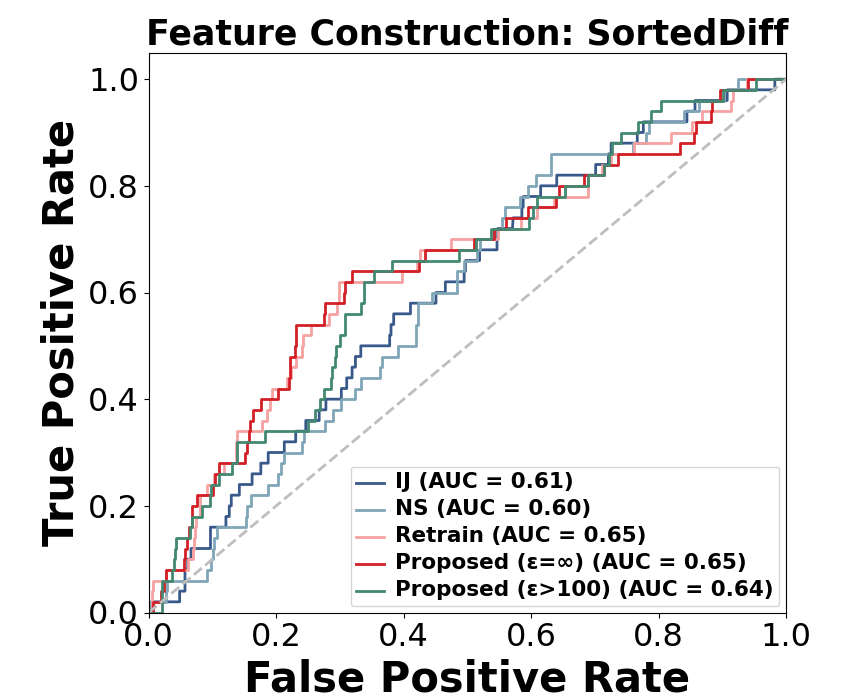}}
\subfigure{\includegraphics[width=0.234 \textwidth]{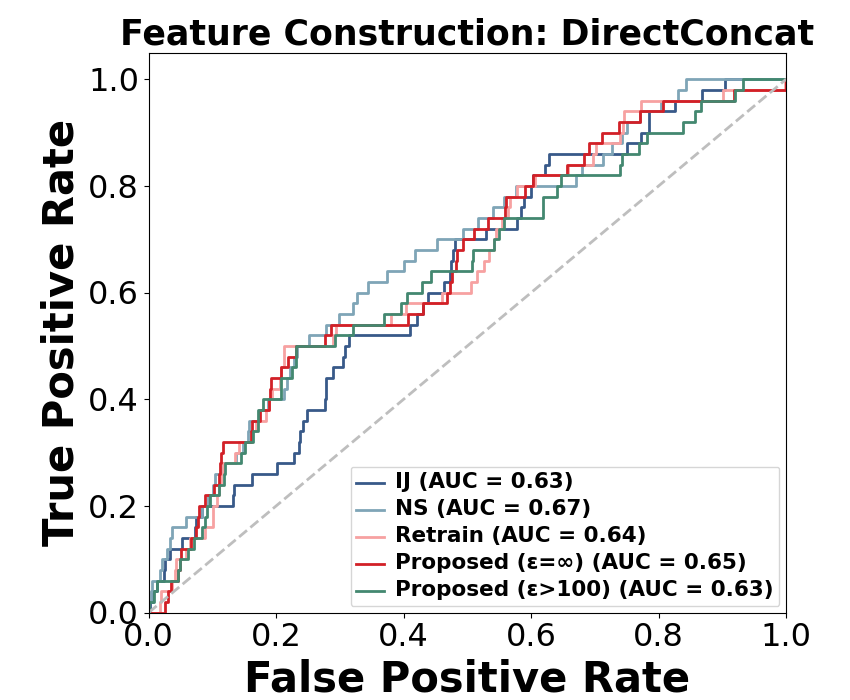}}
\subfigure{\includegraphics[width=0.234 \textwidth]{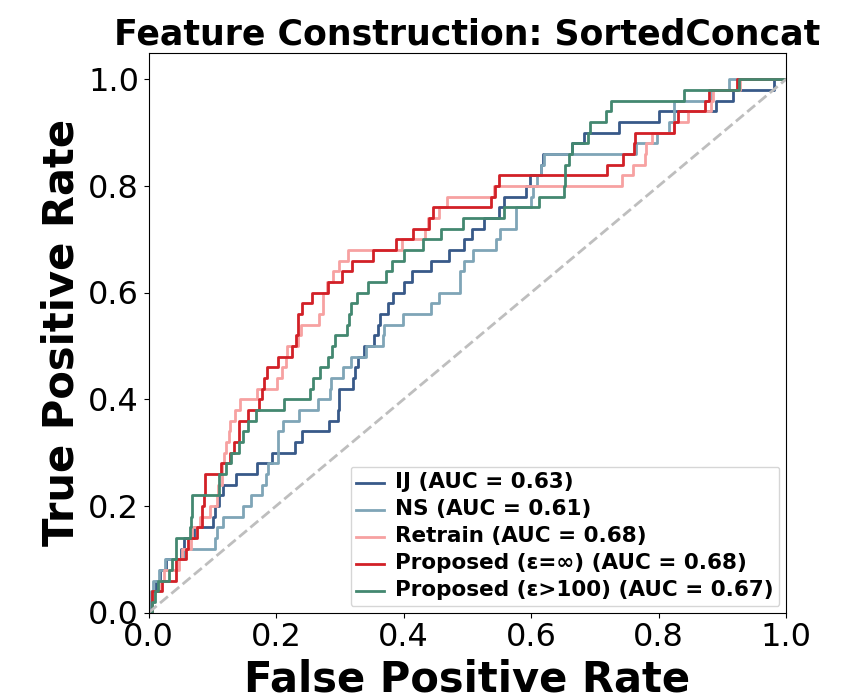}}
\subfigure{\includegraphics[width=0.234 \textwidth]{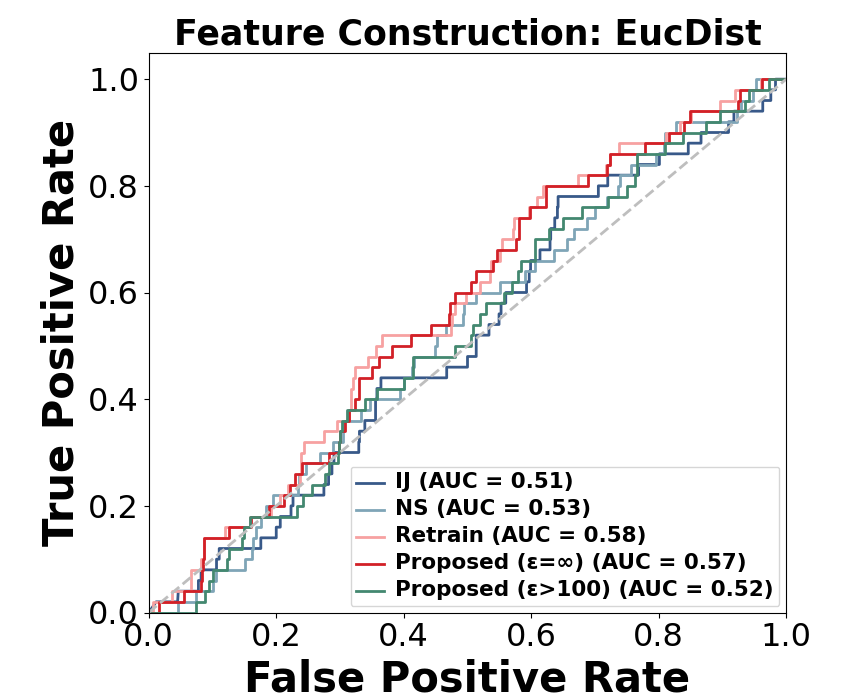}}
\setcounter{subfigure}{0}  
\subfigure[\textbf{SortedDif}]{
\includegraphics[width=0.234 \textwidth]{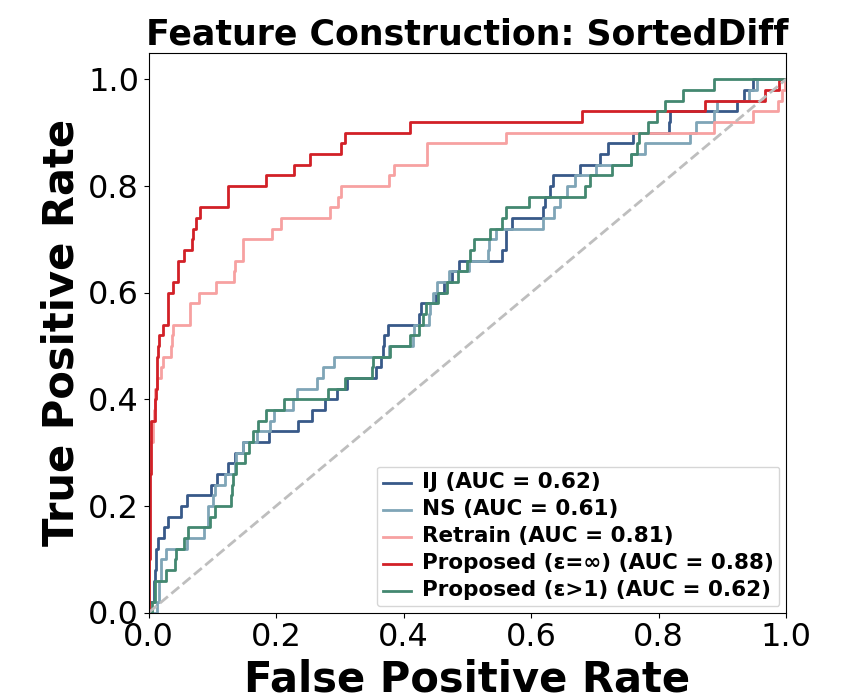}}
\subfigure[\textbf{DirectConcat}]{\includegraphics[width=0.234 \textwidth]{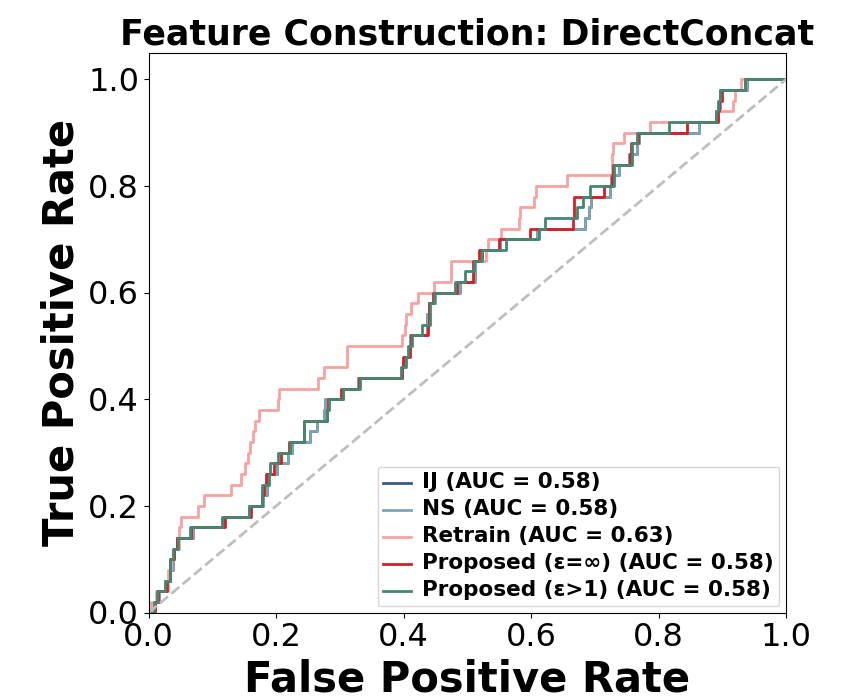}}
\subfigure[\textbf{SortedConcat}]{
\includegraphics[width=0.234 \textwidth]{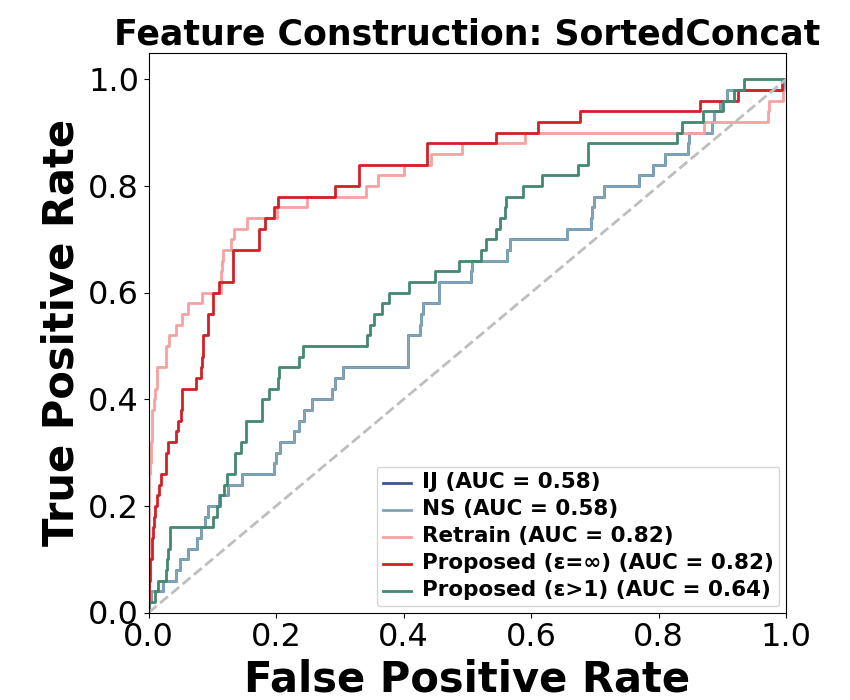}}
\subfigure[\textbf{EucDist}]{\includegraphics[width=0.234 \textwidth]{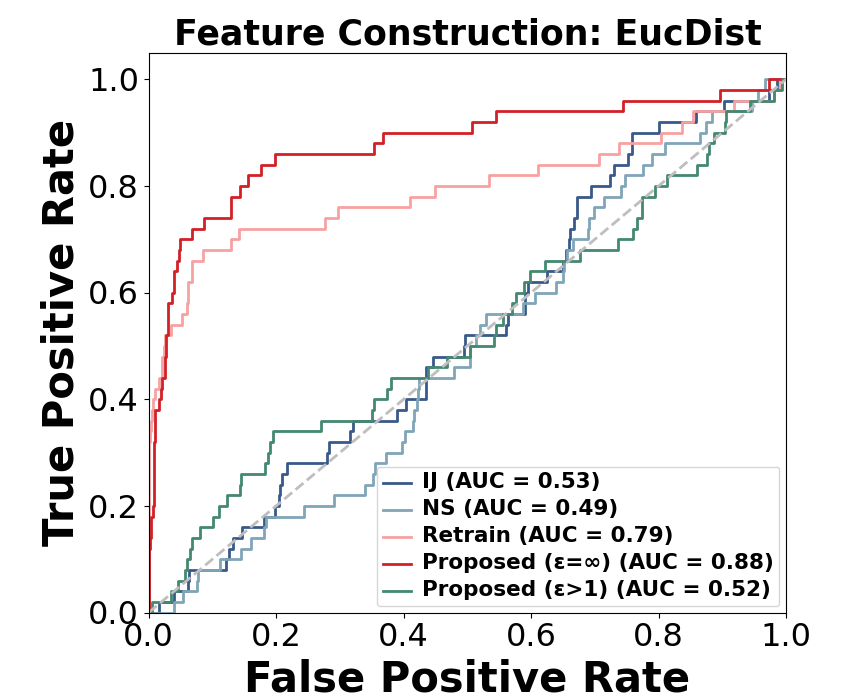}}
\caption{\textbf{Membership Inference Attack \uppercase\expandafter{\romannumeral4}}. ROC curve for different feature construction methods. The target model in the first row is overfitted CNN, whereas the second row is well-generalized LR.}
\label{MIA 4}
\end{figure}

\section{Sensitivity Analysis and Ablation Studies}
\label{Ablation Studies}
In this section, we investigate the effects of different step sizes, epochs, stepsize decay rates, and gradient clipping on the proposed approach. We conducted simulations in both convex and nonconvex scenarios, using LR for the convex case and 2-layer CNN for the non-convex case, respectively.

\textbf{Takeaway.} Let's first provide a brief summary of the main results of ablation studies.

    \textbf{(1) Impact of Step Sizes.} Firstly, a smaller step size enables our model to be closer to the retrained model. However, when the step size exceeds a certain threshold, the bound will become meaningless and result in significant approximation errors.
    \textbf{(2) Impact of Epochs.} Secondly, when the step size is sufficiently small, the approximation error is insensitive and exhibits slow growth with respect to the number of epochs, i.e. excessive iterations in this scenario do not lead to significant errors. 
    \textbf{(3) Impact of Stepsize Decay Rates.} Thirdly, our results indicate that as the decay rate $q$ decreases, the approximation error diminishes. Specifically, The stepsize decay strategy introduces a bound on approximation error, preventing a continuous increase in error as the size of epochs grows.
    \textbf{(4) Impact of Gradient Clipping.} To prevent gradient explosion, we introduced gradient clipping. We show that gradient clipping, which scales the gradient based on its norm, has a similar effect to using a smaller step size. Although, in theory, clipping the gradient may introduce errors, our experiments indicate that this impact is negligible when the threshold is set to a large value.
    

\subsection{Impact of Step Sizes}
\label{Subsection: Impact of Step Sizes}
In this section, we investigated (i) the impact of step sizes and (ii) the impact of small stepsizes with more epochs. Additionally, we discussed the connection between learning generalization and our unlearning method.

\textbf{Configurations:} (i) We investigate the impact of step sizes \{0.005, 0.01, 0.05, 0.1, 0.3\} on distance metrics in \cref{Table: Impact of Step Sizes on Convex} and \cref{Table: Impact of Step Sizes on Non-Convex}. For LR, training was performed for 15 epochs with a stepsize of 0.05 and batch size of 32.  For CNN, training was
carried out for 20 epochs with a step size of 0.05 and a batch size of 64.  (ii) Furthermore, in \cref{Figure: Impact of Step Sizes}, to investigate the effect of epochs under small step sizes, LR was trained for 100 epochs and 200 epochs with a step size of 0.01 (initially 0.05 with 15 epochs). For CNN, training for 50 epochs and 500 epochs with a step size of 0.025 (originally 0.05 with 20 epochs) was performed.

\textbf{Impact of step sizes.} As shown in \cref{Table: Impact of Step Sizes on Convex} and \cref{Table: Impact of Step Sizes on Non-Convex}, a smaller step size enables our model to be closer to the retrained model. Specifically, as \cref{Theorem 2: approximation error} demonstrates that, in this process, the error arises from the scaled remainder term, i.e., the term $o(\mathbf{w}_{e,b}^{-u} - \mathbf{w}_{e,b})$ is scaled by the subsequent $\mathbf{M}_{e,b(u)}$.
Therefore, the approximation error mainly consists of two components: (1) the step size $\eta_{e,b}$ and (2) the maximum eigenvalue $\lambda_1$ of $\mathbf{H}_{e,b}$.
A smaller step size $\eta$ contracts the eigenvalues of the matrix $\eta_{e,b} \mathbf{H}_{e,b}$, thereby reducing the spectral radius $\rho$ of the matrix $\mathbf{I} - \eta_{e,b} \mathbf{H}_{e,b}$, and ultimately reducing the approximation error.
\begin{table}[b]
    \centering
    \begin{minipage}[b]{0.45\linewidth}
        \centering
        \caption{\textbf{Impact of Step Sizes} on Convex Setting. Stepsize $0.05$ represents the hyperparameter choice of experiments for LR in the main paper. It can be observed that as the step size decreases, the unlearned model is closer to the retrained model. (Seed 42)}
        \label{Table: Impact of Step Sizes on Convex}        
        \begin{tabular}{cccccc}
            \toprule
            \multirow{2}{*}{\textbf{Stepsize}}  & \multicolumn{4}{c}{\textbf{Distance}} \\
            \cmidrule(lr){2-5} 
            &  1\% & 5\%  & 20\%  & 30\%  \\
            \midrule
            0.005 & 0.018 &  0.040 & 0.092 & 0.128 \\
            0.01  &  0.023 & 0.0518 & 0.127 &  0.178 \\
    \textbf{0.05}  & 0.027  & 0.064 & 0.149 &  0.210 \\
            0.1   & 0.028 & 0.066 & 0.154 & 0.217 \\
            0.3   &  5.581  & 17.48 & 28.45 & 40.64    \\           
            \bottomrule
        \end{tabular}
    \end{minipage}
    \hfill
    \begin{minipage}[b]{0.45\linewidth}
        \centering
        \caption{\textbf{Impact of step sizes} on non-Convex setting. 
        Stepsize $0.05$ represents the hyperparameter choice of experiments for CNN in the main paper.
        It can be observed that as the step size decreases, the unlearned model is closer to the retrained model. (Seed 42)}        
        \label{Table: Impact of Step Sizes on Non-Convex}
        \begin{tabular}{cccccc}
            \toprule
            \multirow{2}{*}{\textbf{Stepsize}}  & \multicolumn{4}{c}{\textbf{Distance}} \\
            \cmidrule(lr){2-5} 
            &1\% & 5\%  & 20\%  & 30\%  \\
            \midrule
            0.005 & 0.004 & 0.015 & 0.058 & 0.084 \\
            0.01  & 0.017 & 0.069 & 0.257 &  0.372 \\
    \textbf{0.05}  & 0.167 & 0.380 & 0.584 & 0.825 \\
            0.1   & 0.437 & 0.795 &  1.722 & 2.72 \\
            0.3   & 122.2 & 393.4 &  1647 & 2455 \\
                        \bottomrule
        \end{tabular}
    \end{minipage}
\end{table}

Furthermore, when the step size is smaller than a certain threshold, increasing the step size does not lead to significant errors. However, when the step size exceeds this threshold (making $\rho$ not less than or close to 1), the errors will grow exponentially, as demonstrated in \cref{Table: Impact of Step Sizes on Convex} and \cref{Table: Impact of Step Sizes on Non-Convex} for step size $\eta \geq 0.3$.
In this scenario, our method leads to a complete breakdown of the model, rendering it unusable.
However, it is noteworthy that theoretical predictions (\cite{DBLP:conf/nips/WuME18}) and empirical validation (\cite{gilmer2022a}) suggested that, across all datasets and models, successful training occurs only when optimization enters a stable region of parameter space where $\lambda_1 \cdot \eta \leq 2$. 
Therefore, achieving good optimization results typically does not require a strategy of using aggressive stepsizes.

\textbf{The connection between learning and unlearning.} There is a connection between the sharpness of learning generalization studies and the unlearning method that we propose.
 Specifically, these studies aim to explore the sharpness of the Hessian of loss function $\mathbf{H}_{e,b}$, and the term ‘sharpness’ refers to the maximum eigenvalue of the Hessian, denoted as $\lambda_1$ in some studies, such as \cite{gilmer2022a}. 
A flat loss space is widely recognized as beneficial for gradient descent optimization, leading to improved convergence. 
Importantly, \cite{gilmer2022a} demonstrated the central role that $\lambda_1$ plays in neural network optimization, emphasizing that maintaining a sufficiently small $\lambda_1$ during optimization is a necessary condition for successful training (without causing divergence) at large step sizes, as shown in \cref{Fig:The connection between learning and unlearning}. Models that train successfully enter a region where $\lambda_1 \cdot \eta \leq 2$ mid training (or fluctuate just above this bound).
In our paper, we leverage the second-order information from the Hessian at each iteration during the training process to compute the approximators and achieve forgetting. As indicated in \cref{Theorem 2: approximation error}, a small $\lambda_1$ of the Hessian is beneficial for reducing the approximation error. 
Therefore, effectively reducing $\lambda_1$ during optimization, so that the training enters a flatter region, will not only improve the generalization ability of the learning stage model but also reduce the approximation error in the unlearning stage.
As suggested in \cite{gilmer2022a}, strategies like stepsize warm-up or initialization strategies for architectures can be employed to stabilize learning and potentially reduce unlearning errors by decreasing $\lambda_1$, enabling training at larger step sizes.

\begin{figure}
    \centering
    \includegraphics[width=0.4\linewidth]{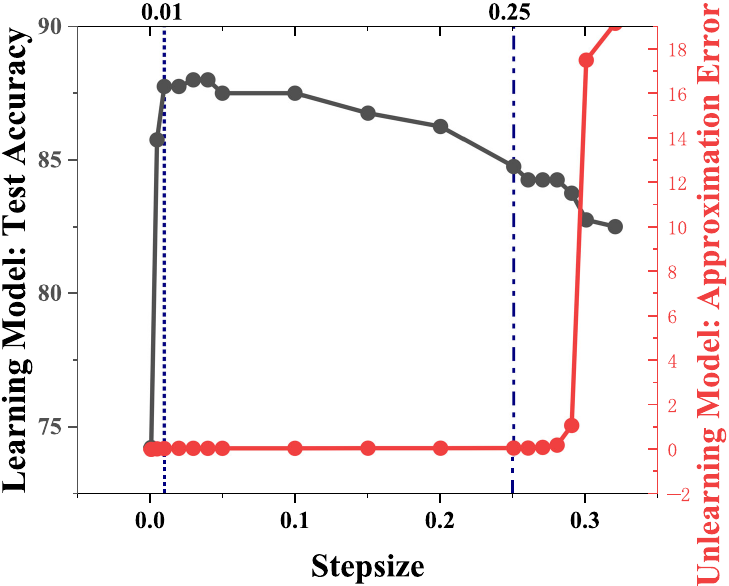}
    \caption{\textbf{The connection between learning and unlearning.} We conduct experiment on MNIST and keep the other hyperparameters fixed and adjust the learning rate from 0.001 to 0.31. Under a fixed iteration budget, we have: (1) When the step size is small (less than the threshold $\eta_1 = 0.01$), the training is often insufficient, leading to lower test accuracy. (2) Increasing the step size ensures sufficient training, but when the learning rate becomes too large, overfitting and instability lead to a decrease in test accuracy. We also observe the following phenomena in the unlearning process: (1) When the step size is below a threshold $\eta_2 = 0.25$, the growth of approximation error is extremely slow. (2) However, once the step size exceeds this threshold, the error grows exponentially.}
    \label{Fig:The connection between learning and unlearning}
\end{figure}

\textbf{Impact of small step size with more epochs.} Finally, as shown in \cref{Figure: Impact of Step Sizes}, with a small step size (0.005) during model training, we observed a lower approximation error. Even with large epoch sizes, the approximation error remains lower compared to cases with a large step size, such as 0.05. Therefore, selecting an appropriately small step size ensures smaller approximation errors, as these errors typically converge to a smaller value as the number of epochs increases. 
\begin{figure*}[t]
\center
 \subfigure[\textbf{Convex Setting}]{
    \includegraphics[width=0.265\textwidth]{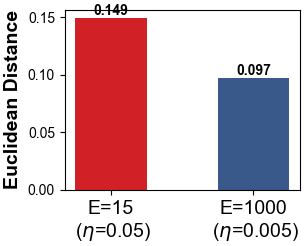}} 
     \ \ 
 \subfigure[\textbf{Non-Convex Setting}]{
    \includegraphics[width=0.265\textwidth]{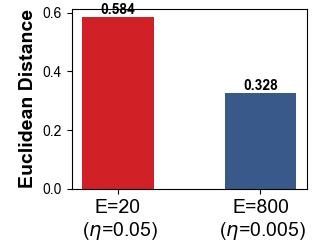}} 
\caption{\textbf{Impact of Small Step Size with More Epochs.} (a)(b) correspond to LR and CNN with 20\% of the data to be forgotten, respectively. The red bar represents the distance metric when using a larger step size; the blue bar represents the distance metric with a small step size during the larger epochs. It can be observed that even when training more epochs with smaller stepsize, the approximation error remains lower compared to training with larger stepsize. (Seed 42)}
\label{Figure: Impact of Step Sizes}
\end{figure*}
\begin{figure*}[t]
\center
 \subfigure[\textbf{Convex Setting}]{
    \includegraphics[width=0.33\textwidth]{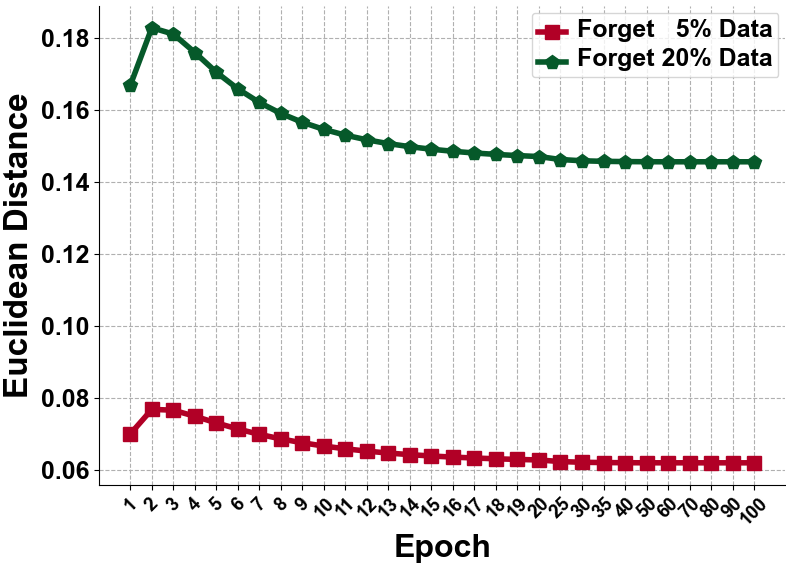}} 
    \subfigure[\textbf{Non-Convex Setting}]{
            \includegraphics[width=0.33\textwidth]{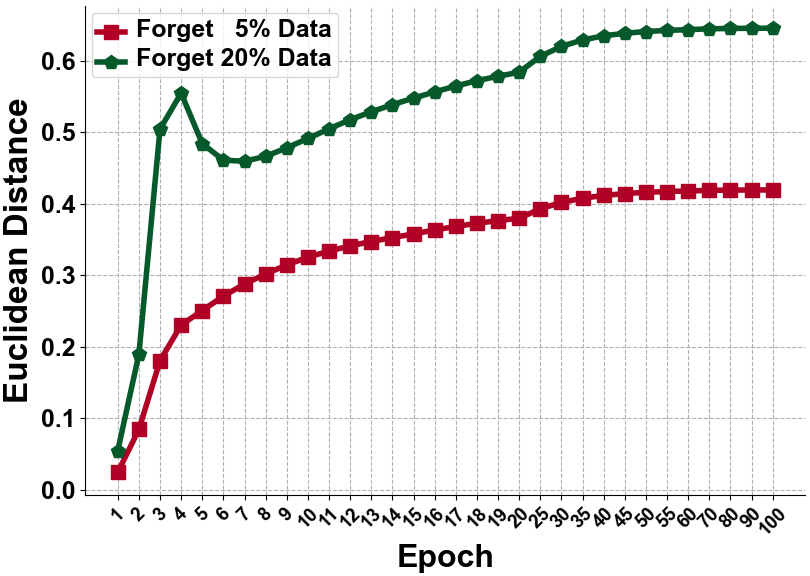}}
\caption{\textbf{Impact of Epoch.} (a) (b) illustrate that in the distance for LR and CNN by varying epoch from 1 and 100. As shown in the Figure (a) (b), the error accumulates with the increase in epochs. However, with an appropriate choice of step size, such growth is acceptable (Seed 42).}
\label{Figure: Impact of Epoch}
\end{figure*}
\subsection{Impact of Epochs}
In this subsection, we began investigating the distance metric as the number of epochs increases.

\textbf{Configurations:} We investigated the distance by varying the epoch
from 1 to 100. We conducted experiments with a high deletion rate of 20\% and a low deletion rate of 5\%. The selection of other hyperparameters remains consistent with the ones used previously.

\textbf{Impact of epochs.} As shown in \cref{Figure: Impact of Epoch}, the distance exhibits convergent behavior, ultimately stabilizing at a constant value. This phenomenon is consistent with our theoretical findings, as stated in \cref{Theorem 2: approximation error}, which demonstrates that the approximation error decreases with an increasing number of epochs for convex models. The contractive mapping ensures a gradual reduction in the final error. In the case of non-convex models, the error tends to increase with the number of epochs, ultimately stabilizing, a phenomenon that can be explained by the degeneracy of the Hessian in over-parameterized models (\cite{DBLP:conf/iclr/SagunEGDB18}), as we will discuss in further detail later.

\subsection{Impact of Stepsize Decay}
These experiments aim to study the impacts of stepsize decay. We consider two scenarios:
one with stepsize decay ($q=0.995$) and the other without stepsize decay ($q=1$).

\textbf{Impact of decay rates of stepsize:} We can observe from \cref{Figure: Impact of Decay} that the Euclidean distance for the stepsize decay strategy is generally smaller
compared to that without the stepsize decay. 
This aligns with our earlier analysis of step size in \cref{Subsection: Impact of Step Sizes}, i.e., a smaller step size leads to a smaller approximation error, and choosing an appropriate stepsize decay rate ensures a continuous reduction in step size, preventing the unbounded growth of the approximation error. 

Additionally, as in \cref{Figure: Impact of Decay} (b), It can be observed that the distance metric tends to stabilize even without stepsize decay. This result can be explained by the conclusions in \cite{DBLP:conf/iclr/SagunEGDB18,sagun2016eigenvalues}. Specifically, \cite{DBLP:conf/iclr/SagunEGDB18,sagun2016eigenvalues} analyzed the spectral distribution of the Hessian eigenvalues during CNN training. It reveals that, initially, the Hessian eigenvalues are symmetrically distributed around 0. As training progresses, they converge towards 0, indicating Hessian degeneracy. Toward the end of training, only a small number of eigenvalues become negative, and a large portion of the Hessian eigenvalues approach 0 when the training approaches convergence. Recall that our previous analysis of the stepsize in \cref{Subsection: Impact of Step Sizes}, where we demonstrate that the approximation error is mainly determined by $\rho$ of $\mathbf{I} - {\eta_{e,b}} \mathbf{H}_{e,b}$, and $\rho$ is determined by $\eta_{e,b}$ and $\lambda_1$. As the number of iterations increases, the eigenvalues of the Hessian $\mathbf{H}_{e,b}$ tends to 0. This implies a decrease in $\lambda_1$, which leads to a slower increase in error. Therefore, in \cref{Figure: Impact of Decay} (b), the distance gradually stabilizes even without a decay strategy due to the decrease in $\lambda_1$ during training, and it still increases due to the existence of some larger outlier eigenvalues.

\begin{figure*}[t]
\center
 \subfigure[\textbf{Convex Setting}]{
    \includegraphics[width=0.37\textwidth]{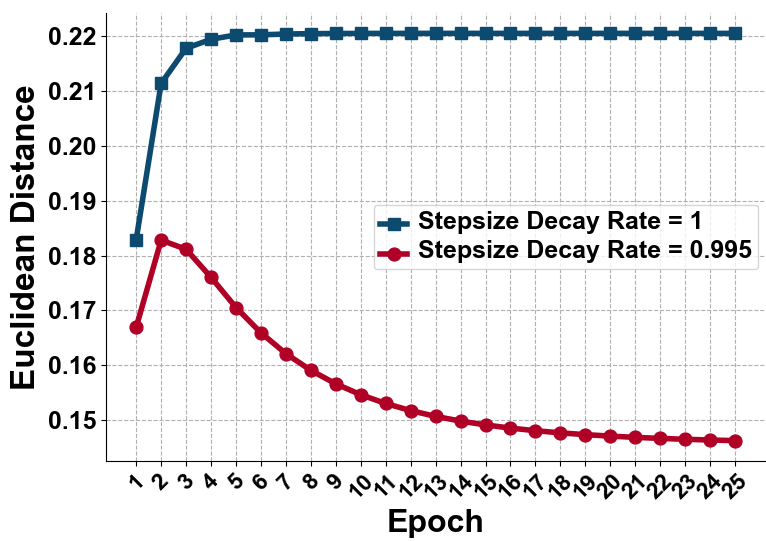}
    } \ \ \  \ \ \  \ \ \ 
    \subfigure[\textbf{Non-Convex Setting}]{
            \includegraphics[width=0.34\textwidth]{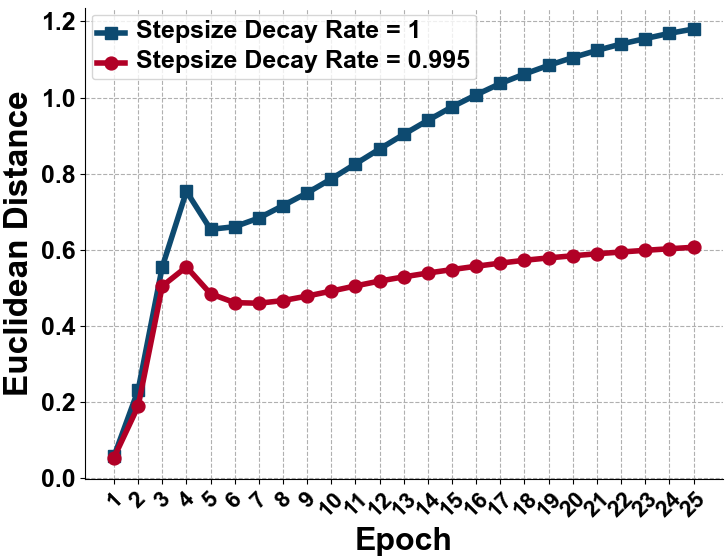}
}
\caption{
\textbf{Impacts of the stepsize decay rate} on the Euclidean distances for (a) the LR and (b) the CNN with 20\% data to be forgotten during 25 epochs. The red line represents learning with a decay rate of 0.995, while the blue line represents without decay. It can be observed that stepsize decay effectively reduces the approximation error of the unlearned model. (Seed 42) 
}
\label{Figure: Impact of Decay}
\end{figure*}
\subsection{Impact of Gradient Clipping}
Stochastic gradient algorithms are often unstable when applied to functions that lack Lipschitz continuity and/or bounded gradients. It is well-established that exploding gradients pose a significant challenge in deep learning applications (\cite{NEURIPS2018_13f9896d}). Since our proposed method relies on backtracking historical gradients, the approach may fail completely when gradients tend toward infinity. To circumvent this issue, we incorporated gradient clipping into all the aforementioned experiments. Gradient clipping is a widely adopted technique that ensures favorable performance in deep network training by restricting gradient norms from becoming excessively large (\cite{mai2021stability,chen2020understanding,DBLP:conf/iclr/ZhangHSJ20}). However, this operation indicates that the gradients in the Taylor expansion of \cref{2-1} are not the true gradients but rather the clipped gradients, which introduces an approximation error. Our subsequent experiments will demonstrate that this error can be rendered negligible. It is worth noting that our method does not inherently rely on gradient clipping but instead imposes requirements on the stability of the learning process.

\textbf{Configurations:} We investigate the impact of gradient clipping with thresholds set at \{0.3, 0.5, 0.8, 1, 2, 3, 4, 5, 6, 7, 8, 9, 10, 15, 20, 25, 30, 35\}. We remove 20\% of the data while keeping other hyperparameter settings consistent with those described in the main text.
\begin{figure*}[t]
\center
 \subfigure{
    \includegraphics[width=0.39\textwidth]{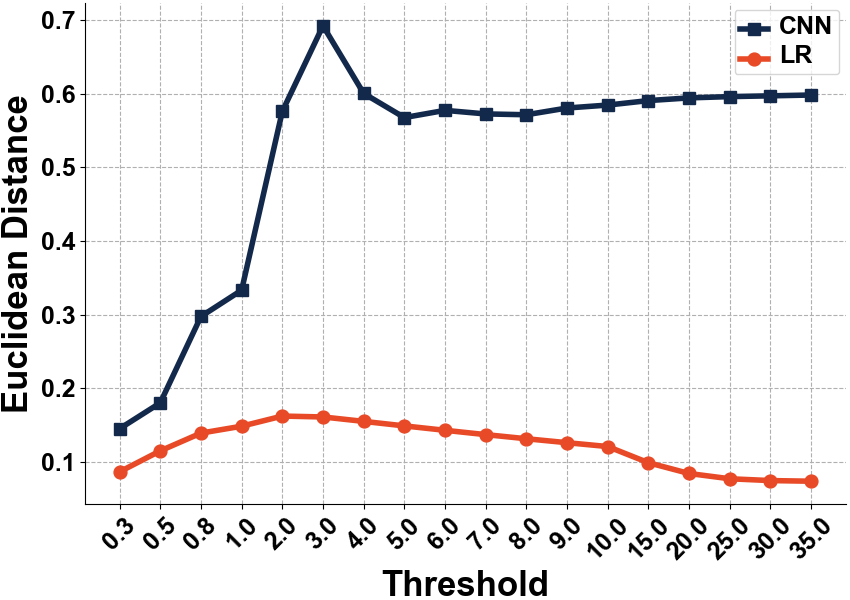}
}
\caption{
\textbf{Impacts of the gradient clipping} on the distance metric for the LR (orange) and the CNN (blue) with 20\% data to be forgotten. (Seed 42) 
}
\label{Figure: Impact of gradient clipping}
\end{figure*}

\textbf{Impact of gradient clipping.} 
As evidenced in \cref{Figure: Impact of gradient clipping}, the implementation of small clipping thresholds results in a scaling down of the gradient proportional to its norm, producing an effect similar to using a smaller step size. Consequently, the application of lower clipping thresholds maintains the error at a relatively minimal level. However, as the gradient clipping threshold surpasses a certain range, the error introduced by the clipping process becomes the predominant factor, culminating in an escalation of the overall error. Nonetheless, most gradients no longer require clipping during the optimization process. Therefore, the induced error diminishes until the threshold reaches a sufficiently elevated value, at which point no supplementary error is introduced. Once this threshold is attained, gradient clipping ceases to exert an influence on the performance of our proposed unlearning algorithm. 

\end{document}